\DeclarePairedDelimiter{\ceil}{\lceil}{\rceil}
\newtheorem{theorem}{Theorem}
\newtheorem{defn}{Definition}
\newtheorem{prop}{Proposition}
\newtheorem{lma}{Lemma}
\newtheorem{cor}{Corollary}
\DeclareMathOperator*{\argmin}{arg\,min}
\DeclareMathOperator*{\arginf}{arg\,inf}
\begin{document}

\title{Towards a Unified Analysis of Random Fourier Features}

\author{\name Zhu Li \email zhu.li@stats.ox.ac.uk \\
       \addr Department of Statistics\\
       University of Oxford\\
       Oxford, OX1 3LB, UK
       \AND
       \name Jean-Francois Ton \email jean-francois.ton@spc.ox.ac.uk \\
       \addr Department of Statistics\\
       University of Oxford\\
       Oxford, OX1 3LB, UK
       \AND
       \name Dino Oglic \email dino.oglic@kcl.ac.uk \\
       \addr Department of Informatics\\
       King's College London\\
       London, WC2R 2LS, UK
       \AND 
       \name Dino Sejdinovic \email dino.sejdinovic@stats.ox.ac.uk \\
       \addr Department of Statistics\\
       University of Oxford\\
       Oxford, OX1 3LB, UK}

\editor{}

\maketitle

\begin{abstract}
Random Fourier features is a widely used, simple, and effective technique for scaling up kernel methods. The existing theoretical analysis of the approach, however, remains focused on specific learning tasks and typically gives pessimistic bounds which are at odds with the empirical results. We tackle these problems and provide the first unified risk analysis of learning with random Fourier features using the squared error and Lipschitz continuous loss functions. In our bounds, the trade-off between the computational cost and the learning risk convergence rate is problem specific and expressed in terms of the regularization parameter and the \emph{number of effective degrees of freedom}. We study both the standard random Fourier features method for which we improve the existing bounds on the number of features required to guarantee the corresponding minimax risk convergence rate of kernel ridge regression, as well as a data-dependent modification which samples features proportional to \emph{ridge leverage scores} and further reduces the required number of features. As ridge leverage scores are expensive to compute, we devise a simple approximation scheme which provably reduces the computational cost without loss of statistical efficiency.
\end{abstract}

\begin{keywords}
  Kernel methods, random Fourier features, stationary kernels, kernel ridge regression, Lipschitz continuous loss, support vector machines, logistic regression, ridge leverage scores.
\end{keywords}

\section{Introduction}

Kernel methods are one of the pillars of machine learning \citep{Scholkopf01,scholkopf2004kernel}, as they give us a flexible framework to model complex functional relationships in a principled way and also come with well-established statistical properties and theoretical guarantees \citep{caponnetto2007optimal,steinwart2008support}. The key ingredient, known as \emph{kernel trick}, allows implicit computation of an inner product between rich feature representations of data through the kernel evaluation $k(x,x') = \langle \varphi(x),\varphi(x') \rangle_{\mathcal H}$, while the actual feature mapping $\varphi:\mathcal X\to \mathcal H$ between a data domain $\mathcal X$ and some high and often infinite dimensional Hilbert space $\mathcal H$ is never computed. However, such convenience comes at a price: due to operating on all pairs of observations, kernel methods inherently require computation and storage which is at least quadratic in the number of observations, and hence often prohibitive for large datasets. In particular, the kernel matrix has to be computed, stored, and often inverted. As a result, a flurry of research into scalable kernel methods and the analysis of their performance emerged \citep{rahimi2007random,mahoney2009cur,bach2013sharp,alaoui2015fast,rudi2015less,rudi2017generalization,rudi2017falkon,zhang2015divide}. Among the most popular frameworks for fast approximations to kernel methods are random Fourier features (RFF) due to \citet{rahimi2007random}.
The idea of random Fourier features is to construct an explicit feature map which is of a dimension much lower than the number of observations, but with the resulting inner product which approximates the desired kernel function $k(x,y)$. In particular, random Fourier features rely on Bochner's theorem \citep{Bochner32,rudin2017fourier} which tells us that any bounded, continuous and shift-invariant kernel is the Fourier transform of a bounded positive measure, called spectral measure. The feature map is then constructed using samples drawn from the spectral measure. Essentially, any kernel method can then be adjusted to operate on these explicit feature maps (i.e., primal representations), greatly reducing the computational and storage costs, while in practice mimicking performance of the original method.

Despite their empirical success, the theoretical understanding of statistical properties of random Fourier features is incomplete, and the question of how many features are needed, in order to obtain a method with performance provably comparable to the original one, remains without a definitive answer. Currently, there are two main lines of research addressing this question. The first line considers the approximation error of the kernel matrix itself~\citep[e.g., see][and references therein]{rahimi2007random,sriperumbudur2015optimal,sutherland2015error} and bases performance guarantees on the accuracy of this approximation. However, all of these works require $\Omega(n)$ features ($n$ being the number of observations), which translates to no computational savings at all and is at odds with empirical findings. Realizing that the approximation of kernel matrices is just a means to an end, the second line of research aims at directly studying the risk and generalization properties of random Fourier features in various supervised learning scenarios. Arguably, first such result is already in \citet{rahimi2009weighted}, where supervised learning with Lipschitz continuous loss functions is studied. However, the bounds therein still require a pessimistic $\Omega(n)$ number of features and cannot demonstrate the efficiency of random Fourier features theoretically. In \citet{bach2017equivalence}, the generalization properties are studied from a function approximation perspective, showing for the first time that fewer features could preserve the statistical properties of the original method, but in the case where a certain data-dependent sampling distribution is used instead of the spectral measure. These results also do not apply to kernel ridge regression and the mentioned sampling distribution is typically itself intractable.~\citet{avron2017random} study the empirical risk of kernel ridge regression and show that it is possible to use $o(n)$ features and have the empirical risk of the linear ridge regression estimator based on random Fourier features close to the empirical risk of the original kernel estimator, also relying on a modification to the sampling distribution. However, this result does not provide any learning risk convergence rates, and a tractable method to sample from a modified distribution is proposed for the Gaussian kernel only. A highly refined analysis of kernel ridge regression is given by \citet{rudi2017generalization}, where it is shown that $\Omega(\sqrt{n}\log n)$ features suffices for an optimal $\mathcal{O}(1/\sqrt{n})$ learning rate in a minimax sense \citep{caponnetto2007optimal}. Moreover, the number of features can be reduced even further if a data-dependent sampling distribution is employed. While these are groundbreaking results, guaranteeing computational savings without any loss of statistical efficiency, they require some technical assumptions that are difficult to verify. Moreover, to what extent the bounds can be improved by utilizing data-dependent distributions still remains unclear. Finally, it does not seem straightforward to generalize the approach of \citet{rudi2017generalization} to kernel support vector machines (SVM) and/or kernel logistic regression (KLR). Recently, \citet{sun2018but} have provided novel bounds for random Fourier features in the SVM setting, assuming the Massart's low noise condition and that the target hypothesis lies in the corresponding reproducing kernel Hilbert space. The bounds, however, require the sample complexity and the number of features to be exponential in the dimension of the instance space and this can be problematic for high dimensional instance spaces. The theoretical results are also restricted to the hinge loss (without means to generalize to other loss functions) and require optimized features.

In this paper, we address the gaps mentioned above by making the following contributions:
\begin{itemize}
\item We devise a simple framework for the unified analysis of generalization properties of random Fourier features, which applies to kernel ridge regression, as well as to kernel support vector machines and logistic regression. 
\item For the plain random Fourier features sampling scheme, we provide, to the best of our knowledge, the sharpest results on the number of features required. In particular, we show that already with $\Omega(\sqrt{n}\log d_{\mathbf{K}}^{\lambda})$ features, we obtain the same learning rate with kernel ridge regression in the minimax sense \citep{caponnetto2007optimal}, where $d_{\mathbf{K}}^{\lambda}$ corresponds to the notion of \emph{the number of effective degrees of freedom}~\citep{bach2013sharp} with $d_{\mathbf{K}}^{\lambda} \ll n$ and $\lambda \coloneqq \lambda(n)$ is the regularization parameter. In addition, $\Omega(1/\lambda)$ features is sufficient to ensure $\mathcal{O}(\sqrt{\lambda})$ learning risk rate in kernel support vector machines and kernel logistic regression.

\item In the case of a modified data-dependent sampling distribution, the so called \emph{empirical ridge leverage score distribution}, we demonstrate that $\Omega(d_{\mathbf{K}}^{\lambda})$ features suffice for the learning risk to converge at $\mathcal{O}(\lambda)$ rate in kernel ridge regression. Moreover, the same number of feature is sufficient to guarantee $\mathcal{O}(\sqrt{\lambda})$ risk convergece rate in kernel support vector machines and kernel logistic regression.

\item In our refined analysis of kernel ridge regression, we show that the excess risk convergence rate of the estimator based on random Fourier features can (depending on the decay rate of the spectrum of the kernel function) be upper bounded by $\mathcal{O}(\nicefrac{\log n}{n})$ or even $\mathcal{O}(\nicefrac{1}{n})$, which implies much faster convergence than the standard $\mathcal{O}(\nicefrac{1}{\sqrt{n}})$ rate featuring in most of previous bounds.

\item Similarly, under the low noise assumption, our refined analysis for Lipschitz continuous loss demonstrates that it is possible to achieve $\mathcal{O}(1/n)$ excess risk convergence rate. The required number of features can be $\Omega(\log n\log \log n)$ using the empirical leverage score distribution, or even constant number of features in some benign cases. To the best of our knowledge, this is the first result offering non-trivial computational savings for approximations in problems with Lipschitz loss functions.

\item Finally, as the empirical ridge leverage scores distribution is typically costly to compute, we give a fast algorithm to generate samples from the approximated empirical leverage distribution. Utilizing these samples one can significantly reduce the computation time during the in-sample prediction ($\mathcal{O}(n\log n \log\log n)$) and testing stages ($\mathcal{O}(\log n \log\log n)$). We also include a proof that gives a trade-off between the computational cost and the learning risk of the algorithm, showing that the statistical efficiency can be preserved while provably reducing the required computational cost.
\end{itemize}

\section{Background}\label{sec:background}
In this section, we provide some notation and preliminary results that will be used throughout the paper. Henceforth, we denote the Euclidean norm of a vector $\mathbf{a} \in \mathbb{R}^{n}$ with $\|\mathbf{a}\|_2$ and the operator norm of a matrix $A \in \mathbb{R}^{n_1 \times n_2}$ with $\|A\|_2$. Let $\mathcal{H}$ be a Hilbert space with $\langle \cdot,\cdot\rangle_{\mathcal{H}}$ as its inner product and $\|\cdot\|_{\mathcal{H}}$ as its norm. We use $\text{Tr}(\cdot)$ to denote the trace of an operator or a matrix. Given a measure $d\rho$, we use $L_2(d\rho)$ to denote the space of square-integrable functions with respect to $d\rho$.

\subsection{Supervised Learning with Kernels} \label{sec:supervised_learning}
We first briefly review the standard problem setting for supervised learning with kernel methods. Let $\mathcal{X}$ be an instance space, $\mathcal{Y}$ a label space, and $P(x,y)=P_xP(y \mid x)$ a joint probability density function on $\mathcal{X}\times \mathcal{Y}$ defining the relationship between an instance $x \in \mathcal{X}$ and a label $y \in \mathcal{Y}$. A training sample is a set of examples $\{(x_i,y_i)\}_{i=1}^n$ sampled independently from $P(x,y)$. The value $P_x$ is called the marginal distribution of an instance $x \in \mathcal{X}$. The goal of a supervised learning task defined with a kernel function $k$ (and the associated reproducing kernel Hilbert space $\mathcal{H}$) is to find a hypothesis 
$f \colon \mathcal{X} \rightarrow \mathcal{Y}$ such that $f \in \mathcal{H}$ and $f(x)$ is a good estimate of the label $y \in \mathcal{Y}$ corresponding to a previously unseen instance $x \in \mathcal{X}$. While in regression tasks $\mathcal{Y} \subset \mathbb{R}$, in classification tasks it is typically the case that $\mathcal{Y}=\{-1, 1\}$. As a result of the representer theorem, an empirical risk minimization problem in this setting can be expressed as~\citep{Scholkopf01}
\begin{IEEEeqnarray}{rCl}
	\hat{f}^{\lambda} &&:= \argmin_{f \in\mathcal{H}}  \frac{1}{n}\sum_{i=1}^n l(y_i,f(x_i))+ \lambda \|f\|_{\mathcal{H}}^2 \nonumber \\ 
	&&=\argmin_{\alpha} \ \frac{1}{n}\sum_{i=1}^n l(y_i,(\mathbf{K}\alpha)_i) + \lambda \alpha^T \mathbf{K}\alpha, \label{main:krl_opm}
\end{IEEEeqnarray}
where $f=\sum_{i=1}^n \alpha_ik(x_i,\cdot)$ with $\alpha \in \mathbb{R}^n$, $l:\mathcal{Y}\times\mathcal{Y}\rightarrow \mathbb{R}_{+}$ is a loss function, $\mathbf{K}$ is the kernel matrix, and $\lambda$ is the regularization parameter. The hypothesis $\hat{f}^{\lambda}$ is an empirical estimator and its ability to capture the relationship between instances and labels given by $P$ is measured by the learning risk~\citep{caponnetto2007optimal}
\[\mathbb{E}_{P}(l_{\hat{f}^{\lambda}}) = \int_{\mathcal{X}\times \mathcal{Y}} l(y,\hat{f}^{\lambda}(x))dP(x,y).\]
where we use $l_{f}$ to denote $l(y,f(x))$. When the context is clear, we will omit $P$ from the expectation, i.e., writing $\mathbb{E}_{P}(l_{\hat{f}^{\lambda}})$ as $\mathbb{E}(l_{\hat{f}^{\lambda}})$.
The empirical distribution $P_n(x,y)$ is given by a sample of $n$ examples drawn independently from $P(x,y)$. The empirical risk is used to estimate the learning risk $\mathbb{E}(l_{\hat{f}^{\lambda}})$ and it is given by \[ \mathbb{E}_n(l_{\hat{f}^{\lambda}}) = \frac{1}{n} \sum_{i=1}^n l(y_i,\hat{f}^{\lambda}(x_i)).\] Similar to \citet{rudi2017generalization} and \citet{caponnetto2007optimal}, we will assume \footnote{The existence of $f_{\cal H}$ depends on the complexity of $\cal H$ which is related to the data distribution $P(y|x)$. For more details, please see \citet{caponnetto2007optimal} and \citet{rudi2017generalization}.} the existence of $f_\mathcal{H} \in \mathcal{H}$ such that $f_\mathcal{H} = \arginf_{f \in \mathcal{H}} \mathbb{E}(l_f)$. The assumption implies that there exists some ball of radius $R>0$ containing $f_\mathcal{H}$ in its interior. Our theoretical results do not require prior knowledge of this constant and hold uniformly over all finite radii. 
Furthermore, for all the estimators returned by the empirical risk minimization, we assume that they have bounded reproducing kernel Hilbert space norms. As a result, to simplify our derivations and constant terms in our bounds, we have (without loss of generality) assumed that all the estimators appearing in the rest of the manuscript are within the \textbf{unit ball} of our reproducing kernel Hilbert space.

Note that $\mathbb{E}(l_{f_{\mathcal{H}}})$ is the lowest learning risk one can achieve in the reproducing kernel Hilbert space $\mathcal{H}$. Hence, the theoretical studies of the estimator $\hat{f}^{\lambda}$ often concern how fast its learning risk $\mathbb{E}(l_{\hat{f}^{\lambda}})$ converges to $\mathbb{E}(l_{f_{\mathcal{H}}})$, in other words, how fast the excess risk $\mathbb{E}(l_{\hat{f}^{\lambda}}) - \mathbb{E}(l_{f_{\mathcal{H}}})$ converges to zero. In the remainder of the manuscript, we will refer the rate at which the excess risk converges to zero as the learning rate.

\subsection{Random Fourier Features}
Random Fourier features is a widely used, simple, and effective technique for scaling up kernel methods. The underlying principle of the approach is a consequence of Bochner's theorem~\citep{Bochner32}, which states that any bounded, continuous and shift-invariant kernel is the Fourier transform of
a bounded positive measure. This measure can be transformed/normalized into a probability measure which is typically called the spectral measure of the kernel. Assuming the spectral measure $d\tau$ has a density function $p(\cdot)$, the corresponding shift-invariant kernel can be written as  
\begin{align}
\label{main:krl_boc_dec}
\begin{aligned}
& k(x,y) =  \int_{\mathcal{V}}e^{-2\pi iv^{T}(x-y)}d\tau(v)= \int_{\mathcal{V}} \big(e^{-2\pi i v^{T}x}\big)\big(e^{-2\pi i v^{T}y}\big)^{*}p(v)dv  ,&
\end{aligned}
\end{align}
where $c^{*}$ denotes the complex conjugate of $c \in \mathbb{C}$. Typically, the kernel is real valued and we can ignore the imaginary part in this equation~\citep[e.g., see][]{rahimi2007random}. The principle can be further generalized by considering the class of kernel functions which can be decomposed as
\begin{IEEEeqnarray}{rCl}
k(x,y) = \int_{\mathcal{V}}z(v,x)z(v,y)p(v)dv,\label{main:krl_dec}
\end{IEEEeqnarray}
where $z \colon \mathcal{V}\times \mathcal{X}\rightarrow \mathbb{R}$ is a continuous and bounded function with respect to $v$ and $x$. 
The main idea behind random Fourier features is to approximate the kernel function by its Monte-Carlo estimate
\begin{IEEEeqnarray}{rCl}
\tilde{k}(x,y) = \frac{1}{s}\sum_{i=1}^sz(v_i,x)z(v_i,y),\label{main:krl_appx}
\end{IEEEeqnarray}
with the reproducing kernel Hilbert space $\tilde{\mathcal{H}}$ (note that in general $\tilde{\mathcal{H}} \nsubseteq \mathcal{H}$) and $\{v_i\}_{i=1}^s$ sampled independently from the spectral measure. In~\citet[Appendix A]{bach2017breaking}, it has been established that a function $f \in \mathcal{H}$ can be expressed as:~\footnote{It is not necessarily true that for any $g \in L_2(d\tau)$, there exists a corresponding $f \in \mathcal{H}$.}
\begin{IEEEeqnarray}{rCl}
f(x) = \int_{\mathcal{V}}g(v)z(v,x)p(v)dv \qquad (\forall x \in \mathcal{X}) \label{main:fun_appx}
\end{IEEEeqnarray}
where $g \in L_2(d\tau)$ is a real-valued function such that $\|g\|_{L_2(d\tau)}^2 < \infty$ and $\|f\|_{\mathcal{H}}= \min_{g} \|g\|_{L_2(d\tau)}$, where the minimum is taken over all possible decompositions of $f$. Thus, one can take an independent sample $\{v_i\}_{i=1}^s \sim p(v)$ (we refer to this sampling scheme as \emph{plain RFF}) and approximate a function $f \in \mathcal{H}$ at a point $x_j \in \mathcal{X}$ by 
\begin{align*}
   \tilde{f}(x_j) = \sum_{i=1}^s\alpha_i z(v_i,x_j) \coloneqq \mathbf{z}_{x_j}(\mathbf{v})^T\alpha \quad \text{with} \quad \alpha \in \mathbb{R}^s .
\end{align*}
In standard estimation problems, it is typically the case that for a given set of instances $\{x_i\}_{i=1}^n$ one approximates $\mathbf{f}_x = [f(x_1),\cdots,f(x_n)]^T$ by 
\begin{align*}
  \tilde{\mathbf{f}}_x =[\mathbf{z}_{x_1}(\mathbf{v})^T\alpha,\cdots,\mathbf{z}_{x_n}(\mathbf{v})^T\alpha]^T \coloneqq \mathbf{Z}\alpha  ,
\end{align*}
where $\mathbf{Z}\in \mathbb{R}^{n\times s}$ with $\mathbf{z}_{x_j}(\mathbf{v})^T$ as its $j$th row.

As the latter approximation is simply a Monte Carlo estimate, one could also select an importance weighted probability density function $q(\cdot)$ and sample features $\{v_i\}_{i=1}^s$ from $q$ (we refer to this sampling scheme as \emph{weighted RFF}). 
Then, the function value $f(x_j)$ can be approximated by 
\begin{align*}
  \tilde{f}_{q}(x_j) = \sum_{i=1}^s\beta_i z_q(v_i,x_j)\coloneqq \mathbf{z}_{q,x_j}(\mathbf{v})^T\beta  ,
\end{align*}
with $z_q(v_i,x_j)= \sqrt{p(v_i)/q(v_i)}z(v_i,x_j)$ and $\mathbf{z}_{q,x_j}(\mathbf{v}) = [z_q(v_1,x_j),\cdots,z_q(v_s,x_j)]^T$. Hence, a Monte-Carlo estimate of $\mathbf{f}_x$ can be written in the matrix form as $\tilde{\mathbf{f}}_{q,x} = \mathbf{Z}_{q}\beta$, where $\mathbf{Z}_q\in \mathbb{R}^{n\times s}$ with $\mathbf{z}_{q,x_j}(\mathbf{v})^T$ as its $j$th row.

Let $\tilde{\mathbf{K}}$ and $\tilde{\mathbf{K}}_q$ be Gram-matrices with entries $\tilde{\mathbf{K}}_{ij} = \tilde{k}(x_i,x_j)$ and $\tilde{\mathbf{K}}_{q,ij} = \tilde{k}_q(x_i,x_j)$ such that \[\tilde{\mathbf{K}}= \frac{1}{s}\ \mathbf{Z}\mathbf{Z}^T \qquad \wedge \qquad \tilde{\mathbf{K}}_q = \frac{1}{s}\ \mathbf{Z}_{q}\mathbf{Z}_{q}^T.\] If we now denote the $j$th column of $\mathbf{Z}$ by $\mathbf{z}_{v_j}(\mathbf{x})$ and the $j$th column of $\mathbf{Z}_q$ by $\mathbf{z}_{q,v_j}(\mathbf{x})$, then the following equalities can be derived easily from Eq.~(\ref{main:krl_appx}):
\begin{align*}
\begin{aligned}
 \mathbb{E}_{v\sim p}(\tilde{\mathbf{K}}) = \mathbf{K} =  \mathbb{E}_{v\sim q}(\tilde{\mathbf{K}}_q) \quad \wedge \quad \mathbb{E}_{v \sim p}\big[\mathbf{z}_{v}(\mathbf{x})\mathbf{z}_{v}(\mathbf{x})^T\big]= \mathbf{K} =  \mathbb{E}_{v\sim q}\big[\mathbf{z}_{q,v}(\mathbf{x})\mathbf{z}_{q,v}(\mathbf{x})^T\big]. 
\end{aligned}
\end{align*}

Sampling features from the importance weighted probability density function $q(\cdot)$ has led to much interest in literature \citep{bach2017equivalence,alaoui2015fast,avron2017random,rudi2017generalization} as it often leads to huge computation savings. In particular, an importance weighted density function based on the notion of \emph{ridge leverage scores} is introduced in \citet{alaoui2015fast} for landmark selection in the Nystr{\"o}m method~\citep{Nystrom30,Smola00,Williams01}. For landmarks selected using that sampling strategy,~\citet{alaoui2015fast} establish a sharp convergence rate of the low-rank estimator based on the Nystr\"om method. This result motivates the pursuit of a similar notion for random Fourier features. Indeed,~\citet{bach2017equivalence} propose a leverage score function based on an integral operator defined using the kernel function and the marginal distribution of a data-generating process. Building on this work,~\citet{avron2017random} propose the ridge leverage function with respect to a fixed input dataset, i.e., 
\begin{IEEEeqnarray}{rCl}
l_{\lambda}(v) = p(v)\mathbf{z}_{v}(\mathbf{x})^{T}(\mathbf{K}+n\lambda\mathbf{I})^{-1}\mathbf{z}_{v}(\mathbf{x}) . \label{main:lev_fun}
\end{IEEEeqnarray}
From our assumption on the decomposition of a kernel function, it follows that there exists a constant $z_0$ such that $| z(v,x) | \leq z_0$ (for all $v$ and $x$) and $\mathbf{z}_{v}(\mathbf{x})^{T}\mathbf{z}_{v}(\mathbf{x}) \leq nz_0^2$. We can now deduce the following inequality using a result from~\citet[Proposition 4]{avron2017random}:
\begin{align*}
\begin{aligned}
l_{\lambda}(v)\leq p(v)\frac{z_0^2}{\lambda} .
\end{aligned}
\end{align*}
The function $l_{\lambda}(v)$ is important in the sense that it is related to the effective number of parameters in the following sense:
\begin{align*}
\begin{aligned}
\int_{\mathcal{V}}l_{\lambda}(v)dv = \text{Tr}\big[\mathbf{K}(\mathbf{K}+n\lambda \mathbf{I})^{-1} \big]:= d_{\mathbf{K}}^{\lambda},
\end{aligned}
\end{align*}
where $d_{\mathbf{K}}^{\lambda}$ is known for implicitly determining the number of independent parameters in a learning problem and, thus, it is called the \emph{effective dimension of the problem}~\citep{caponnetto2007optimal} or the \emph{number of effective degrees of freedom}~\citep{bach2013sharp,hastie2017generalized}.

We can now observe that $q^*(v) = l_{\lambda}(v)/d_{\mathbf{K}}^{\lambda}$ is a probability density function. In \citet{avron2017random}, it has been established that sampling according to $q^*(v)$ requires fewer Fourier features compared to the standard spectral measure sampling. We refer to $q^*(v)$  as the \emph{empirical ridge leverage score distribution} and, in the remainder of the manuscript, refer to this sampling strategy as \emph{leverage weighted RFF}.

\subsection{Rademacher Complexity}
To characterize the performance of a learning algorithm, we need to take into account the complexity of its hypothesis space. Below, we first introduce a particular measure of the complexity over function spaces known as \textit{Rademacher complexity} \citep{bartlett2002rademacher}. Then, we give two lemmas that demonstrate how Rademacher complexity of a reproducing kernel Hilbert space can be linked to the corresponding kernel and how the excess risk can be computed via Rademacher complexity. 

\begin{defn}\label{def:rade}
Suppose that $\{x_1\cdots,x_n\}$ are independent samples selected according to $P_x$. Let $\mathcal{H}$ be a class of functions mapping $\mathcal{X}$ to $\mathbb{R}$. Then, the random variable known as the \textit{empirical Rademacher complexity} is defined as 
\begin{IEEEeqnarray}{rCl}
\hat{R}_n(\mathcal{H}) = \mathbb{E}_{\sigma}\Bigg[\sup_{f\in\mathcal{H}}\Bigg|\frac{2}{n}\sum_{i=1}^n\sigma_if(x_i)\Bigg|\mid x_1,\cdots,x_n   \Bigg]\nonumber
\end{IEEEeqnarray}
where $\sigma_1,\cdots,\sigma_n$ are independent uniform $\{\pm 1\}$-valued random variables. The corresponding \textit{Rademacher complexity} is then defined as the expectation of the empirical Rademacher complexity $$R_n(\mathcal{H}) = \mathbb{E}\Big[\hat{R}_n(\mathcal{H})\Big].$$  
\end{defn} 

The following lemma provides the Rademacher complexity for a certain RKHS with kernel $k$.

\begin{lma}\citep{bartlett2002rademacher}\label{apn:rad_rkhs}
Let $\mathcal{H}_0$ be the unit ball of the RKHS $\mathcal{H}$ associated with kernel $k$, centered at the origin. Then, we have that $R_n(\mathcal{H}_0) \leq (1/n)\mathbb{E}_X\sqrt{\text{Tr}(\mathbf{K})}$, where $\mathbf{K}$ is the Gram matrix for kernel $k$ over an independent and identically distributed sample $X=\{x_1,\cdots,x_n\}$.
\end{lma}

Lemma \ref{apn:risk_rad} states that the expected excess risk convergence rate of a particular estimator in $\mathcal{H}$ not only depends on the number of data points, but also on the complexity of $\mathcal{H}$ and how it interacts with the loss function.

\begin{lma}\citep[Theorem 8]{bartlett2002rademacher}\label{apn:risk_rad}
Let $\{x_i,y_i\}_{i=1}^n$ be i.i.d samples from $P$ and let $\mathcal{H}$ be the space of functions mapping from $\mathcal{X}$ to $\mathbb{R}$. Denote a loss function with $l:\mathcal{Y}\times \mathbb{R} \rightarrow [0,1]$ and recall the learning risk function for all $f\in \mathcal{H}$ is $\mathbb{E}(l_f)$, together with the corresponding empirical risk function $\mathbb{E}_n(l_f) = (1/n)\sum_{i=1}^nl(y_i,f(x_i))$. Then, for a sample of size $n$, for all $f\in \mathcal{H}$ and $\delta \in (0,1)$, with probability $1-\delta$, we have that
\begin{IEEEeqnarray}{rCl}
\mathbb{E}(l_f)\leq \mathbb{E}_n(l_f) + R_n(l \circ \mathcal{H}) +\sqrt{\frac{8\log(2/\delta)}{n}}\nonumber
\end{IEEEeqnarray} 
where $l \circ \mathcal{H} = \{(x,y) \rightarrow l(y,f(x))-l(y,0) \mid f\in \mathcal{H}\}$.
\end{lma}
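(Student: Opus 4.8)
The plan is to follow the classical symmetrization-and-concentration route for uniform deviation bounds. Put $\mathcal{G} = \tilde l \circ \mathcal{H}$, whose members $g_f(x,y) = l(y,f(x)) - l(y,0)$ take values in $[-1,1]$ since $l$ is $[0,1]$-valued, and write $c(x,y) = l(y,0)$. For a sample $S = \{(x_i,y_i)\}_{i=1}^n$, and writing $\hat{\mathbb{E}}_S$ for the empirical average, decompose the generalization gap as
\[
\mathcal{E}(f) - \hat{\mathcal{E}}(f) = \Big(\mathbb{E}\, g_f - \tfrac1n\sum_{i=1}^n g_f(x_i,y_i)\Big) + \Big(\mathbb{E}\, c - \tfrac1n\sum_{i=1}^n c(x_i,y_i)\Big),
\]
so that, taking the supremum over $f \in \mathcal{H}$, it suffices to control $\Phi(S) := \sup_{g\in\mathcal{G}}\big(\mathbb{E} g - \hat{\mathbb{E}}_S g\big)$ and the single scalar deviation $\mathbb{E} c - \hat{\mathbb{E}}_S c$ separately, each at confidence $1-\delta/2$, and combine by a union bound.

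For the scalar term, $c(x_i,y_i) = l(y_i,0) \in [0,1]$ are i.i.d., so Hoeffding's inequality gives $\mathbb{E} c - \hat{\mathbb{E}}_S c \le \sqrt{\log(2/\delta)/(2n)}$ with probability $1-\delta/2$. For $\Phi$, I would first note that replacing one sample point changes $\hat{\mathbb{E}}_S g$ by at most $2/n$ uniformly in $g$ (the range of each $g \in \mathcal{G}$ is $2$), so $\Phi$ has the bounded-differences property with constants $2/n$ and McDiarmid's inequality yields $\Phi(S) \le \mathbb{E}_S\Phi(S) + \sqrt{2\log(2/\delta)/n}$ with probability $1-\delta/2$. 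It then remains to bound $\mathbb{E}_S\Phi(S)$ by $R_n(\mathcal{G})$: introduce an independent ghost sample $S' = \{(x_i',y_i')\}$, use $\mathbb{E} g = \mathbb{E}_{S'}\hat{\mathbb{E}}_{S'} g$ together with Jensen's inequality to get $\mathbb{E}_S\Phi(S) \le \mathbb{E}_{S,S'}\sup_{g}\tfrac1n\sum_i (g(x_i',y_i') - g(x_i,y_i))$, then exploit that each pair $((x_i,y_i),(x_i',y_i'))$ is exchangeable to insert Rademacher signs $\sigma_i$, and finally split the supremum over the two halves to obtain $\mathbb{E}_S\Phi(S) \le 2\,\mathbb{E}_{S,\sigma}\sup_{g}\tfrac1n\sum_i \sigma_i g(x_i,y_i) \le \mathbb{E}_{S,\sigma}\sup_{g}\big|\tfrac2n\sum_i \sigma_i g(x_i,y_i)\big| = R_n(\mathcal{G})$, which matches the paper's normalization of the Rademacher complexity.

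Combining the two high-probability events, with probability $1-\delta$ we obtain, simultaneously for all $f \in \mathcal{H}$, that $\mathcal{E}(f) \le \hat{\mathcal{E}}(f) + R_n(\tilde l \circ \mathcal{H}) + \sqrt{2\log(2/\delta)/n} + \sqrt{\log(2/\delta)/(2n)}$; since $\sqrt{2} + 1/\sqrt{2} = 3/\sqrt{2} \le 2\sqrt{2}$, the last two terms are dominated by $\sqrt{8\log(2/\delta)/n}$, giving the stated bound. The only step that is not bookkeeping is the symmetrization: the Hoeffding and McDiarmid parts are mechanical once the range/bounded-differences constants are identified, but the ghost-sample argument and the insertion of the $\sigma_i$ require the exchangeability observation, and one must check that passing from $\sup_g(\cdot)$ to $\sup_g|\cdot|$ — needed only to match the absolute value in the definition of $R_n$ — can only increase the quantity. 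The centering of the loss by $l(y,0)$ plays no essential role in this particular argument (it is simply what lets the fixed-function deviation be peeled off into the Hoeffding term), but it is what makes $\tilde l \circ \mathcal{H}$ amenable to a subsequent contraction step when $l$ is Lipschitz.
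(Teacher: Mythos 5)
Your proposal is correct: the paper does not prove this lemma itself but imports it verbatim as Theorem~8 of \citet{bartlett2002rademacher}, and your reconstruction (splitting off the fixed function $l(y,0)$, Hoeffding for that scalar term, McDiarmid with bounded-differences constant $2/n$ for the supremum over the centered class, ghost-sample symmetrization to reach $R_n(\tilde l\circ\mathcal{H})$, and the union bound with $3/\sqrt{2}\le 2\sqrt{2}$ to absorb both deviations into $\sqrt{8\log(2/\delta)/n}$) is exactly the standard argument given in that reference. Nothing is missing.
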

Note that the risk bound is given by the Rademacher complexity term $R_n(l \circ \mathcal{H})$ defined on the transformed space $l \circ \mathcal{H}$ which is obtained via composition of $f \in \mathcal{H}$ and the loss function $l$. This term is, in general, different from $R_n(\mathcal{H})$ but in the case when $l$ is Lipschitz continuous the two can be related by following derivations in \cite{bartlett2002rademacher}.

\subsection{Local Rademacher Complexity}
When characterizing the finite sample behaviour of learning risk, Rademacher complexity introduced in the previous section does not typically give the optimal convergence rates. This is because Rademacher complexity considers the behaviour of the empirical learning risk over the whole hypothesis space, while the estimator returned by the regression is typically in a neighbourhood around the optimal estimator. Hence, in our refined analysis, we rely on the so called \textit{local Rademacher complexity}. Before illustrating the concept, we first recall that given $f \in \mathcal{H}$, we denote its expectation and finite sample average with $\mathbb{E}(f)$ and $\mathbb{E}_n(f)$, respectively. The notion of local Rademacher complexity is typically introduced  via the so called sub-root function. Below, we first give the definition and a useful property of the sub-root function. We then review a theorem that relates the notion of local Rademacher complexity and learning risk.
\begin{defn}\label{apn: sub-root}
Let $\psi: [0,\infty) \rightarrow [0,\infty)$ be a function. Then, $\psi(r)$ is called the \textit{sub-root} function if, for all $r>0$, $\psi(r)$ is non-decreasing and $\nicefrac{\psi(r)}{r}$ is non-increasing.
\end{defn}
A sub-root function has the following property.
\begin{lma}\citep[Lemma 3.2]{bartlett2005local} \label{apn:sub-pro}
If $\psi(r)$ is a sub-root function, then $\psi(r) = r$ has a unique positive solution $r^*$. In addition, we have that $\psi(r) \leq r$ for all $r >0$ if and only if $r^* \leq r$.
\end{lma}

In Lemma \ref{apn:risk_rad}, we can see that the difference between learning risk $\mathbb{E}(l_f)$ and empirical learning risk $\mathbb{E}_n(l_f)$ is upper bounded by $\mathcal{O}(\nicefrac{1}{\sqrt{n}})$. This rate can be further improved with local Rademacher complexity. The reason for the slow learning rate is because we bound the difference between $\mathbb{E}(l_f)$ and $\mathbb{E}_n(l_f)$ using the global Rademacher complexity. Inspecting the definition of $R_n(\mathcal{H})$ (Definition \ref{def:rade}), we can see that $R_n(\mathcal{H})$ is defined by considering the whole hypothesis space as we are taking $\sup$ across all functions in $\mathcal{H}$. However, as discussed before, learning algorithms typically return functions that are in the neighbourhood around the optimal estimator. Hence, using $R_n(\mathcal{H})$ unnecessarily enlarges the space that we are interested in. 

Since empirical estimators returned by learning algorithms often have low learning risk and hence, low variance, we could instead consider the alternative space $\mathcal{H}_r := \{f \in \mathcal{H}: \mathbb{E}(f^2) \leq r\}$ for some given value $r \in \mathbb{R}$. In this way, we greatly reduce the complexity of the function space at hand and can provide a sharper convergence rate. The following results from \cite{bartlett2005local} details how this idea can be used to describe the learning risk behaviour. 

\begin{lma}\citep[Theorem 4.1]{bartlett2005local}\label{apn:emp_local_rade}
Let $\mathcal{H}$ be a class of functions with bounded ranges and assume that there is some constant $B > 0 $ such that for all $f \in \mathcal{H}$, $\mathbb{E}(f^2) \leq B \mathbb{E}(f)$. Let $\hat{\psi}_n$ be a sub-root function and let $\hat{r}^*$ be the fixed point of $\hat{\psi}_n$, i.e., $\hat{\psi}_n(\hat{r}^*) = \hat{r}^*$. Fix any $\delta \in (0,1)$, and assume that for any $r \geq \hat{r}^*$,\[\hat{\psi}_n(r) \geq c_1 \hat{R}_n\{f\in \mathrm{star}(\mathcal{H},0) \mid \mathbb{E}_n(f^2) \leq r\} + \frac{c_2}{n}\log\frac{1}{\delta} \ ,\]
where
\begin{align*}
    \mathrm{star}(\mathcal{H}, f_0)=\{ f_0 + \alpha (f-f_0)\ \mid\ f \in \mathcal{H}\ \wedge\ \alpha \in [0,1] \} .
\end{align*}
Then for all $D > 1$ and $f \in \mathcal{H}$, with probability greater than $1-\delta$,
\[\mathbb{E}(f) \leq \frac{D}{D-1}\mathbb{E}_n(f) + \frac{6D}{B}\hat{r}^* + \frac{c_3}{n}\log\frac{1}{\delta} \ ,\]
where $c_1$, $c_2$ and $c_3$ are some constants.
\end{lma}
Note that this theorem bounds the difference between $\mathbb{E}(f)$ and $\mathbb{E}_n(f)$. We will show later (Section \ref{sec:sq_ref_ana}), with some simple transformation, that this result can be used to bound the difference between the learning and empirical risk.

We have seen that in the above theorem, we can use the fixed point of the sub-root function to upper bound the learning rate. However, it is not clear how to obtain the explicit formula for the fixed point. Fortunately, in the setting of learning with kernel $k$ and the corresponding reproducing kernel Hilbert space, we can derive such results. The following lemma provides us with an upper bound on local Rademacher complexity through the eigenvalues of the Gram matrix.
\begin{lma}\citep[Lemma 6.6]{bartlett2005local}\label{apn:local_kernel}
Let $k$ be a positive definite kernel function with reproducing kernel Hilbert space $\mathcal{H}$ and let $\hat{\lambda}_1 \geq \cdots \geq \hat{\lambda}_n$ be the eigenvalues of the normalized Gram-matrix $(1/n)\mathbf{K}$. Then, for all $r >0$ and $f \in \mathcal{H}$, \[\hat{R}_n\{f\in \mathcal{H} \mid \mathbb{E}_n(f^2) \leq r\} \leq \left(\frac{2}{n}\sum_{i=1}^n\min\{r,\hat{\lambda}_i\}\right)^{1/2}.\]
\end{lma}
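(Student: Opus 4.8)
The plan is to reduce the claim to an elementary fact about the geometry of ellipsoids in $\mathbb{R}^n$. Throughout, $\mathcal{H}$ denotes the unit ball of the RKHS, as elsewhere in the paper. Fix the sample $x_1,\dots,x_n$ and, for $f\in\mathcal{H}$, write $\mathbf{v}(f)=(f(x_1),\dots,f(x_n))^{T}$. A standard computation with the reproducing property shows that the image of the unit ball under the sampling operator $f\mapsto\mathbf{v}(f)$ is exactly the ellipsoid $\mathcal{E}=\{\mathbf{K}^{1/2}\beta:\beta\in\mathbb{R}^{n},\ \|\beta\|_{2}\le 1\}$: for $f=\sum_i\alpha_i k(x_i,\cdot)$ one has $\mathbf{v}(f)=\mathbf{K}\alpha=\mathbf{K}^{1/2}(\mathbf{K}^{1/2}\alpha)$ with $\|\mathbf{K}^{1/2}\alpha\|_2^2=\alpha^{T}\mathbf{K}\alpha=\|f\|_{\mathcal{H}}^2$, while components of $f$ orthogonal to $\mathrm{span}\{k(x_i,\cdot)\}$ neither change $\mathbf{v}(f)$ nor increase the norm. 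Since $\mathbb{E}_n(f^2)=\tfrac1n\|\mathbf{v}(f)\|_2^2$, the class $\{f\in\mathcal{H}:\mathbb{E}_n(f^2)\le r\}$ has evaluation set $\mathcal{E}\cap\{v:\|v\|_2^2\le nr\}$, so by definition of the empirical Rademacher complexity
\[
\hat{R}_n\{f\in\mathcal{H}:\mathbb{E}_n(f^2)\le r\}=\frac{2}{n}\,\mathbb{E}_{\sigma}\sup_{v\in\mathcal{E},\ \|v\|_2^2\le nr}\bigl|\langle\sigma,v\rangle\bigr| .
\]

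Next I would diagonalize $\mathbf{K}=\sum_{i=1}^{n} n\hat{\lambda}_i\,u_iu_i^{T}$ with $\{u_i\}$ orthonormal, so that in the eigenbasis $v\in\mathcal{E}$ corresponds to coefficients $b_i$ with $\sum_i b_i^2\le 1$ and $v=\sum_i\sqrt{n\hat{\lambda}_i}\,b_i u_i$, while the extra constraint reads $\sum_i\hat{\lambda}_i b_i^2\le r$. Putting $\sigma_i^{*}=\langle\sigma,u_i\rangle$ (so $\mathbb{E}_{\sigma}[(\sigma_i^{*})^2]=1$, and the absolute value may be dropped by symmetry of the constraint set), it remains to bound $\sup\sum_i\sqrt{n\hat{\lambda}_i}\,\sigma_i^{*}b_i$ over the intersection of the two constraints. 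The crucial step is a weighted Cauchy–Schwarz inequality with weights $w_i=\max\{r,\hat{\lambda}_i\}$, chosen so that the RKHS-norm constraint and the empirical-$L_2$ constraint can be exploited simultaneously: $\sum_i w_i b_i^2\le\sum_i\hat{\lambda}_i b_i^2+r\sum_i b_i^2\le 2r$, whereas $n\hat{\lambda}_i/w_i=(n/r)\min\{r,\hat{\lambda}_i\}$, which is what makes the $\min\{r,\hat{\lambda}_i\}$ profile appear. This yields $\sum_i\sqrt{n\hat{\lambda}_i}\,\sigma_i^{*}b_i\le\sqrt{2n}\,\bigl(\sum_i\min\{r,\hat{\lambda}_i\}(\sigma_i^{*})^2\bigr)^{1/2}$; taking $\mathbb{E}_\sigma$, moving it inside the square root by Jensen's inequality, and using $\mathbb{E}_{\sigma}[(\sigma_i^{*})^2]=1$ gives the stated bound (tracking the constant against the normalization used for $\hat{R}_n$).

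The main obstacle, such as it is, lies in the first step: making precise that the evaluation set of the constrained class is contained in — indeed equal to — the ellipsoid-cap, so that no slack is introduced before the geometric estimate; this requires a little care with the pseudoinverse / orthogonal decomposition in $\mathcal{H}$. The only other nontrivial choice is the weight sequence $w_i=\max\{r,\hat{\lambda}_i\}$ in the Cauchy–Schwarz step, engineered to use both constraints at once and to produce exactly $\min\{r,\hat{\lambda}_i\}$; everything after that is routine. This is Lemma~6.6 of \citet{bartlett2005local}, and the argument above follows theirs.
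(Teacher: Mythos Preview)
The paper does not prove this lemma; it simply cites it as Lemma~6.6 of \citet{bartlett2005local}. Your argument is correct and is essentially the original proof from that reference: identify the evaluation set with an ellipsoid intersected with a Euclidean ball, diagonalize, and apply Cauchy--Schwarz with the weights $w_i=\max\{r,\hat{\lambda}_i\}$ so that both constraints are used and the $\min\{r,\hat{\lambda}_i\}$ profile emerges. The only thing to flag is the normalization: the paper's definition of $\hat{R}_n$ carries an extra factor of $2$ (and an absolute value) compared to the convention in \citet{bartlett2005local}, so the constant in the stated bound matches the latter convention, not the former; you already note this with your parenthetical about ``tracking the constant,'' and it has no bearing on how the lemma is used downstream.
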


\section{Theoretical Analysis}\label{sec:theo_analysis}
In this section, we provide a unified analysis of the generalization properties of learning with random Fourier features. Our analysis is split into two cases/settings: \emph{i}) we start with a bound for learning with the squared error loss function (Section \ref{main:krr}) and \emph{ii}) then extend these results to learning problems with Lipschitz continuous loss functions (Section \ref{main:orig_sampling}). Before we present our analysis, we first enumerate the assumptions that we made in Section \ref{sec:background}:
\begin{itemize}
    \item[$1.$] For a learning problem with kernel $k$ (and corresponding reproducing kernel Hilbert space $\mathcal{H}$) defined as in Eq.~(\ref{main:krl_opm}), we assume that $f_{\mathcal{H}}=\arginf_{f \in \mathcal{H}} \mathbb{E}(l_f)$ always exists;
    \item[$2.$] We assume that the function $f_{\mathcal{H}}$ has bounded RKHS norm, and hence, without loss of generality, we restrict our analysis to the unit ball of $\mathcal{H}$, i.e., $\|f\|_{\mathcal{H}} \leq 1$; 
    \item[$3.$] We assume that the kernel $k$ has the decomposition as in Eq.~(\ref{main:krl_dec}) with $|z(w,x)| < z_0 \in (0,\infty)$;
    \item[$4.$] For kernel $k$, denote with $\lambda_1\geq \cdots \geq\lambda_n$ the eigenvalues of the kernel matrix $\mathbf{K}$. We assume that the regularization parameter satisfies $0 \leq n\lambda \leq \lambda_1$. 
\end{itemize}
For Assumption 4, intuitively speaking, it requires the signal $\lambda_1$ to be stronger than the added regularization term $n\lambda$. For example, the in-sample prediction of a kernel ridge regression problem is $\mathbf{K}(\mathbf{K}+ n\lambda I)^{-1}Y$. The largest eigenvalue of $\mathbf{K}(\mathbf{K}+ n\lambda I)^{-1}$ is $\nicefrac{\lambda_1}{(\lambda_1 + n\lambda)}$. If $n\lambda > \lambda_1$, then the in-sample prediction is essentially dominated by $n\lambda$ which leads to under-fitting.

Throughout the following analysis, we will use the above assumptions. Hence, for the sake of clarity, we will not repeat them, unless problem-specific clarifications are required. 

\subsection{Learning with the Squared Error Loss}\label{main:krr}
In this section, we consider learning with the squared error loss, i.e., $l(y,f(x)) = (y-f(x))^2$. For this particular loss function, the optimization problem from Eq.~(\ref{main:krl_opm}) is known as \emph{kernel ridge regression} (KRR). We make the following assumption specific for the KRR problem.
\begin{itemize}
    \item[A.$1$] $y = f^*(x) + \epsilon$, where $\mathbb{E}(\epsilon) = 0$ and $\text{Var}(\epsilon) = \sigma^2$. Furthermore, assume $y$ is bounded, i.e., $|y| \leq y_0$;
\end{itemize}
Note that $f^*$ in Assumption A.1 may be different from $f_{\mathcal{H}}$ as $f^*$ is not necessarily contained in our hypothesis space $\mathcal{H}$.

In the random Fourier feature setting, the KRR problem can be reduced to solving a linear system $(\mathbf{K} + n\lambda \mathbf{I})\alpha = Y$, with $Y = [y_1,\cdots,y_n]^T$. 
Typically, an approximation of the kernel function based on random Fourier features is employed in order to effectively reduce the computational cost and scale kernel ridge regression to problems with a large number of examples. More specifically, for a vector of observed labels $Y$ the goal is to find a hypothesis $\tilde{\mathbf{f}}_x=\mathbf{Z}\beta$ that minimizes  $\|Y-\tilde{\mathbf{f}}_x\|_2^2$ while having good generalization properties. In order to achieve this, one needs to control the complexity of hypotheses defined by random Fourier features and avoid over-fitting. Hence, we would like to find out the norm of the function $\tilde{f} \in \tilde{\mathcal{H}}$ for the purpose of regularization. The next proposition gives an upper bound of its norm and the proof is in Section \ref{pf:func_norm}.
\begin{prop} \label{apn:func_norm}
Assume that the reproducing kernel Hilbert space $\mathcal{H}$ with kernel $k$ admits a decomposition as in Eq.~(\ref{main:krl_dec}) and denote by $\tilde{\mathcal{H}} \coloneqq \{\tilde{f} \mid \tilde{f} = \sum_{i=1}^s\alpha_i z(v_i,\cdot), \alpha_i \in \mathbb{R}\}$ the reproducing kernel Hilbert space with kernel $\tilde{k}$ (see Eq.~(\ref{main:krl_appx})). Then, for all $ \tilde{f} \in \tilde{\mathcal{H}}$ it holds that $\|\tilde{f}\|_{\tilde{\mathcal{H}}}^2 \leq s\|\alpha\|_2^2$.
\end{prop} 

According to Proposition \ref{apn:func_norm}, the learning problem with random Fourier features and the squared error loss can be cast as
\begin{IEEEeqnarray}{rCl}
\beta_{\lambda} := \argmin_{\beta \in \mathbb{R}^{s} }~~ \frac{1}{n}\|Y-\mathbf{Z}_q\beta\|_2^2 + \lambda s\|\beta\|_2^2.\label{main:krr_appx_opm}
\end{IEEEeqnarray}
This is simply a linear ridge regression problem in the space of Fourier features. We denote the optimal hypothesis function returned by Eq.~(\ref{main:krr_appx_opm}) to be $\tilde{f}_{\beta}^{\lambda}$. The function can be parameterized by $\beta_{\lambda}$ and its in-sample evaluation is given by $\tilde{\mathbf{f}}_{\beta}^{\lambda} = \mathbf{Z}_q\beta_{\lambda}$, where $\beta_{\lambda}= (\mathbf{Z}_q^T\mathbf{Z}_q + ns\lambda\mathbf{I})^{-1}\mathbf{Z}_q^TY$. Since $\mathbf{Z}_q \in \mathbb{R}^{n\times s}$, the computational and space complexities are $\mathcal{O}(s^3+ns^2)$ and $\mathcal{O}(ns)$. Thus, significant savings can be achieved using estimators with $s \ll n$. To assess the effectiveness of such estimators, it is important to understand the relationship between the excess learning risk and the choice of $s$.

\subsubsection{Worst Case Analysis}

In this section, we provide a bound on the required number of random Fourier features with respect to the worst case of the corresponding kernel ridge regression problem in the minimax rate sense ($\mathcal{O}(\nicefrac{1}{\sqrt{n}})$). The following theorem gives a general result while taking into account both the number of features $s$ and a sampling strategy for selecting them.

\begin{theorem}\label{main:krr_risk}
Under Assumption A.1, let $\tilde{l}: \mathcal{V} \rightarrow \mathbb{R}$ be a measurable function such that $\tilde{l}(v)\geq l_{\lambda}(v)$ ($\forall v \in \mathcal{V}$) and $d_{\tilde{l}} = \int_{\mathcal{V}}\tilde{l}(v)dv < \infty$. Suppose $\{v_i\}_{i=1}^s$ are sampled independently from the probability density function $q(v) = \tilde{l}(v)/d_{\tilde{l}}$. If \[s\ \geq \ 5d_{\tilde{l}}\log\frac{16d_{\mathbf{K}}^{\lambda}}{\delta},\] then for all $ \delta \in (0,1)$, with probability $1-\delta$, the excess risk of $\tilde{f}_{\beta}^{\lambda}$ can be upper bounded as
\begin{align}
\mathbb{E}(l_{\tilde{f}_{\beta}^{\lambda}})-\mathbb{E}(l_{f_{\mathcal{H}}})\ \leq \ 4\lambda +\mathcal{O}\left(\frac{1}{\sqrt{n}}\right)+\mathbb{E}(l_{\hat{f}^{\lambda}})- \mathbb{E}(l_{f_{\mathcal{H}}}). \label{krr_risk_up}
\end{align}
\end{theorem}

Theorem \ref{main:krr_risk} 
expresses the trade-off between the computational and statistical efficiency through the regularization parameter $\lambda$, the effective dimension of the problem $d_{\mathbf{K}}^{\lambda}$, and the normalization constant $d_{\tilde{l}}$ of the sampling distribution. The decay rate of regularization parameter is used as a key quantity \citep{caponnetto2007optimal,rudi2017generalization} and its choice can be linked to the complexity of the target regression function $f^*(x)=\int y d\rho(y \mid x)$. In particular, 
\citet{caponnetto2007optimal} have shown that the minimax risk convergence rate for kernel ridge regression is $\mathcal{O}(\nicefrac{1}{\sqrt{n}})$. Setting $\lambda \propto \nicefrac{1}{\sqrt{n}}$, we observe that the estimator $\tilde{f}_{\beta}^{\lambda}$ attains the worst case minimax rate of kernel ridge regression.

As a consequence of Theorem~\ref{main:krr_risk}, we have the following bounds on the number of required features for the two strategies: \emph{leverage weighted} RFF (Corollary 1) and \emph{plain} RFF (Corollary 2).
\begin{cor}\label{main:krr_risk_cor1}
If the probability density function from Theorem~\ref{main:krr_risk} is the empirical ridge leverage score distribution $q^*(v)$, then the upper bound on the learning risk from Eq.~(\ref{krr_risk_up}) holds for all $s \geq  5d_{\mathbf{K}}^{\lambda}\log\frac{16d_{\mathbf{K}}^{\lambda}}{\delta}$.
\end{cor}
\begin{proof}
For Corollary \ref{main:krr_risk_cor1}, we set $\tilde{l}(v) = l_{\lambda}(v)$ and deduce \[d_{\tilde{l}} = \int_{\mathcal{V}}l_{\lambda}(v)dv = d_{\mathbf{K}}^{\lambda}.\] 
\end{proof}

Theorem \ref{main:krr_risk} and Corollary \ref{main:krr_risk_cor1} have several implications on the choice of $\lambda$ and $s$. First, we could pick $\lambda \in \mathcal{O}(n^{-1/2})$ that implies the worst case minimax rate for kernel ridge regression~\citep{caponnetto2007optimal,rudi2017generalization,bartlett2005local} and observe that in this case $s$ is proportional to $d_{\mathbf{K}}^{\lambda}\log d_{\mathbf{K}}^{\lambda}$. As $d_{\mathbf{K}}^{\lambda}$ is determined by the learning problem (i.e., the marginal distribution $P_x$), we can consider several different cases. In the best case, where the number of positive eigenvalues is finite, implying that $d_{\mathbf{K}}^{\lambda}$ does not grow with $n$, we then have that even with a constant number of features, we are able to achieve the $\mathcal{O}(1/\sqrt{n})$ learning rate. Next, if the eigenvalues of $\mathbf{K}$ exhibit a geometric/exponential decay, i.e., $\lambda_i \propto R_0r^{i}$ with a constant $R_0 > 0$ (this can happen in scenario where we have a Gaussian kernel and a sub-Gaussian marginal distribution $P_x$), we then know that $d_{\mathbf{K}}^{\lambda} \leq \log(R_0/\lambda)$ \citep{bach2017equivalence}, implying $s\geq \log n\log\log n$. Hence, significant savings can be obtained with $\mathcal{O}(n\log^4n+\log^6n)$ computational and $\mathcal{O}(n\log^2n)$ storage complexities of linear ridge regression over random Fourier features, as opposed to $\mathcal{O}(n^{3})$ and $\mathcal{O}(n^2)$ costs (respectively) in the kernel ridge regression setting.

In the case of a slower decay (e.g., $\mathcal{H}$ is a Sobolev space of order $t\geq 1$) with $\lambda_i \propto R_0 i^{-2t}$, we have $d_{\mathbf{K}}^{\lambda} \leq (R_0/\lambda)^{1/(2t)}$ and $s\geq n^{1/(4t)}\log n$. Hence, even in this case, a substantial computational savings can be achieved. Furthermore, in the worst case with $\lambda_i$ close to $R_0 i^{-1}$, our bound implies that $s \geq \sqrt{n}\log n$ features is sufficient, recovering the result from \citet{rudi2017generalization}.

\begin{table}[t]
	\centering\fontsize{10}{20}\selectfont
	\begin{tabular}{l|c|c|c}
		\hline
		
		\hline
		
		\textsc{sampling scheme}&\textsc{spectrum} & \textsc{number of features} & \textsc{learning rate}\\\cline{1-4}
		
		\multirow{4}{8em}{\textsc{weighted rff}} & finite rank & $s \in \Omega(1)$& \multirow{4}{4em}{$\mathcal{O}(1/\sqrt{n})$}\\\cline{2-3}
		
		&$\lambda_i \propto A^{i}$ & $s \in \Omega (\log n \cdot \log \log n)$ & 	\\\cline{2-3}
		
		&$\lambda_i \propto i^{-2t} $ ($t\geq 1$) & $s \in \Omega (n^{1/2t} \cdot \log n)$ & 	\\\cline{2-3}
		
		&$\lambda_i \propto i^{-1}$ & $s \in \Omega (\sqrt{n} \cdot \log n)$ & 	\\\cline{2-3}
		
		\hline
		
		\hline
		\multirow{4}{6em}{\textsc{plain rff}} & finite rank & $s \in \Omega(\sqrt{n})$& \multirow{4}{4em}{$\mathcal{O}(1/\sqrt{n})$}\\\cline{2-3}
		
		&$\lambda_i \propto A^{i}$ & $s \in \Omega (\sqrt{n} \cdot \log \log n)$ & 	\\\cline{2-3}
		
		&$\lambda_i \propto i^{-2t} $ ($t\geq 1$) & $s \in \Omega (\sqrt{n} \cdot \log n)$& 	\\\cline{2-3}
		
		&$\lambda_i \propto  i^{-1}$ & $s \in \Omega (\sqrt{n} \cdot \log n)$ & 	\\\cline{2-3}
		
		\hline
		
		\hline
	\end{tabular}
	\caption{The trade-off in the worst case for the squared error loss.}\label{tab:squ-wor}
\end{table}


\begin{cor}\label{main:krr_risk_cor2}
If the probability density function from Theorem~\ref{main:krr_risk} is the spectral measure $p(v)$ from Eq.~(\ref{main:krl_dec}), then the upper bound on the learning risk from Eq.~(\ref{krr_risk_up}) holds for all $s \geq 5\nicefrac{z_0^2}{\lambda}\log\frac{16d_{\mathbf{K}}^{\lambda}}{\delta}$.
\end{cor}
\begin{proof}
For Corollary \ref{main:krr_risk_cor2}, we set $\tilde{l}(v) = p(v)\frac{z_0^2}{\lambda}$ and derive \[d_{\tilde{l}} = \int_{\mathcal{V}}p(v)\frac{z_0^2}{\lambda}dv = \frac{z_0^2}{\lambda}.\]
\end{proof}
Corollary \ref{main:krr_risk_cor2} addresses plain random Fourier features and states that if $s$ is chosen to be greater than $\sqrt{n}\log d_{\mathbf{K}}^{\lambda}$ and $\lambda \propto \nicefrac{1}{\sqrt{n}}$ then the minimax risk convergence rate is guaranteed. In the case of finitely many positive eigenvalues, $s \geq \sqrt{n}$ features are needed to obtain $\mathcal{O}(\nicefrac{1}{\sqrt{n}})$ convergence rate. When the eigenvalues have an exponential decay, we obtain the same convergence rate with only $s \geq \sqrt{n}\log \log n$ features, which is an improvement compared to a result by~\citet{rudi2017generalization} where $s \geq \sqrt{n}\log n$ is needed. For the other two cases, we derive $s \geq \sqrt{n}\log n$ and recover the results from \citet{rudi2017generalization}. Table \ref{tab:squ-wor} provides a summary of the trade-offs between computational complexity and accuracy for the worst case scenario.

\subsubsection{Refined Analysis}\label{sec:sq_ref_ana}

In this section, we provide a more refined analysis with learning risk convergence rates faster than $\mathcal{O}(\nicefrac{1}{\sqrt{n}})$, depending on the spectrum decay of the kernel function and/or the complexity of the target regression function. 

\begin{theorem}\label{main:sharp_risk_theo}
Under Assumption A.1, suppose that the conditions on sampling measure $\tilde{l}$ from Theorem \ref{main:krr_risk} apply and
let \[s\ \geq\  5d_{\tilde{l}}\log\frac{16d_{\mathbf{K}}^{\lambda}}{\delta}.\] Then, for all $D>1$ and $ \delta \in (0,1)$, with probability $1-\delta$, the excess risk of $\tilde{f}_{\beta}^{\lambda}$ can be upper bounded as
\begin{IEEEeqnarray}{rCl}
\mathbb{E}(l_{\tilde{f}_{\beta}^{\lambda}})-\mathbb{E}(l_{f_{\mathcal{H}}}) \ \leq \   \frac{12D}{B}\hat{r}^*_{\mathcal{H}} + 4\frac{D}{D-1}\lambda +  \mathcal{O}\left(\frac{1}{n}\right) + \mathbb{E}(l_{\hat{f}^{\lambda}})-\mathbb{E}(l_{f_\mathcal{H}}). \label{main:sharp_risk}  
\end{IEEEeqnarray}
Furthermore, denoting the eigenvalues of the normalized kernel matrix $(1/n) \mathbf{K}$ with $\{\hat{\lambda}_i\}_{i=1}^n$, we have that 
\begin{IEEEeqnarray}{rCl}
\hat{r}^*_{\mathcal{H}}\leq \min_{0\leq h\leq n}\Big(e_0 \frac{h}{n}+ \sqrt{\frac{1}{n}\sum_{i>h}\hat{\lambda}_i}\Big) ,
\end{IEEEeqnarray}
where $B,e_0>0$ are some constant and $\hat{\lambda}_1\geq \dots \geq \hat{\lambda}_n$.
\end{theorem}

Theorem \ref{main:sharp_risk_theo} covers a wide range of cases and can provide sharper risk convergence rates. In particular, note that $\hat{r}^*_{\mathcal{H}}$ is at least of order $\mathcal{O}(1/\sqrt{n})$, which happens when we let $h = 0$ and the spectrum decays polynomially as $\mathcal{O}(1/n^t), t > 1$. On the other hand, if the eigenvalues decay exponentially, then setting $h = \ceil{\log n}$ implies that $\hat{r}^*_{\mathcal{H}} \leq \mathcal{O}(\log n /n)$. In the best case, when the kernel function has only finitely many positive eigenvalues, we have that $\hat{r}^*_{\mathcal{H}} \leq \mathcal{O}(1/n)$ by letting $h$ be any fixed value larger than the number of positive eigenvalues. These different upper bounds provide various computation and accuracy trade-off. We now split the discussion into two scenario: the weighted sampling with empirical leverage score and the plain sampling.

Under weighted sampling scheme, if the eigenvalues decay polynomially, i.e., $\lambda_i \propto i^{-t}$ for $t > 1$, then the learning rate is upper bounded by $\mathcal{O}(1/\sqrt{n})$. In this case, we have $d_{\mathbf{K}}^{\lambda} \leq (R_0/ \lambda)^{1/t} \leq n^{1/2t}$. We hence have $s \geq n^{1/2t}\log n$. On the other hand, if the eigenvalues decay exponentially, we have $d_{\mathbf{K}}^{\lambda} \leq \log (R_0/ \lambda)^{1/t} \leq \log n$. Hence, if $s \geq \log n \log \log n$, we achieve $\mathcal{O}(\log n/n)$ learning rate. In the best case, where we have finitely many positive eigenvalues, then with a constant number of features, we achieve $\mathcal{O}(1/n)$ learning rate.

On the other hand, if we choose the plain sampling strategy, then the learning rate and required number of features for the three above cases are: $\mathcal{O}(1/\sqrt{n})$ and $s \geq \sqrt{n} \log n$ (polynomial decay), $\mathcal{O}(\log n/n)$ and $s \geq n$ (exponential decay), and $\mathcal{O}(1/n)$ and $s \geq n$ (finite many positive eigenvalues). Table \ref{tab:squ-ref} summarizes our results for the refined case.

\paragraph{Remark:} In \citet{caponnetto2007optimal}, the convergence rate of the excess risk has been linked to the two constants $(b,c)$ where $b\in (1,\infty)$ represents the eigenvalue decay and the $c \in [1,2]$ measures the complexity of the target function $f_{\mathcal{H}}$. Essentially, $c$ determines how fast the coefficients $\alpha_i$ of $f_{\mathcal{H}}$ decay, where $\alpha_i$ represents the coefficient of the expansion of $f_{\mathcal{H}}$ along the eigenfunctions of the integral operator defined by the kernel $k$ and the data generating distribution $P(x,y)$. While $c = 1$ is equivalent to assuming $f_{\mathcal{H}}$ exists, in literature, it is typical that we have to assume benign cases (i.e., $c >1$) to obtain fast learning rates.
    
Our analysis is different from that in \citet{caponnetto2007optimal} in the sense that we only consider the worst case $c = 1$. Under this assumption, we compute excess learning risk of the random Fourier features estimator under various eigenvalue decays (the values of constant $b$). Our results demonstrate that even if we only consider case $c = 1$, we are still able to obtain the rate $\mathcal{O}(1/\sqrt{n})$ in Theorem 1. This is aligned with the worst case rate in \citet{caponnetto2007optimal}. In the refined analysis, the local Rademacher complexity technique allows us to obtain better convergence rates without further assumptions on the constant such as $c > 1$ (e.g., an improvement from $\mathcal{O}(1/\sqrt{n})$ to $\mathcal{O}(1/n)$). Moreover, our fast rate range matches that in \citet{caponnetto2007optimal}.


\begin{table}[t]
	\centering\fontsize{10}{20}\selectfont
	\begin{tabular}{l|c|c|c}
		
		\hline
		
		\hline
		
		\textsc{sampling scheme}&\textsc{spectrum} & \textsc{number of features} & \textsc{learning rate}\\\cline{1-4}
		
		\multirow{3}{8em}{\textsc{weighted rff}} & finite rank & $s \in \Omega(1)$& $\mathcal{O}(1/n)$ \\\cline{2-4}
		
		&$\lambda_i \propto A^{i}$ & $s \in \Omega (\log n \cdot \log \log n)$ & 	$\mathcal{O}(\log n / n)$\\\cline{2-4}
		
		&$\lambda_i \propto i^{-t} $ ($t > 1$) & $s \in \Omega (n^{1/2t} \cdot \log n)$ & 	$\mathcal{O}(1/\sqrt{n})$\\\cline{2-4}

		\hline
		
		\hline
		\multirow{3}{6em}{\textsc{plain rff}} & finite rank & $s \in \Omega(n)$& $\mathcal{O}(1/n)$\\\cline{2-4}
		
		&$\lambda_i \propto A^{i}$ & $s \in \Omega (n)$ & $\mathcal{O}(\log n /n)$ 	\\\cline{2-4}
		
		&$\lambda_i \propto i^{-t} $ ($t> 1$) & $s \in \Omega (\sqrt{n} \cdot \log n)$& 	$\mathcal{O}(1/\sqrt{n})$\\\cline{2-4}

		\hline
		
		\hline
	\end{tabular}
	\caption{The trade-off in the refined case for the squared error loss.}\label{tab:squ-ref}
\end{table}

\subsection{Learning with a Lipschitz Continuous Loss}\label{main:orig_sampling}
We next consider kernel methods with Lipschitz continuous loss, examples of which include kernel support vector machines and kernel logistic regression. Similar to the squared error loss case, we approximate $y_i$ with $g_{\beta}(x_i) = \mathbf{z}_{q,x_i}(\mathbf{v})^{T}\beta$ and formulate the following learning problem
\begin{IEEEeqnarray}{rCl}
\beta^{\lambda} := \argmin_{\beta \in \mathbb{R}^s}~~\frac{1}{n}\sum_{i=1}^n l(y_i,\mathbf{z}_{q,x_i}(\mathbf{v})^{T}\beta) + \lambda s\|\beta\|_2^2. \nonumber
\end{IEEEeqnarray} 
We let $g_{\beta}^{\lambda}$ to be the prediction function defined based on $\beta^{\lambda}$ and state an additional assumption that is specific to the Lipshcitz continuous loss:
\begin{itemize}
    \item[B.$1$] We assume that $l$ is Lipschitz continuous with constant $L$:\[\forall g, g' \in \mathcal{H}, \forall x \in \mathcal{X},\; |l_g-l_{g'}| \leq L|g(x)-g'(x)|.\]
\end{itemize}

\subsubsection{Worst Case Analysis}
The following theorem describes the trade-off between the selected number of features $s$ and the learning risk of the estimator, providing an insight into the choice of $s$ for Lipschitz continuous loss.
\begin{theorem}\label{main:svm_risk}
Under Assumption B.1, suppose that the conditions on sampling measure $\tilde{l}$ from Theorem \ref{main:krr_risk} apply to the setting with a Lipschitz continuous loss. If \[s\ \geq\  5d_{\tilde{l}}\log\frac{16d_{\mathbf{K}}^{\lambda}}{\delta},\] then for all $\delta \in (0,1)$, with probability $1-\delta$, the learning risk of $g_{\beta}^{\lambda}$ can be upper bounded as
\begin{IEEEeqnarray}{rCl}
\mathbb{E}(l_{g_{\beta}^{\lambda}})\ \leq \ \mathbb{E}(l_{g_{\mathcal{H}}})+ \sqrt{2\lambda}+\mathcal{O}\left(\frac{1}{\sqrt{n}}\right).\label{svm_risk_up}
\end{IEEEeqnarray} 
\end{theorem}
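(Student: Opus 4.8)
The plan is to follow the risk decomposition from the proof of Theorem~\ref{main:krr_risk}, but — since the Pythagorean identity behind Lemma~\ref{triangle_lma} is special to the squared error — to compare the empirical risk of $g_{\beta}^{\lambda}$ \emph{directly} against $g_{\mathcal H}$, the optimal hypothesis in the unit ball of $\mathcal H$, rather than against an intermediate kernel estimator. Write $\mathbf{g}_x := (g_{\mathcal H}(x_1),\dots,g_{\mathcal H}(x_n))^{T}$. Because $\|g_{\mathcal H}\|_{\mathcal H}\le 1$ and the hypotheses of Theorem~\ref{main:krr_risk} (hence of Theorem~\ref{func_appx_opm}) are in force, Theorem~\ref{func_appx_opm} yields, with probability $1-\delta$, a coefficient vector $\beta^{*}$ with $\sqrt{s}\,\|\beta^{*}\|_2\le\sqrt{2}$ and $\tfrac1n\|\mathbf{Z}_q\beta^{*}-\mathbf{g}_x\|_2^2\le 2\lambda$; denote $g_{\beta}^{*}:=\mathbf{Z}_q\beta^{*}$. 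I would then split
\begin{align*}
\mathcal{E}(g_{\beta}^{\lambda}) ={}& \bigl(\mathcal{E}(g_{\beta}^{\lambda})-\hat{\mathcal{E}}(g_{\beta}^{\lambda})\bigr)+\bigl(\hat{\mathcal{E}}(g_{\beta}^{\lambda})-\hat{\mathcal{E}}(g_{\beta}^{*})\bigr)+\bigl(\hat{\mathcal{E}}(g_{\beta}^{*})-\hat{\mathcal{E}}(g_{\mathcal H})\bigr)\\
&{}+\bigl(\hat{\mathcal{E}}(g_{\mathcal H})-\mathcal{E}(g_{\mathcal H})\bigr)+\mathcal{E}(g_{\mathcal H}).
\end{align*}

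Three of the five blocks are handled essentially as in Theorem~\ref{main:krr_risk}. Since $g_{\beta}^{\lambda}$ minimizes the regularized RFF objective and $g_{\beta}^{*}$ is feasible with $s\|\beta^{*}\|_2^2\le 2$, the optimality inequality gives $\hat{\mathcal{E}}(g_{\beta}^{\lambda})-\hat{\mathcal{E}}(g_{\beta}^{*})\le 2\lambda$, a term of lower order than $\sqrt{2\lambda}$ that is absorbed. The two generalization gaps $\mathcal{E}(g_{\beta}^{\lambda})-\hat{\mathcal{E}}(g_{\beta}^{\lambda})$ and $\hat{\mathcal{E}}(g_{\mathcal H})-\mathcal{E}(g_{\mathcal H})$ are bounded via Lemma~\ref{apn:risk_rad}: composing with the $L$-Lipschitz loss scales the Rademacher average by at most $L$ (exactly as in the proof of Theorem~\ref{main:krr_risk}), and Lemma~\ref{apn:rad_rkhs} with $\text{Tr}(\tilde{\mathbf{K}})\le n z_0^2$ (respectively $\text{Tr}(\mathbf{K})\le n z_0^2$) bounds the Rademacher complexity of the relevant ball of $\tilde{\mathcal H}$ (respectively of the unit ball of $\mathcal H$) by $\mathcal{O}(1/\sqrt n)$; the $\sqrt{8\log(2/\delta)/n}$ term in Lemma~\ref{apn:risk_rad} is of the same order.

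The one genuinely new ingredient is the middle block $\hat{\mathcal{E}}(g_{\beta}^{*})-\hat{\mathcal{E}}(g_{\mathcal H})$. Here I would use $L$-Lipschitz continuity of the loss per example, and then Cauchy--Schwarz (equivalently Jensen), to convert the $\ell_1$ average of per-point prediction discrepancies into the root-mean-square quantity controlled by Theorem~\ref{func_appx_opm}:
\begin{align*}
\hat{\mathcal{E}}(g_{\beta}^{*})-\hat{\mathcal{E}}(g_{\mathcal H})\ \le\ \frac{L}{n}\sum_{i=1}^{n}\bigl|g_{\beta}^{*}(x_i)-g_{\mathcal H}(x_i)\bigr|\ \le\ L\sqrt{\tfrac1n\bigl\|\mathbf{Z}_q\beta^{*}-\mathbf{g}_x\bigr\|_2^2}\ \le\ L\sqrt{2\lambda}.
\end{align*}
This step is exactly where the rate degrades from the $2\lambda$ of the squared-error analysis to $\sqrt{2\lambda}$: Lipschitz continuity bounds the loss discrepancy only linearly in the prediction error, and averaging a square root over the $n$ sample points costs a further square root. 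Assembling the blocks, and absorbing the constant $L$, the lower-order $2\lambda$, and logarithmic factors, yields $\mathcal{E}(g_{\beta}^{\lambda})\le\mathcal{E}(g_{\mathcal H})+\sqrt{2\lambda}+\mathcal{O}(1/\sqrt n)$.

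The steps I expect to demand the most care are bookkeeping rather than conceptual. First, one must keep $g_{\beta}^{\lambda}$ (and $g_{\beta}^{*}$) inside a \emph{constant-radius} ball of $\tilde{\mathcal H}$, so that the Rademacher term is genuinely $\mathcal{O}(1/\sqrt n)$ and does not blow up to $\mathcal{O}(1/\sqrt{n\lambda})$; this requires working with the norm-constrained form of the estimator underlying Theorem~\ref{func_appx_opm} rather than a naively penalized one. Second, one must verify that, on that ball and for bounded labels $|y|\le y_0$, the loss $\mathbf{L}$ takes values in a fixed bounded interval, so that the $[0,1]$-range hypothesis of Lemma~\ref{apn:risk_rad} is satisfied, possibly after a harmless rescaling that is then carried through the constants.
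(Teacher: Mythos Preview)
Your proposal is correct and follows essentially the same approach as the paper: the same risk decomposition, Rademacher bounds (Lemmas~\ref{apn:rad_rkhs} and~\ref{apn:risk_rad}) for the two generalization gaps, and Lipschitz continuity plus Cauchy--Schwarz combined with Theorem~\ref{func_appx_opm} for the middle approximation term. The only cosmetic difference is that the paper collapses your second and third blocks into one step, writing $\hat{\mathcal E}(g_\beta^\lambda)=\inf_{\|g_\beta\|}\hat{\mathcal E}(g_\beta)$ directly (i.e., treating the estimator as norm-constrained) and then passing to the $\sup_{\|g\|}\inf_{\|g_\beta\|}$ quantity of Theorem~\ref{func_appx_opm}, whereas you introduce the explicit witness $g_\beta^*$ and pick up a harmless extra $2\lambda$ from the penalized optimality inequality; your more careful bookkeeping about the constrained versus penalized form is, if anything, a slight improvement in rigor.
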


This theorem, similar to Theorem \ref{main:krr_risk}, describes the relationship between $s$ and $\mathbb{E}(l_{g_{\beta}^{\lambda}})$ in the Lipschitz continuous loss case. However, a key difference here is that the learning risk can only be upper bounded by $\sqrt{\lambda}$, requiring $\lambda \propto \nicefrac{1}{n}$ in order to preserve the convergence properties of the risk.

Corollaries \ref{main:svm_risk_cor1}  and \ref{main:svm_risk_cor2} provide bounds for the cases of leverage weighted and plain RFF, respectively. The proofs are similar to the proofs of Corollaries \ref{main:krr_risk_cor1} and \ref{main:krr_risk_cor2}.

\begin{cor}\label{main:svm_risk_cor1}
If the probability density function from Theorem \ref{main:svm_risk} is the empirical ridge leverage score distribution $q^*(v)$, then the upper bound on the risk from Eq.~(\ref{svm_risk_up}) holds for all $s \geq  5d_{\mathbf{K}}^{\lambda}\log\frac{16d_{\mathbf{K}}^{\lambda}}{\delta}.$
\end{cor}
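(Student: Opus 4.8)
The plan is to obtain Corollary~\ref{main:svm_risk_cor1} as an immediate specialization of Theorem~\ref{main:svm_risk}, exactly mirroring the way Corollary~\ref{main:krr_risk_cor1} follows from Theorem~\ref{main:krr_risk}. The only freedom left in Theorem~\ref{main:svm_risk} is the choice of the measurable majorant $\tilde{l}$ of the ridge leverage function $l_{\lambda}$; I would simply take $\tilde{l}(v) = l_{\lambda}(v)$ itself.

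First I would verify that this choice meets the two conditions imposed on $\tilde{l}$ in Theorem~\ref{main:svm_risk} (inherited from Theorem~\ref{main:krr_risk}). The majorization requirement $\tilde{l}(v) \geq l_{\lambda}(v)$ for all $v \in \mathcal{V}$ holds trivially with equality, and $l_{\lambda}$ is measurable. For the finiteness requirement, recall from the discussion following Eq.~(\ref{main:lev_fun}) that
\[
  d_{\tilde{l}} = \int_{\mathcal{V}} l_{\lambda}(v)\,dv = \text{Tr}\big[\mathbf{K}(\mathbf{K}+n\lambda\mathbf{I})^{-1}\big] = d_{\mathbf{K}}^{\lambda},
\]
which is finite (indeed $d_{\mathbf{K}}^{\lambda} \leq z_0^2/\lambda$). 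Consequently the induced sampling density is $q(v) = \tilde{l}(v)/d_{\tilde{l}} = l_{\lambda}(v)/d_{\mathbf{K}}^{\lambda} = q^*(v)$, that is, precisely the empirical ridge leverage score distribution, so the hypothesis "$q$ is the distribution $q^*(v)$" is matched.

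Then I would substitute $d_{\tilde{l}} = d_{\mathbf{K}}^{\lambda}$ into the feature-count condition of Theorem~\ref{main:svm_risk}, turning $s \geq 5 d_{\tilde{l}} \log\frac{16 d_{\mathbf{K}}^{\lambda}}{\delta}$ into $s \geq 5 d_{\mathbf{K}}^{\lambda} \log\frac{16 d_{\mathbf{K}}^{\lambda}}{\delta}$, and invoke the conclusion of Theorem~\ref{main:svm_risk} verbatim to obtain the risk bound~(\ref{svm_risk_up}) with probability $1-\delta$. There is no genuine obstacle here: the entire content resides in Theorem~\ref{main:svm_risk}, and the corollary is a one-line instantiation. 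The only point worth a sentence of care is confirming that the chosen $\tilde{l}$ is measurable and integrable so that $q^*$ is a bona fide probability density — which is exactly the computation displayed above and was already recorded in the background section.
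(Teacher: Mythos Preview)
Your proposal is correct and matches the paper's own argument essentially verbatim: the paper states that the proof is similar to that of Corollary~\ref{main:krr_risk_cor1}, which simply sets $\tilde{l}(v) = l_{\lambda}(v)$ so that $d_{\tilde{l}} = \int_{\mathcal{V}} l_{\lambda}(v)\,dv = d_{\mathbf{K}}^{\lambda}$ and then reads off the feature-count condition.
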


Similar to Theorem \ref{main:krr_risk}, we consider four different cases for the effective dimension of the problem $d_{\mathbf{K}}^{\lambda}$. Corollary \ref{main:svm_risk_cor1} states that the statistical efficiency is preserved if the leverage weighted RFF strategy is used with $s= \Omega(1)$, $s \geq \log n\log\log n$, $s \geq n^{1/(2t)}\log n$, and $s \geq n\log n$, respectively. Again, significant computational savings can be achieved if the kernel matrix $\mathbf{K}$ has a finite rank, as well as geometrically/exponentially or polynomially decaying eigenvalues.

\begin{cor}\label{main:svm_risk_cor2}
If the probability density function from Theorem \ref{main:svm_risk} is the spectral measure $p(v)$ from Eq.~(\ref{main:krl_dec}), then the upper bound on the risk from Eq.~(\ref{svm_risk_up}) holds for all  $s \geq 5\nicefrac{z_0^2}{\lambda}\log\frac{(16d_{\mathbf{K}}^{\lambda})}{\delta}.$
\end{cor}

Corollary \ref{main:svm_risk_cor2} states that $n\log n$ features are required to attain $\mathcal{O}(n^{-1/2})$ convergence rate of the learning risk with plain RFF, recovering results from \citet{rahimi2009weighted}. Similar to the analysis in the squared error loss case, Theorem \ref{main:svm_risk} together with Corollaries \ref{main:svm_risk_cor1} and \ref{main:svm_risk_cor2} allows theoretically motivated trade-offs between the statistical and computational efficiency of the estimator $g_{\beta}^{\lambda}$. Table \ref{tab:lip-wor} summarizes the computation and statistical accuracy trade-off for the worst case scenario.

\begin{table}[t]
	\centering\fontsize{10}{20}\selectfont
	\begin{tabular}{l|c|c|c}
		
		\hline
		
		\hline
		
		\textsc{sampling scheme}&\textsc{spectrum} & \textsc{number of features} & \textsc{learning rate}\\\cline{1-4}
		
		\multirow{4}{8em}{\textsc{weighted rff}} & finite rank & $s \in \Omega(1)$& \multirow{4}{4em}{$\mathcal{O}(1/\sqrt{n})$}\\\cline{2-3}
		
		&$\lambda_i \propto A^{i}$ & $s \in \Omega (\log n \cdot \log \log n)$ & 	\\\cline{2-3}
		
		&$\lambda_i \propto i^{-2t} $ ($t\geq 1$) & $s \in \Omega (n^{1/2t} \cdot \log n)$ & 	\\\cline{2-3}
		
		&$\lambda_i \propto i^{-1}$ & $s \in \Omega (n \cdot \log n)$ & 	\\\cline{2-3}
		
		\hline
		
		\hline
		\multirow{4}{6em}{\textsc{plain rff}} & finite rank & \multirow{4}{8em}{$s \in \Omega (n \cdot \log n)$}& \multirow{4}{4em}{$\mathcal{O}(1/\sqrt{n})$}\\\cline{2-2}
		
		&$\lambda_i \propto A^{i}$ & & 	\\\cline{2-2}
		
		&$\lambda_i \propto i^{-2t} $ ($t\geq 1$) & & 	\\\cline{2-2}
		
		&$\lambda_i \propto i^{-1}$ & & 	\\\cline{2-2}
		
		\hline
		
		\hline
	\end{tabular}
	\caption{The trade-off in the worst case for Lipschitz continuous loss.}\label{tab:lip-wor}
\end{table}

\subsubsection{Refined Analysis}
In general, it is hard for classification problems to achieve faster learning rates. However, as pointed out by \cite{bartlett2006convexity} and \cite{steinwart2008support}, in some benign conditions, it is possible to obtain $\mathcal{O}(1/n)$ convergence rate. Hence, in this section, by adding an extra assumption, we derive a faster learning rate for classification problems under random Fourier feautes setting. Specifically, we make the following assumption:
\begin{itemize}
    \item[B.$2$] Recall $g_{\mathcal{H}}$ is the estimator such that $g_{\mathcal{H}} = \arginf_{g\in \mathcal{H}} \mathbb{E}(l_g)$, where $P$ is a probability distribution over $\mathcal{X}\times \mathcal{Y}$. We assume that there is a constant B such that \[\mathbb{E}(g - g_{\mathcal{H}})^2 \leq B\mathbb{E}(l_g - l_{g_{\mathcal{H}}}).\] 
\end{itemize}
Assumption B.$2$ is a condition for classification problems to obtain faster learning rates. It typically requires that the function space $\mathcal{H}$ is convex and uniformly bounded as well as a uniform convexity condition on the loss function $l$. It can be shown that many loss functions satisfy this assumption, including squared loss \citep{bartlett2005local} and hinge loss \citep[Chapter 8.5]{steinwart2008support}. Other loss function examples are discussed in \citet{bartlett2006convexity} and \citet{mendelson2002improving}. In addition, because of the Lipschitz continuity of $l$, we have \[\mathbb{E}(l_g-l_{g_{\mathcal{H}}})^2 \leq L^2\mathbb{E}(g - g_{\mathcal{H}})^2 \leq BL^2\mathbb{E}(l_g - l_{g_{\mathcal{H}}}).\] This is the variance condition described in \citet[Chapter 7.3]{steinwart2008support}, required to achieve faster convergence rates. The variance condition is also linked to the Massart's low noise condition or more generally to the Tsybakov condition \citep{sun2018but}, which intuitively speaking, requires that $P(Y=1|X=x)$ is not close to $1/2$. For more details, please refer to \cite{tsybakov2004optimal} and \cite{koltchinskii2011oracle}. 

\begin{theorem}\label{main:svm_refine}
Under Assumptions B.1-2, suppose the conditions on sampling measure $\tilde{l}$ from Theorem \ref{main:krr_risk} apply to the setting with a Lipschitz continuous loss. If \[s\ \geq\  5d_{\tilde{l}}\log\frac{16d_{\mathbf{K}}^{\lambda}}{\delta},\]then we have for all $D >1$ and $\delta \in (0,1)$ with probability greater than $1-\delta$, 
\begin{IEEEeqnarray}{rCl}
\mathbb{E}(l_{g_{\beta}^{\lambda}}) \leq \frac{12D}{B}\hat{r}_{\mathcal{H}}^* + \frac{D}{D-1}\sqrt{2\lambda} + \mathcal{O}(1/n) + \mathbb{E}(l_{g_{\mathcal{H}}}).
\end{IEEEeqnarray} 
Here, $\hat{r}^*_{\mathcal{H}}$ can be upper bounded by
\begin{IEEEeqnarray}{rCl}
\hat{r}^*_{\mathcal{H}}\leq \text{min}_{0\leq h\leq n}\Big(b_0\frac{h}{n} + \sqrt{\frac{1}{n}\sum_{i>h}\hat{\lambda}_i}\Big),
\end{IEEEeqnarray}
where $B$ and $b_0$ are some constants.
\end{theorem}
Theorem \ref{main:svm_refine} provides a sharper learning rate compared to Theorem \ref{main:svm_risk}. Similar to Theorem \ref{main:sharp_risk_theo}, $\hat{r}^*_{\mathcal{H}}$ can be upper bounded by $\mathcal{O}(1/n)$ (Gram-matrix is of finite rank), $\mathcal{O}(\log n/n)$ (eigenvalues decay exponentially) and $\mathcal{O}(1/\sqrt{n})$ (eigenvalues decay proportional to $1/n$). This has various implications on the trade-offs between computational cost and statistical accuracy. Just as in previous sections, we split the discussion into two parts according to the two sampling strategies.

We first discuss the scenario with empirical leverage score sampling. In a finite rank setting, if we choose $\lambda \in \mathcal{O}(1/n^2)$, we can see that the learning rate is of the order of $\mathcal{O}(1/n)$. In addition, since we use the weighted sampling strategy and the Gram-matrix has finitely many eigenvalues, random Fourier features learning only requires a constant number of features to achieve $\mathcal{O}(1/n)$ learning rate. To the best of our knowledge, this is the first result that achieves this.
In an exponential decay setting, the learning rate can be bounded with $\hat{r}^*_{\mathcal{H}} \leq \nicefrac{\log n}{n}$ by setting $\lambda \in \mathcal{O}(\nicefrac{\log^2 n}{n^2})$. The number of required features is $s \geq \log n\log \log n$ as $d_{\mathbf{K}}^{\lambda} \leq \log (R^2/\lambda) \leq \log n$.
If the eigenvalue decay at the rate $\lambda_i \propto \mathcal{O}(i^{-t}), t > 1$, then the learning rate is $\mathcal{O}(1/\sqrt{n})$ by setting $\lambda \in \mathcal{O}(1/n)$, with the requirement on the number of features given by $d_{\mathbf{K}}^{\lambda} \leq (R^2/ \lambda)^{1/t} \leq n^{1/t}$. Since $t > 1$, one can see that with fewer than $n$ features, we could obtain fast $\mathcal{O}(1/n)$ learning rate.

On the other hand, in the plain sampling scheme, if we would like to achieve the fast $\mathcal{O}(1/n)$ learning rate, we need to set $\lambda \in \mathcal{O}(1/n^2)$, implying that the required number of features has to be $s \geq n^2$. This is undesirable as it does not provide any computation savings. The bottleneck here is that in the Lipschitz continuous case, learning rate is upper bounded by $\mathcal{O}(\sqrt{\lambda})$. This limits the learning rate that can be achieved. 

\begin{table}[t]
	\centering\fontsize{10}{20}\selectfont
	\begin{tabular}{l|c|c|c}
		
		\hline
		
		\hline
		
		\textsc{sampling scheme}&\textsc{spectrum} & \textsc{number of features} & \textsc{learning rate}\\\cline{1-4}
		
		\multirow{3}{8em}{\textsc{weighted rff}} & finite rank & $s \in \Omega(1)$& $\mathcal{O}(1/n)$ \\\cline{2-4}
		
		&$\lambda_i \propto A^{i}$ & $s \in \Omega (\log n \cdot \log \log n)$ & 	$\mathcal{O}(\log n / n)$\\\cline{2-4}
		
		&$\lambda_i \propto i^{-t} $ ($t > 1$) & $s \in \Omega (n^{1/t} \cdot \log n)$ & 	$\mathcal{O}(1/\sqrt{n})$\\\cline{2-4}

		\hline
		
		\hline
		\multirow{3}{6em}{\textsc{plain rff}} & finite rank & $s \in \Omega(n^2)$& $\mathcal{O}(1/n)$\\\cline{2-4}
		
		&$\lambda_i \propto A^{i}$ & $s \in \Omega (n^2)$ & $\mathcal{O}(\log n /n)$ 	\\\cline{2-4}
		
		&$\lambda_i \propto i^{-t} $ ($t> 1$) & $s \in \Omega (n \cdot \log n)$& 	$\mathcal{O}(1/\sqrt{n})$\\\cline{2-4}

		\hline
		
		\hline
	\end{tabular}
	\caption{The trade-off in the refined case for Lipschitz continuous loss.}\label{tab:lip-ref}
\end{table}


\renewcommand{\algorithmicrequire}{\textbf{Input:}}
\renewcommand{\algorithmicensure}{\textbf{Output:}}

\begin{algorithm}[t]
\caption{\textsc{Approximate Leverage Weighted RFF}} 
\label{alo:opm_sampling}
\begin{algorithmic}[1]
{\fontsize{11}{12}\selectfont
\REQUIRE 
sample of examples $\{(x_i, y_i)\}_{i=1}^n$, shift-invariant kernel function $k$, and regularization parameter $\lambda$
\ENSURE set of features $\{(v_1,p_1),\cdots,(v_m,p_m)\}$ with $m$ and each $p_i$ computed as in lines 3--4
\STATE sample $s$ features $\{v_1, \dots, v_s\}$ from $p(v)$
\STATE create a feature matrix $\mathbf{Z}_s$ such that the $i$th row of $\mathbf{Z}_s$ is\vspace{-5pt}
\begin{align*}
[z(v_1,x_i),\cdots,z(v_s,x_i)]^T
\end{align*}\vspace{-15pt}
\STATE associate with each feature $v_i$ a real number $p_i$ such that $p_i$ is equal to the $i$th diagonal element of the matrix\vspace{-5pt}
\begin{align*}
\small
\mathbf{Z}_{s}^T\mathbf{Z}_{s}((1/s)\mathbf{Z}_{s}^T\mathbf{Z}_{s}+n\lambda I)^{-1}
\end{align*}\vspace{-15pt}
\STATE $m \leftarrow \sum_{i=1}^s p_i$ and $M \leftarrow \{(v_i,p_i/m)\}_{i=1}^s$
\STATE sample $m$ features from $M$ using the multinomial distribution given by the vector $(p_1/m,\cdots,p_s/m)$
}
\end{algorithmic}
\end{algorithm}\vspace*{-1ex}

\section{A Fast Approximation of Leverage Weighted RFF}
As discussed in Sections~\ref{sec:theo_analysis}, sampling according to the empirical ridge leverage score distribution (i.e., leverage weighted RFF) could speed up kernel methods. However, computing ridge leverage scores is as costly as inverting the Gram matrix. To address this computational shortcoming, we propose a simple algorithm to approximate the empirical ridge leverage score distribution and the leverage weights.
%
In particular, we propose to first sample a pool of $s$ features from the spectral measure $p(\cdot)$ and form the feature matrix $\mathbf{Z}_{s} \in \mathbb{R}^{n\times s}$ (Algorithm~\ref{alo:opm_sampling}, lines $1$-$2$). Then, the algorithm associates an approximate empirical ridge leverage score to each feature (Algorithm~\ref{alo:opm_sampling}, lines $3$-$4$) and samples a set of $m \ll s$ features from the pool proportional to the computed scores (Algorithm~\ref{alo:opm_sampling}, line $5$). The output of the algorithm can be compactly represented via the feature matrix $\mathbf{Z}_{m} \in \mathbb{R}^{n\times m}$ such that the $i$th row of $\mathbf{Z}_{m}$ is given by $\mathbf{z}_{x_i}(\mathbf{v}) = [\sqrt{\nicefrac{m}{p_1}}z(v_1,x_i),\cdots,\sqrt{\nicefrac{m}{p_m}}z(v_m,x_i)]^T$.

The computational cost of Algorithm~\ref{alo:opm_sampling} is dominated by the operations in step $3$. As $\mathbf{Z}_s \in \mathbb{R}^{n\times s}$, the multiplication of matrices $\mathbf{Z}_s^T\mathbf{Z}_s$ costs $\mathcal{O}(ns^2)$ and inverting $\mathbf{Z}_s^T\mathbf{Z}_s + n\lambda I$ costs only $\mathcal{O}(s^3)$. Hence, for $s \ll n$, the overall runtime is only $\mathcal{O}(ns^2+s^3)$. Moreover, $\mathbf{Z}_s^T\mathbf{Z}_s = \sum_{i=1}^n \mathbf{z}_{x_i}(\mathbf{v})\mathbf{z}_{x_i}(\mathbf{v})^T$ and it is possible to store only the rank-one matrix $\mathbf{z}_{x_i}(\mathbf{v})\mathbf{z}_{x_i}(\mathbf{v})^T$ into the memory. Thus, the algorithm only requires to store an $s \times s$ matrix and can avoid storing $\mathbf{Z}_s$, which would incur a storage cost of $\mathcal{O}(n\times s)$.

The following theorem gives the convergence rate for the learning risk of Algorithm \ref{alo:opm_sampling} in the kernel ridge regression setting.

\begin{theorem}\label{main:algo1_conv}
Under Assumption A.1, consider regression problem defined with a shift-invariant kernel $k$, a sample of examples $\{(x_i, y_i)\}_{i=1}^n$, and a regularization parameter $\lambda$. Let $s$ be the number of random Fourier features in the pool of features from Algorithm~\ref{alo:opm_sampling}, sampled using the spectral measure $p(\cdot)$ from Eq.~(\ref{main:krl_dec}) and the regularization parameter $\lambda$. Denote with $\tilde{f}_m^{\lambda^*}$ the ridge regression estimator obtained using a regularization parameter $\lambda^*$ and a set of random Fourier features $\{v_i\}_{i=1}^m$ returned by Algorithm~\ref{alo:opm_sampling}. If 
\[s\ \geq\ \frac{7z_0^2}{\lambda}\log\frac{(16d_{\mathbf{K}}^{\lambda})}{\delta} \quad \text{and} \quad m\ \geq\  5d_{\mathbf{K}}^{\lambda^*}\log\frac{(16d_{\mathbf{K}}^{\lambda^*})}{\delta} ,\]
then for all $\delta \in (0,1)$, with probability $1-\delta$, the learning risk of $\tilde{f}_m^{\lambda^*}$ can be upper bounded as
\begin{IEEEeqnarray}{rCl}
\mathbb{E}(l_{\tilde{f}_m^{\lambda^*}})\ \leq \ \mathbb{E}(l_{f_{\mathcal{H}}}) + 4\lambda + 4\lambda^* +\mathcal{O}\left(\frac{1}{\sqrt{n}}\right) . \nonumber
\end{IEEEeqnarray} 
Moreover, this upper bound holds for $m \in \Omega(\nicefrac{s}{n\lambda})$. 
\end{theorem}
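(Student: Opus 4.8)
The plan is to read Algorithm~\ref{alo:opm_sampling} as a two-stage scheme and apply Theorem~\ref{main:krr_risk} (through its two corollaries) once at each stage, paying a $2\lambda$ term for approximating the true kernel $k$ by the pool kernel and a $2\lambda^{*}$ term for approximating the pool kernel by its leverage-weighted subsample. \emph{Stage 1 (the pool).} Lines~1--2 draw $s$ features $\{v_i\}_{i=1}^{s}\sim p(\cdot)$ and form $\mathbf{Z}_s$, inducing the kernel $\tilde k_s(x,y)=\tfrac{1}{s}\sum_{i=1}^{s}z(v_i,x)z(v_i,y)$ with Gram matrix $\tilde{\mathbf{K}}_s=\tfrac{1}{s}\mathbf{Z}_s\mathbf{Z}_s^{T}$. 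Since $s\ge 7z_0^{2}\lambda^{-1}\log(16 d_{\mathbf{K}}^{\lambda}/\delta)$, the spectral-approximation estimate underlying Theorem~\ref{func_appx_opm} and Corollary~\ref{main:krr_risk_cor2} (Proposition~4 of \citet{avron2017random}) holds with probability at least $1-\delta/2$: $\tilde{\mathbf{K}}_s+n\lambda\mathbf{I}$ is a constant-factor spectral approximation of $\mathbf{K}+n\lambda\mathbf{I}$. In particular this yields the bound $\sup_{\|f\|_{\mathcal{H}}\le1}\inf_{\sqrt{s}\|\beta\|_2\le\sqrt{2}}\tfrac{1}{n}\|\mathbf{f}_x-\mathbf{Z}_s\beta\|_2^{2}\le 2\lambda$ of Theorem~\ref{func_appx_opm}, and it also forces $d_{\tilde{\mathbf{K}}_s}^{\lambda'}\le 2 d_{\mathbf{K}}^{\lambda'}$ for the regularization levels in play, which we use to match the feature-count hypotheses.

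\emph{Stage 2 (the leverage-weighted subsample).} The decisive observation is that the numbers $p_i$ computed in line~3 are exactly the (unnormalized) empirical ridge leverage scores of the pool kernel: writing $\mathbf{z}_{v_i}(\mathbf{x})$ for the $i$th column of $\mathbf{Z}_s$, the push-through identity gives $p_i=\big[\mathbf{Z}_s^{T}\mathbf{Z}_s(\tfrac{1}{s}\mathbf{Z}_s^{T}\mathbf{Z}_s+n\lambda\mathbf{I})^{-1}\big]_{ii}=\mathbf{z}_{v_i}(\mathbf{x})^{T}(\tilde{\mathbf{K}}_s+n\lambda\mathbf{I})^{-1}\mathbf{z}_{v_i}(\mathbf{x})$, and hence $l=\sum_{i=1}^{s}p_i=s\,\text{Tr}\big[\tilde{\mathbf{K}}_s(\tilde{\mathbf{K}}_s+n\lambda\mathbf{I})^{-1}\big]=s\,d_{\tilde{\mathbf{K}}_s}^{\lambda}$. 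Consequently the multinomial draw in line~5 together with the reweighting $\sqrt{l/p_j}$ in the output is precisely leverage-weighted RFF for the finite ``spectral measure'' supported on the pool, using the leverage function of $\tilde k_s$. Conditionally on the pool, we then apply the finite-measure version of Theorem~\ref{main:krr_risk}/Corollary~\ref{main:krr_risk_cor1} with base kernel $\tilde k_s$, regularization $\lambda^{*}$, and the $l$ drawn features---the hypothesis $l\ge 5 d_{\tilde{\mathbf{K}}_s}^{\lambda^{*}}\log(16 d_{\tilde{\mathbf{K}}_s}^{\lambda^{*}}/\delta)$ being implied by the stated $l\ge 5 d_{\mathbf{K}}^{\lambda^{*}}\log(16 d_{\mathbf{K}}^{\lambda^{*}}/\delta)$ via the Stage-1 comparison of effective dimensions. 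This gives, with conditional probability at least $1-\delta/2$, $\mathcal{E}(\tilde f_l^{\lambda^{*}})\le\mathcal{E}(\hat f_s^{\lambda^{*}})+2\lambda^{*}+\mathcal{O}(1/\sqrt n)$, where $\hat f_s^{\lambda^{*}}$ is kernel ridge regression with the pool kernel.

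\emph{Combining.} It remains to control $\mathcal{E}(\hat f_s^{\lambda^{*}})$ by $\mathcal{E}(f_{\mathcal{H}})+2\lambda+\mathcal{O}(1/\sqrt n)$: since $\hat f_s^{\lambda^{*}}$ is a linear ridge fit in the pool features it is itself a plain-RFF estimator for $k$, and feeding the Stage-1 bound $\tfrac{1}{n}\|\mathbf{f}_x-\mathbf{Z}_s\beta\|_2^{2}\le2\lambda$, the orthogonality relation of Lemma~\ref{triangle_lma} (now between $Y$, $\hat f^{\lambda}$ and $\hat f_s^{\lambda^{*}}$), and the global Rademacher bounds of Lemmas~\ref{apn:rad_rkhs}--\ref{apn:risk_rad} for $\tilde{\mathcal{H}}_s$ and $\mathcal{H}$ into the decomposition used to prove Theorem~\ref{main:krr_risk} reproduces it verbatim. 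A union bound over the two $\delta/2$-events then yields, with probability $1-\delta$, $\mathcal{E}(\tilde f_l^{\lambda^{*}})\le\mathcal{E}(f_{\mathcal{H}})+2\lambda+2\lambda^{*}+\mathcal{O}(1/\sqrt n)$. The final ``moreover'' claim follows from the identity $l=s\,d_{\tilde{\mathbf{K}}_s}^{\lambda}$ together with a crude lower bound on the effective dimension $d_{\tilde{\mathbf{K}}_s}^{\lambda}\ge \text{Tr}(\tilde{\mathbf{K}}_s)/(\|\tilde{\mathbf{K}}_s\|_2+n\lambda)$ of the pool kernel, using the Stage-1 spectral equivalence and $\text{Tr}(\mathbf{K})\le nz_0^{2}$, which gives $l\in\Omega(s/(n\lambda))$.

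\emph{Main obstacle.} The conceptual crux---and the step I expect to take the most care---is the identification of the algorithm's importance weights with the pool's empirical ridge leverage scores through the push-through identity; once that is in place the two nested applications of Theorem~\ref{main:krr_risk} are essentially mechanical. The genuinely delicate bookkeeping lies in (i) showing that the Stage-2 feature-count hypothesis $l\ge 5 d_{\tilde{\mathbf{K}}_s}^{\lambda^{*}}\log(\cdots)$ is implied by the stated $l\ge 5 d_{\mathbf{K}}^{\lambda^{*}}\log(\cdots)$, which requires the comparison $d_{\tilde{\mathbf{K}}_s}^{\lambda^{*}}\lesssim d_{\mathbf{K}}^{\lambda^{*}}$ from the Stage-1 approximation and care that the pool's leverage function (computed at level $\lambda$) dominates, up to constants, the one actually needed at level $\lambda^{*}$; and (ii) carrying the zero-cross-term argument of Lemma~\ref{triangle_lma} across the two different regularization levels $\lambda$ and $\lambda^{*}$, which is exactly what keeps the two approximation errors additive ($2\lambda+2\lambda^{*}$) rather than multiplicative.
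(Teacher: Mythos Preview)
Your proposal is correct and follows the paper's argument almost step for step: the paper also reads the algorithm as a two-stage scheme, identifies the numbers $p_i$ as the empirical ridge leverage scores of the pool kernel via the Woodbury identity (your push-through), and then applies Theorem~\ref{main:krr_risk} once per stage (Corollary~\ref{main:krr_risk_cor2} for the pool, Corollary~\ref{main:krr_risk_cor1} for the leverage-weighted resample).

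The one place where the paper is cleaner than your sketch is the chaining step. You pass through $\hat f_s^{\lambda^{*}}$ (kernel ridge regression with the pool kernel at level $\lambda^{*}$) and then have to re-run the decomposition of Theorem~\ref{main:krr_risk} with Lemma~\ref{triangle_lma} ``across two different regularization levels $\lambda$ and $\lambda^{*}$''---which you correctly flag as the delicate point. The paper avoids this entirely: after the Stage-2 application it lands on $\mathcal{E}(\tilde f_l^{\lambda^{*}})\le 2\lambda^{*}+\mathcal{O}(1/\sqrt n)+\mathcal{E}(f_{\tilde{\mathcal{H}}})$, where $f_{\tilde{\mathcal{H}}}=\arg\min_{f\in\tilde{\mathcal{H}}}\mathcal{E}(f)$ is the \emph{population} risk minimizer in the pool RKHS. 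Since the Stage-1 plain-RFF estimator $f_{\alpha}^{\lambda}$ lives in $\tilde{\mathcal{H}}$, the trivial inequality $\mathcal{E}(f_{\tilde{\mathcal{H}}})\le\mathcal{E}(f_{\alpha}^{\lambda})$ lets you plug in the Stage-1 bound $\mathcal{E}(f_{\alpha}^{\lambda})\le\mathcal{E}(f_{\mathcal{H}})+2\lambda+\mathcal{O}(1/\sqrt n)$ directly, with no mixing of $\lambda$ and $\lambda^{*}$ and no second invocation of Lemma~\ref{triangle_lma}. This makes your obstacle~(ii) disappear. Your obstacle~(i), the comparison $d_{\tilde{\mathbf{K}}_s}^{\lambda^{*}}\lesssim d_{\mathbf{K}}^{\lambda^{*}}$, is genuinely needed and you handle it more carefully than the paper, which simply asserts $l\propto d_{\tilde{\mathbf{K}}}^{\lambda}$ without spelling out the link to the stated hypothesis on $l$.
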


Theorem \ref{main:algo1_conv} bounds the learning risk of the ridge regression estimator over random features generated by Algorithm~\ref{alo:opm_sampling}. We can now observe that using the minimax choice of the regularization parameter for kernel ridge regression $\lambda, \lambda^* \propto n^{-1/2}$, the number of features that Algorithm~\ref{alo:opm_sampling} needs to sample from the spectral measure of the kernel $k$ is $s \in \Omega(\sqrt{n}\log n)$. Then, the ridge regression estimator $\tilde{f}_m^{\lambda^*}$ converges with the minimax rate to the hypothesis $f_{\mathcal{H}} \in \mathcal{H}$ for $m \in \Omega (\log n \cdot \log \log n) $. 
%
This is a significant improvement compared to the spectral measure sampling (plain RFF), which requires $\Omega(n^{3/2})$ features for in-sample training and $\Omega(\sqrt{n}\log n)$ for out-of-sample test predictions. 

Theorem~\ref{main:algo1_conv_lipschitz} provides a convergence bound for kernel support vector machines and logistic regression. Compared to the previous result, the convergence rate of the learning risk, however, is at a slower $\mathcal{O}(\sqrt{\lambda} + \sqrt{\lambda^*})$ rate due to the difference in the employed loss function (similar to Section~\ref{main:orig_sampling}).
\begin{theorem}\label{main:algo1_conv_lipschitz}
Under Assumption B.1, consider learning problem with Lipschitz continuous loss, a shift-invariant kernel $k$, a sample of examples $\{(x_i, y_i)\}_{i=1}^n$, and a regularization parameter $\lambda$. Let $s$ be the number of random Fourier features in the pool of features from Algorithm~\ref{alo:opm_sampling}, sampled using the spectral measure $p(\cdot)$ from Eq.~(\ref{main:krl_dec}) and the regularization parameter $\lambda$. Denote with $\tilde{g}_m^{\lambda^*}$ the estimator obtained using a regularization parameter $\lambda^*$ and a set of random Fourier features $\{v_i\}_{i=1}^m$ returned by Algorithm~\ref{alo:opm_sampling}. If 
\[s\ \geq\ \frac{5z_0^2}{\lambda}\log\frac{(16d_{\mathbf{K}}^{\lambda})}{\delta} \quad \text{and} \quad m\ \geq \  5d_{\mathbf{K}}^{\lambda^*}\log\frac{(16d_{\mathbf{K}}^{\lambda^*})}{\delta} ,\]
then for all $\delta \in (0,1)$, with probability $1-\delta$, the learning risk of $\tilde{g}_m^{\lambda^*}$ can be upper bounded as
\begin{IEEEeqnarray}{rCl}
\mathbb{E}(l_{\tilde{g}_m^{\lambda^*}})\ \leq \ \mathbb{E}(l_{g_{\mathcal{H}}}) + \sqrt{2\lambda} + \sqrt{2\lambda^*} +\mathcal{O}\left(\frac{1}{\sqrt{n}}\right) . \nonumber
\end{IEEEeqnarray} 
\end{theorem}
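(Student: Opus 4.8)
The plan is to replicate, almost verbatim, the two-stage argument that proved Theorem~\ref{main:algo1_conv}, replacing the squared-error ingredients (Theorem~\ref{main:krr_risk} and Corollaries~\ref{main:krr_risk_cor1}--\ref{main:krr_risk_cor2}) by their Lipschitz-loss analogues (Theorem~\ref{main:svm_risk} and Corollaries~\ref{main:svm_risk_cor1}--\ref{main:svm_risk_cor2}). Draw the pool $\{v_i\}_{i=1}^s\sim p(\cdot)$ and write the associated approximate kernel as $\tilde k(x,y)=\frac1s\sum_{i=1}^s z(v_i,x)z(v_i,y)=\int_{\mathcal V} z(v,x)z(v,y)\,d\hat P(v)$, with $\hat P$ the empirical measure on the pool; this is a decomposition of the form of Eq.~(\ref{main:krl_dec}) (with $p$ replaced by $\hat P$), and the uniform bound $|z(v,x)|\le z_0$ is inherited, so all hypotheses needed to apply Theorem~\ref{main:svm_risk} with kernel $\tilde k$ remain valid. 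Let $\tilde{\mathcal H}$ denote the RKHS of $\tilde k$ and $g_{\tilde{\mathcal H}}$ the minimiser of $\mathcal E$ over $\tilde{\mathcal H}$.

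First I would handle the first (plain-RFF) stage: since $s\ge \frac{5z_0^2}{\lambda}\log\frac{16 d_{\mathbf K}^\lambda}{\delta}$, Theorem~\ref{main:svm_risk} together with Corollary~\ref{main:svm_risk_cor2} gives, with probability at least $1-\delta/2$, that the estimator $g_\beta^\lambda$ built from the pool satisfies $\mathcal E(g_\beta^\lambda)\le \mathcal E(g_{\mathcal H})+\sqrt{2\lambda}+\mathcal O(1/\sqrt n)$. Since $g_\beta^\lambda\in\tilde{\mathcal H}$ and $g_{\tilde{\mathcal H}}$ is optimal in $\tilde{\mathcal H}$, this also yields $\mathcal E(g_{\tilde{\mathcal H}})\le \mathcal E(g_{\mathcal H})+\sqrt{2\lambda}+\mathcal O(1/\sqrt n)$.

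For the second (leverage-weighted) stage I would treat $\tilde k$ as the ``true'' kernel. By Eq.~(\ref{main:lev_fun}) applied to $\tilde k$ and the Woodbury identity, exactly as in the proof of Theorem~\ref{main:algo1_conv}, the empirical ridge leverage scores of the pool (now with respect to the discrete spectral measure $\hat P$) are proportional to the weights $p_i$ computed in line~3 of Algorithm~\ref{alo:opm_sampling}, so re-sampling $l$ features according to the multinomial $(p_1/l,\dots,p_s/l)$ and running linear ridge regression with parameter $\lambda^*$ is precisely leverage weighted RFF for the kernel $\tilde k$. Hence Theorem~\ref{main:svm_risk} with Corollary~\ref{main:svm_risk_cor1}, applied with kernel $\tilde k$ and regularisation $\lambda^*$, gives $\mathcal E(\tilde g_l^{\lambda^*})\le \mathcal E(g_{\tilde{\mathcal H}})+\sqrt{2\lambda^*}+\mathcal O(1/\sqrt n)$ with probability at least $1-\delta/2$, provided $l$ is at least a constant times $d_{\tilde{\mathbf K}}^{\lambda^*}\log\frac{d_{\tilde{\mathbf K}}^{\lambda^*}}{\delta}$. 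Chaining this with the first-stage inequality, taking a union bound over the two events and a harmless rescaling of $\delta$, then delivers $\mathcal E(\tilde g_l^{\lambda^*})\le \mathcal E(g_{\mathcal H})+\sqrt{2\lambda}+\sqrt{2\lambda^*}+\mathcal O(1/\sqrt n)$.

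The main obstacle is the effective-dimension bookkeeping that converts the stated condition $l\ge 5 d_{\mathbf K}^{\lambda^*}\log\frac{16 d_{\mathbf K}^{\lambda^*}}{\delta}$ into the condition $l\gtrsim d_{\tilde{\mathbf K}}^{\lambda^*}\log\frac{d_{\tilde{\mathbf K}}^{\lambda^*}}{\delta}$ that Corollary~\ref{main:svm_risk_cor1} actually requires for the data-dependent kernel $\tilde k$: one must show $d_{\tilde{\mathbf K}}^{\lambda^*}\lesssim d_{\mathbf K}^{\lambda^*}$, which follows from the high-probability spectral equivalence $(1-\varepsilon)(\mathbf K+n\lambda I)\preceq \tilde{\mathbf K}+n\lambda I\preceq(1+\varepsilon)(\mathbf K+n\lambda I)$ underpinning the first-stage guarantee, together with the monotonicity of $\lambda\mapsto d_{\cdot}^{\lambda}$ (so that $\operatorname{Tr}[\tilde{\mathbf K}(\tilde{\mathbf K}+n\lambda^* I)^{-1}]$ is controlled by $\operatorname{Tr}[\mathbf K(\mathbf K+n\lambda^* I)^{-1}]$ up to constants). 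Everything else --- verifying the hypotheses of Theorem~\ref{main:svm_risk} for $\tilde k$, the Woodbury rewriting and normalisation of the leverage weights, and the union bound over the two randomness sources --- is routine and identical to the squared-error case.
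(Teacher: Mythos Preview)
Your proposal is correct and follows essentially the same two-stage approach as the paper: apply Theorem~\ref{main:svm_risk} with Corollary~\ref{main:svm_risk_cor2} to the plain-RFF pool to bound $\mathcal{E}(g_{\tilde{\mathcal H}})$, then treat $\tilde k$ as the ambient kernel and apply Theorem~\ref{main:svm_risk} with Corollary~\ref{main:svm_risk_cor1} to the leverage-weighted re-sample, combining via $\mathcal{E}(g_{\tilde{\mathcal H}})\le \mathcal{E}(g_\beta^\lambda)$. The paper does not spell out a separate proof for this theorem (it simply mirrors the proof of Theorem~\ref{main:algo1_conv} with the Lipschitz-loss ingredients in place of the squared-error ones), so your write-up is in fact more explicit than the paper's own treatment, including on the $d_{\tilde{\mathbf K}}^{\lambda^*}$ versus $d_{\mathbf K}^{\lambda^*}$ bookkeeping, which the paper likewise leaves implicit.
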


We conclude by pointing out that the proposed algorithm provides an interesting new trade-off between the computational cost and prediction accuracy. In particular, one can pay an upfront cost (same as plain RFF) to compute the leverage scores, re-sample significantly fewer features and employ them in the training, cross-validation, and prediction stages. This can reduce the computational cost for predictions at test points from $\Omega(\sqrt{n}\log n)$ to $\Omega(\log n \cdot \log \log n)$. Moreover, in the case where the amount of features with approximated leverage scores utilized is the same as in plain RFF, the prediction accuracy would be significantly improved as demonstrated in our experiments.

\begin{figure*}[t]
	\centering
	\begin{subfigure}{0.49\textwidth}
		\centering
		\includegraphics[width=7.cm,height=4.5cm]{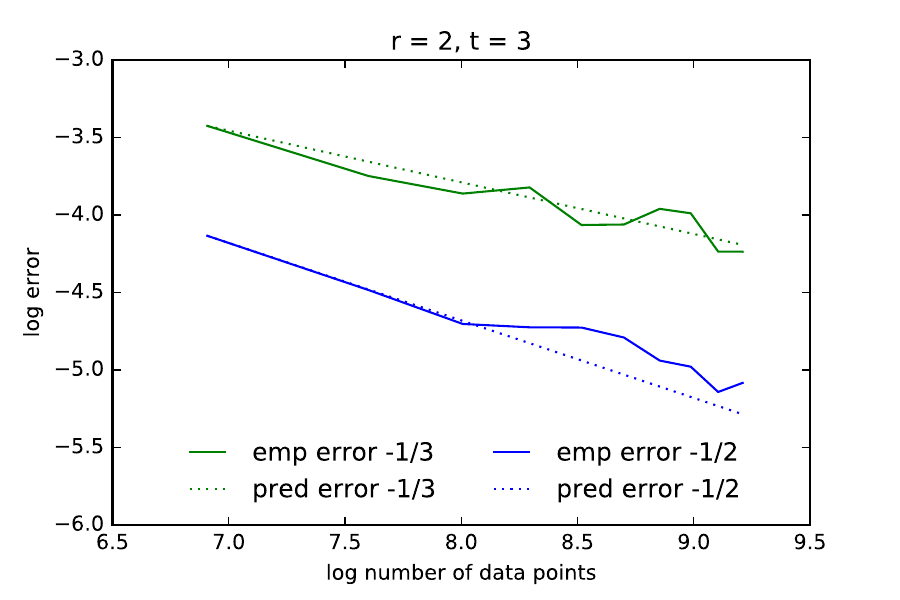}
	\end{subfigure}
	\centering
	\begin{subfigure}{0.49\textwidth}
		\centering
		\includegraphics[width=7.cm,height=4.5cm]{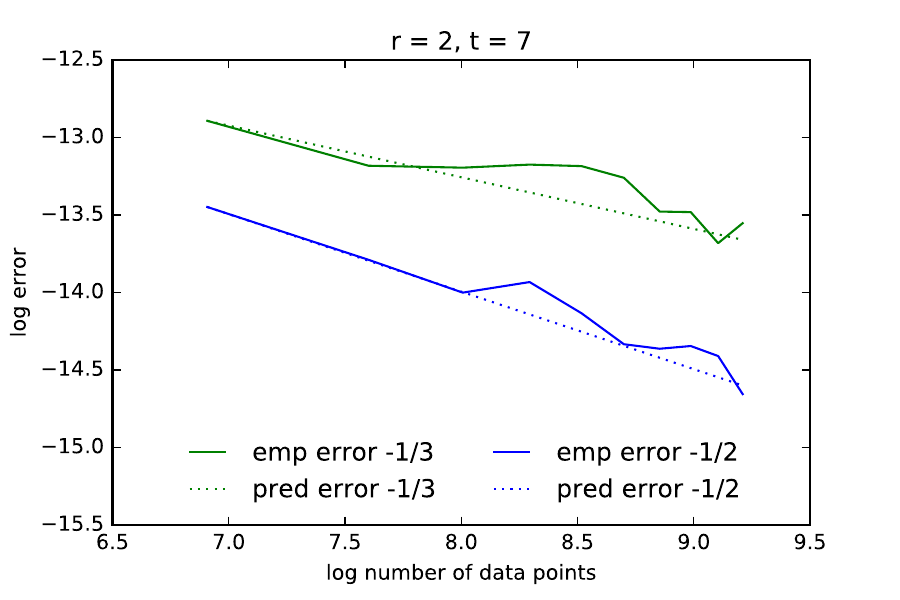}
	\end{subfigure}
	\centering
	\begin{subfigure}{0.49\textwidth}
		\centering
		\includegraphics[width=7.cm,height=4.5cm]{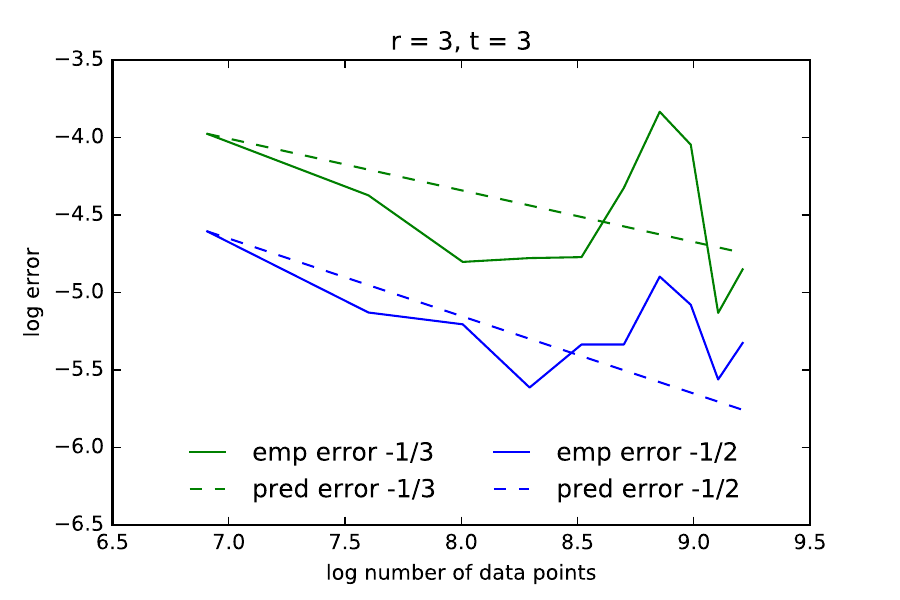}
	\end{subfigure}
	\centering
	\begin{subfigure}{0.49\textwidth}
		\centering
		\includegraphics[width=7.cm,height=4.5cm]{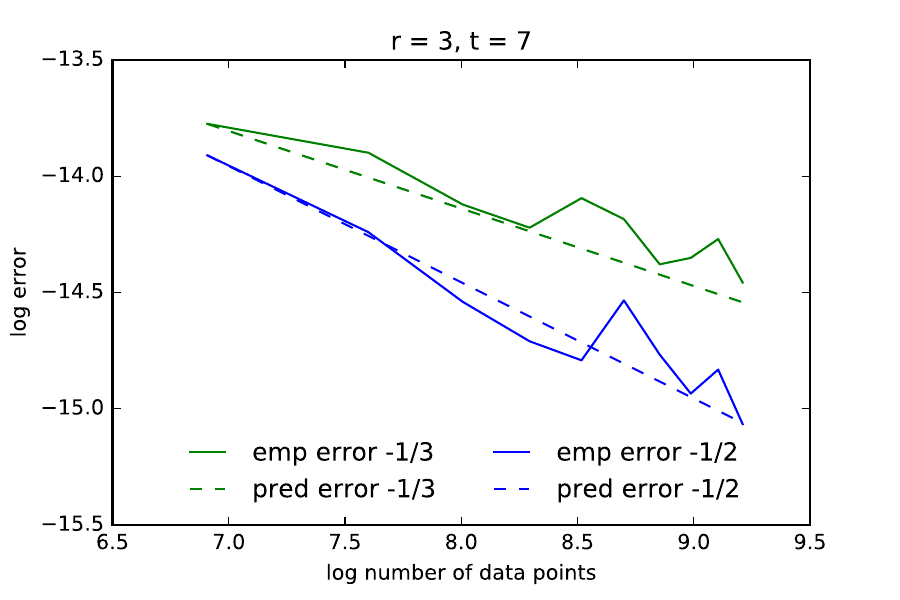}
	\end{subfigure}
	\caption{The log-log plot of the theoretical and simulated risk convergence rates, averaged over 100 repetitions.}
	\label{simulation_study}
\end{figure*}

\section{Numerical Experiments}
In this section, we report the results of our numerical experiments (on both simulated and real-world datasets) aimed at validating our theoretical results and demonstrating the utility of Algorithm \ref{alo:opm_sampling}. We first verify our results through a simulation experiment. Specifically, we consider a spline kernel of order $r$ where $k_{2r}(x,y) = 1 + \sum_{m>0}\frac{1}{m^{2r}}\cos2\pi m(x-y)$~\citep[also considered by][]{bach2017equivalence,rudi2017generalization}. If the marginal distribution of $X$ is uniform on $[0,1]$, we can show that $k_{2r}(x,y) = \int_{0}^{1}z(v,x)z(v,y)q^*(v)dv$, where $z(v,x) = k_{r}(v,x)$ and $q^*(v)$ is also uniform on $[0,1]$. We let $y$ be a Gaussian random variable with mean $f(x) = k_t(x,x_0)$ (for some $x_0 \in [0,1]$) and variance $\sigma^2$. We sample features according to $q^*(v)$ to estimate $f$ and compute the excess risk. By Theorem \ref{main:krr_risk} and Corollary \ref{main:krr_risk_cor1}, if the number of features is proportional to $d_{\mathbf{K}}^{\lambda}$ and $\lambda \propto n^{-1/2}$, we should expect the excess risk converging at $\mathcal{O}(n^{-1/2})$, or at $\mathcal{O}(n^{-1/3})$ if $\lambda \propto n^{-1/3}$. Figure \ref{simulation_study} demonstrates this with different values of $r$ and $t$.

\begin{figure*}[t]
	\centering
	\begin{subfigure}{0.49\textwidth}
		\centering
		\includegraphics[width=7.cm,height=5.5cm]{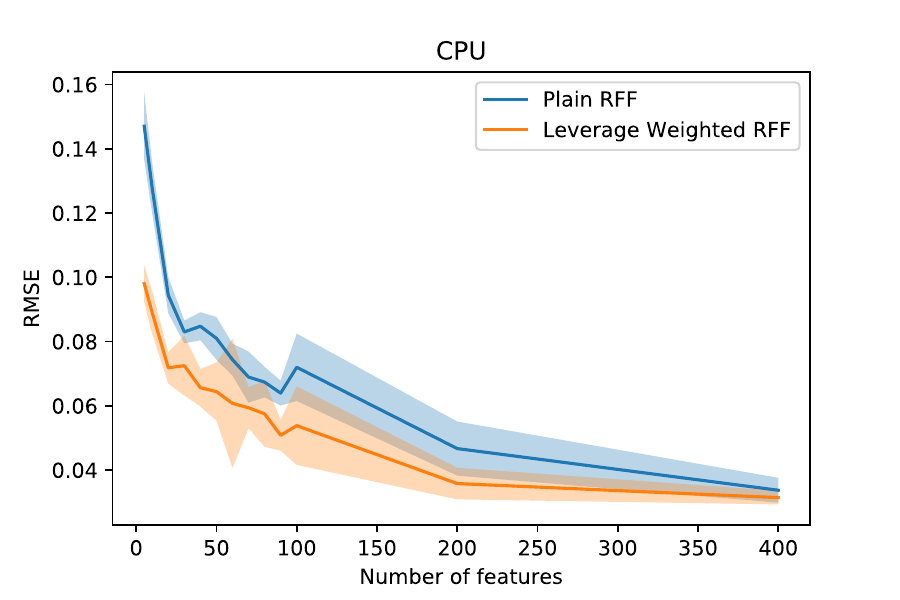}
	\end{subfigure}
	\centering
	\begin{subfigure}{0.49\textwidth}
		\centering
		\includegraphics[width=7.cm,height=5.5cm]{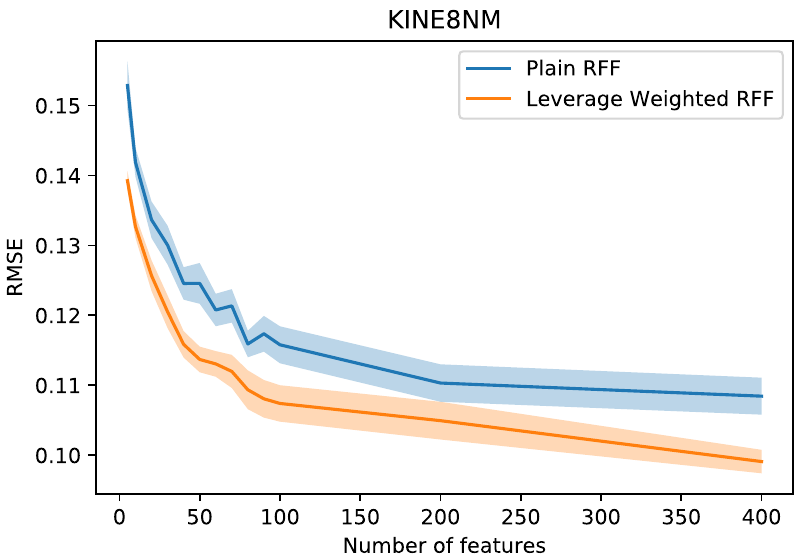}
	\end{subfigure}
	\centering
	\begin{subfigure}{0.49\textwidth}
		\centering
		\includegraphics[width=7.cm,height=5.5cm]{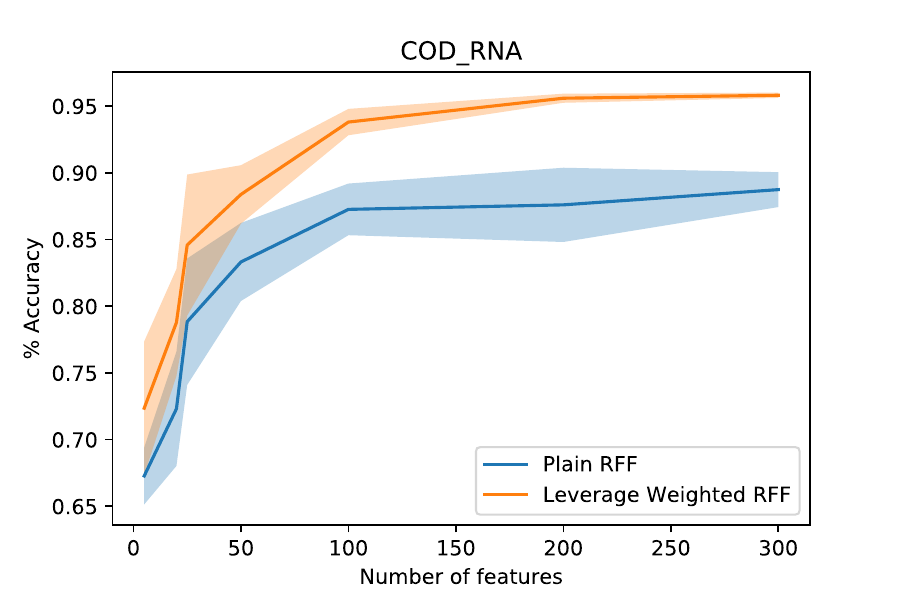}
	\end{subfigure}
	\centering
	\begin{subfigure}{0.49\textwidth}
		\centering
		\includegraphics[width=7.cm,height=5.5cm]{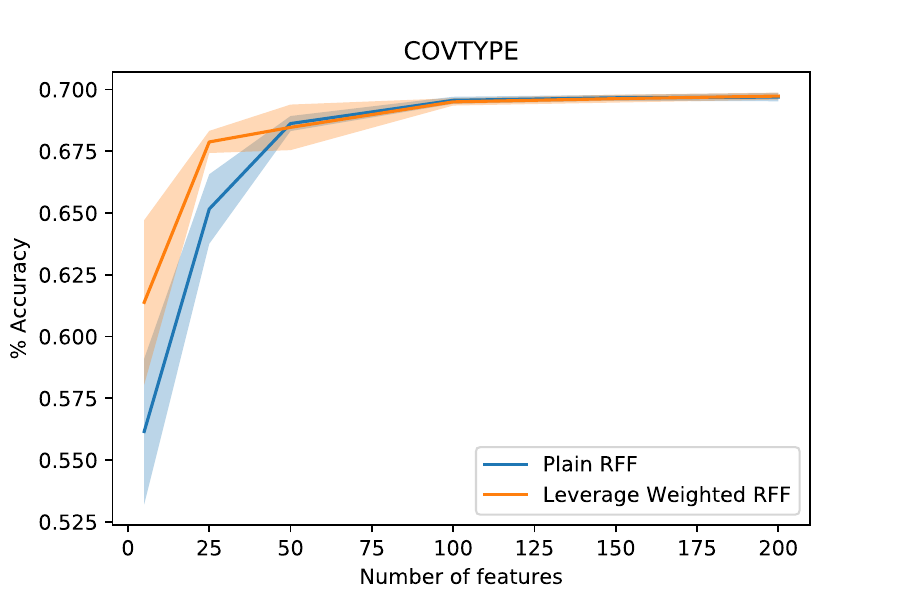}
	\end{subfigure}
	\caption{Comparison of leverage weighted and plain RFFs, with weights computed according to Algorithm \ref{alo:opm_sampling}.}
	\label{empirical_study}
\end{figure*}

Next, we make a comparison between the performances of leverage weighted (computed according to Algorithm \ref{alo:opm_sampling}) and plain RFF on real-world datasets. We use four datasets from \citet{CC01a} and~\citet{Dua:2017} for this purpose, including two for regression and two for classification: \texttt{CPU}, \texttt{KINEMATICS}, \texttt{COD-RNA} and \texttt{COVTYPE}. Except \texttt{KINEMATICS}, the other three datasets were used in \citet{yang2012nystrom} to investigate the difference between the Nystr{\"o}m method and plain RFF. We use the ridge regression and SVM package from \citet{scikit-learn} as a solver to perform our experiments. We evaluate the regression tasks using the root mean squared error and the classification ones using the average percentage of misclassified examples. The Gaussian/RBF kernel is used for all the datasets with hyper-parameter tuning via $5$-fold inner cross validation. We have repeated all the experiments $10$ times and reported the average test error for each dataset. Figure \ref{empirical_study} compares the performances of leverage weighted and plain RFF. In regression tasks, we observe that the upper bound of the confidence interval for the root mean squared error corresponding to leverage weighted RFF is below the lower bound of the confidence interval for the error corresponding to plain RFF. Similarly, the lower bound of the confidence interval for the classification accuracy of leverage weighted RFF is (most of the time) higher than the upper bound on the confidence interval for plain RFF. This indicates that leverage weighted RFFs perform statistically significantly better than plain RFFs in terms of the learning accuracy and/or prediction error.

In the final experiment, we investigate the effectiveness of our algorithm relative to plain RFFs and to that end construct the following synthetic dataset. We first generate samples $w^*$ from a multimodal Gaussian distribution where the modes are at $(-2, -2), (-2, 2), (2, -2)$, and $(2, 2)$ and where each mode has a diagonal covariance matrix of 0.5. These samples are going to be our frequencies for a RFF mapping. Next, we sample our covariates $x$ from $\mathcal{N}(0, 5*I)$. In order to generate our response variables, we map the covariates $x$ through a RFF map where the frequencies are given by samples $w^*$. We then randomly sample regression weights $\alpha_r$ from $\mathcal{N}(0, 1)$. Hence, the data generating process can be described as follows:
\[y = \alpha_r^T \phi_{w^*}(x) + \epsilon,\]
where $\epsilon \sim \mathcal{N}(0, \sigma)$ and $\phi_{w^*}$ is a RFF map with $w^*$ as the frequencies. By setting up our data generating process in this way, we are able to systematically investigate how well the proposed algorithm works. In particular, we consider learning the above described hypothesis using RFFs that correspond to a Gaussian kernel. Such a kernel corresponds to a uni-modal Gaussian distribution in the frequency domain and we will show that leverage weighted sampling is capable of selecting a sub-set of plain RFF sampled from that distribution, which covers the modes of the multimodal distribution that characterizes the data generating process.

In our experiments we have opted for the following setting: we use $50\ 000$ data points, 400 features for $w^*$, and $\sigma=0.1$.
We then run plain RFF as well as our leverage weighted RFF on this dataset. For plain RFF, we run the experiments with $1\ 000$ and $50\ 000$ frequencies/features, while with our leverage weighted RFF we only use $1\ 000$ features which have been selected from a pool consisting of $10\ 000$ plain random RFFs (i.e., a sub-sample from the original $50\ 000$ features). We carefully cross-validate both methods across a grid of hyper parameters and report the results in Table~\ref{tab:exp1}. The results confirm our theoretical findings and illustrate that learning with $1\ 000$ leverage weighted RFFs is as effective as learning with a complete pool of $50\ 000$ plain RFFs.

\begin{table}[t]
\centering\fontsize{10}{20}\selectfont
\begin{tabular}{c|c|c}
\hline

\hline
\textsc{\# of features}       & \textsc{plain rff}      & \textsc{leverage weighted rff} \\ 
\hline
$1\ 000$ & $0.13 \pm 0.06$ & $0.04 \pm 0.01$        \\
$50\ 000$ & $0.04 \pm 0.02$ & NA                    \\ 
\hline

\hline
\end{tabular}
\caption{The table summarizes the RMSE of our experiment on a synthetic dataset and illustrates the effectiveness of the proposed algorithm (i.e., leverage weighted RFF) relative to plain RFF. The reported numbers are the root mean squared error along with a confidence interval.}
\label{tab:exp1}
\end{table}

\section{Proofs}

\subsection{Proof of Proposition \ref{apn:func_norm}} \label{pf:func_norm}
\begin{proof}
Let us define a space of functions as \[\mathcal{H}_1 \coloneqq \{f \mid f(x) = \alpha z(v,x), \alpha \in \mathbb{R}\}.\] We now show that $\mathcal{H}_1$ is a reproducing kernel Hilbert space with kernel defined as $k_1(x,y)= (1/s)z(v,x)z(v,y)$, where $s$ is a constant. Define a map $M : \mathbb{R} \rightarrow \mathcal{H}_1$ such that $M\alpha = \alpha z(v,\cdot),\forall \alpha \in \mathbb{R}$. The map $M$ is a bijection, i.e. for any $f\in \mathcal{H}_1$ there exists a unique $\alpha_f \in \mathbb{R}$ such that $M^{-1}f = \alpha_f$. Now, we define an inner product on $\mathcal{H}_1$ as $$\langle f,g \rangle_{\mathcal{H}_1} = \langle \sqrt{s}M^{-1}f,\sqrt{s}M^{-1}g \rangle_{\mathbb{R}} = s\alpha_f\alpha_g.$$ It is easy to show that this is a well defined inner product and, thus, $\mathcal{H}_1$ is a Hilbert space.\\

For any instance $y$, $k_1(\cdot,y) = (1/s)z(v,\cdot) z(v,y) \in \mathcal{H}_1$, since $(1/s)z(v,y) \in \mathbb{R}$ by definition. Take any $f \in \mathcal{H}_1$ and observe that 
\begin{IEEEeqnarray}{rCl}
\langle f, k_1(\cdot,y) \rangle_{\mathcal{H}_1} &=& \langle \sqrt{s}M^{-1}f, \sqrt{s}M^{-1}k_1(\cdot,y) \rangle_{\mathbb{R}}\nonumber\\
&=& s \langle\alpha_f,1/sz(v,y) \rangle_{\mathbb{R}}\nonumber\\
&=& \alpha_f z(v,y) = f(y) . \nonumber
\end{IEEEeqnarray}
Hence, we have demonstrated the reproducing property for $\mathcal{H}_1$ and $\|f\|_{\mathcal{H}_1}^2 = s\alpha_f^2$.\\

Now, suppose we have a sample of features $\{v_i\}_{i=1}^s$. For each $v_i$, we define the reproducing kernel Hilbert space
\begin{align*}
    \mathcal{H}_i \coloneqq \{f \mid f(x) =  \alpha z(v_i,x), \alpha \in \mathbb{R}\}  
\end{align*}
with the kernel $k_i(x,y)= (1/s)z(v_i,x) z(v_i,y)$. Denoting with 
\begin{align*}
    \tilde{\mathcal{H}} =\oplus_{i=1}^s\mathcal{H}_i = \{\tilde{f}:\tilde{f}= \sum_{i=1}^sf_i, f_i \in \mathcal{H}_i\}
\end{align*} 
and using the fact that the direct sum of reproducing kernel Hilbert spaces is another reproducing kernel Hilbert space~\citep{berlinet2011reproducing}, we have that $\tilde{k}(x,y) = \sum_{i=1}^s k_{i}(x,y) = (1/s) \sum_{i=1}^s z(v_i ,x) z(v_i,y)$ is the kernel of $\tilde{\mathcal{H}}$ and that the squared norm of $\tilde{f} \in \tilde{\mathcal{H}}$ is defined as 
\begin{align*}
\begin{aligned}
& \min_{f_i \in \mathcal{H}_i\ \mid\ \tilde{f}=\sum_{i=1}^s f_i}\ \sum_{i=1}^s \|f_i\|_{\mathcal{H}_i}^2 = &\\
& \min_{\alpha_i \in \mathbb{R} \ \mid \ \tilde{f} = \sum_{i=1}^s \alpha_i z(v_i, \cdot)}\ \sum_{i=1}^s s\alpha_i^2  = \min_{\alpha_i \in \mathbb{R} \ \mid \ \tilde{f} = \sum_{i=1}^s \alpha_i z(v_i, \cdot)}\ s\|\alpha\|_2^2. &
\end{aligned}
\end{align*} 
Hence, we have that $\|\tilde{f}\|_{\tilde{\mathcal{H}}}^2\leq s\|\alpha\|_2^2$.
\end{proof}

\subsection{Proof of Theorem \ref{main:krr_risk}}
To prove Theorem \ref{main:krr_risk}, we need Lemma \ref{func_appx_opm} and Lemma \ref{triangle_lma} (proof in Appendix 
\ref{krr_risk_prof} and \ref{apn:triangle_lma_pf}) to analyse the learning risk. In Lemma \ref{func_appx_opm}, we give a general result that provides an upper bound on the approximation error between any function $f\in \mathcal{H}$ and its estimator based on random Fourier features. As discussed in Section \ref{sec:background}, we would like to approximate a function $f \in \mathcal{H}$ at observation points using $\tilde{f} \in \tilde{\mathcal{H}}$ with preferably as small function norm ($\|\tilde{f}\|_{\tilde{\mathcal{H}}}$) as possible. As such, the estimation of $\mathbf{f}_{x} = [f(x_1), \dots, f(x_n)]^T$ with $\tilde{\mathbf{f}}_x = [\tilde{f}(x_1),\dots,\tilde{f}(x_n)]^T$ can be formulated as the following optimization problem: \[ \min\nolimits_{\|\tilde{f}\| \in \tilde{\mathcal{H}}}  \frac{1}{n}\|\mathbf{f}_{x}- \tilde{\mathbf{f}}_x\|_{2}^2 + \lambda \|\tilde{f}\|_{\tilde{\mathcal{H}}}.\]
\begin{restatable}{lma}{funcAppxOpmRe}\label{func_appx_opm}
Under Assumption A.1, suppose that the conditions on sampling measure $\tilde{l}$ from Theorem \ref{main:krr_risk} apply. If \[s \geq  5d_{\tilde{l}}\log\frac{16d_{\mathbf{K}}^{\lambda}}{\delta},\] 
then for all $ \delta \in (0,1)$ and $f \in \mathcal{H}$ with $\|f\|_{\mathcal{H}} \leq 1 $, with probability greater than $1-\delta$, we have that it holds
\begin{IEEEeqnarray}{rCl} 
\inf\nolimits_{\sqrt{s}\|\beta\|_2 \leq \sqrt{2}}\ \frac{1}{n}\|\mathbf{f}_{x}- \mathbf{Z}_q \beta\|_{2}^2 \leq 2\lambda. \nonumber
\end{IEEEeqnarray}
Equivalently, this can be rewritten as:
\begin{IEEEeqnarray}{rCl} 
\sup\nolimits_{\|f\|_{\mathcal{H}}\leq 1 }\inf\nolimits_{\|\tilde{f}\|_{\tilde{\mathcal{H}}} \leq \sqrt{2}}\ \frac{1}{n} \|\mathbf{f}_x- \tilde{\mathbf{f}}_x\|_2^2 \leq 2\lambda \nonumber.
\end{IEEEeqnarray}
\end{restatable}

Denote with $\hat{f}^{\lambda}$ the empirical estimator for the kernel ridge regression problem (see Eq.~\ref{main:krl_opm}) and let $\hat{\mathbf{f}}^{\lambda}_x = [\hat{f}^{\lambda}(x_1),\dots,\hat{f}^{\lambda}(x_n)]^T$ be its in-sample prediction. The next lemma is important in demonstrating the risk convergence rate and its proof is in Appendix \ref{apn:triangle_lma_pf}.

\begin{restatable}{lma}{TriangleLma}\label{triangle_lma}
Suppose that $\{v_i\}_{i=1}^s$ are independent samples selected according to a probability density function $q(v)$. $\{v_i\}_{i=1}^s$ forms the feature matrix $\mathbf{Z}_q$ and the corresponding RKHS $\tilde{\mathcal{H}}$. Define \[\tilde{f}^{\lambda} := \min\nolimits_{\tilde{f}\in \tilde{\mathcal{H}}} \frac{1}{n}\|\hat{\mathbf{f}}^{\lambda}_x - \tilde{\mathbf{f}}_x\|^2_2 + \lambda\|\tilde{f}\|_{\tilde{\mathcal{H}}}.\] Let $\tilde{\mathbf{f}}^{\lambda}_x$ be the in-sample prediction of $\tilde{f}^{\lambda}$, then we have
\begin{IEEEeqnarray}{rCl}
\frac{1}{n}\langle Y-\hat{\mathbf{f}}^{\lambda}_x, \hat{\mathbf{f}}^{\lambda}_x -\tilde{\mathbf{f}}^{\lambda}_x \rangle \leq \lambda.\nonumber
\end{IEEEeqnarray}
\end{restatable}

Equipped with Lemma \ref{func_appx_opm} and Lemma \ref{triangle_lma}, we are now ready to prove Theorem \ref{main:krr_risk}.
\begin{proof}
The proof relies on the decomposition of the learning risk of $\mathbb{E}(l_{\tilde{f}_{\beta}^{\lambda}})$ as follows
\begin{IEEEeqnarray}{rCl}
\mathbb{E}(l_{\tilde{f}_{\beta}^{\lambda}}) &=& \mathbb{E}(l_{\tilde{f}_{\beta}^{\lambda}})-\mathbb{E}_n(l_{\tilde{f}_{\beta}^{\lambda}}) \label{apn:krr_risk_rad}\\ 
&&+\mathbb{E}_n(l_{\tilde{f}_{\beta}^{\lambda}})-\mathbb{E}_n(l_{\hat{f}^{\lambda}} )\label{apn:krr_func_error}\\
&& + \mathbb{E}_n(l_{\hat{f}^{\lambda}} )-\mathbb{E}(l_{\hat{f}^{\lambda}})\label{apn:krr_risk_fh} \\
&&+\mathbb{E}(l_{\hat{f}^{\lambda}})- \mathbb{E}(l_{f_{\mathcal{H}}}) \label{apn:krr_risk} \\
&& + \mathbb{E}(l_{f_{\mathcal{H}}}).\nonumber
\end{IEEEeqnarray} 

For (\ref{apn:krr_risk_rad}), the bound is based on the Rademacher complexity of the reproducing kernel Hilbert space $\tilde{\mathcal{H}}$, where $\tilde{\mathcal{H}}$ corresponds to the approximated kernel $\tilde{k}$. We can upper bound the Rademacher complexity of this hypothesis space with Lemma \ref{apn:rad_rkhs}. As $l(y,f(x))$ is the squared error loss function with $y$ and $f(x)$ both bounded, we have that $l$ is a Lipschitz continuous function with some constant $L>0$. Hence,
\begin{IEEEeqnarray}{rCl}
(\ref{apn:krr_risk_rad}) &\leq& R_n(l\circ \tilde{\mathcal{H})} +\sqrt{\frac{8\log(2/\delta)}{n}} \nonumber\\
&\leq& \sqrt{2}L \frac{1}{n}\mathbb{E}_X\sqrt{\text{Tr}(\tilde{\mathbf{K}})} + \sqrt{\frac{8\log(2/\delta)}{n}}\nonumber \\
&\leq& \sqrt{2}L \frac{1}{n}\sqrt{\mathbb{E}_{X}\text{Tr}(\tilde{\mathbf{K}})}  + \sqrt{\frac{8\log(2/\delta)}{n}}\nonumber \\
&\leq&  \sqrt{2}L \frac{1}{n}\sqrt{nz_0^2}+\sqrt{\frac{8\log(2/\delta)}{n}}\nonumber\\
&=&  \frac{\sqrt{2}Lz_0}{\sqrt{n}} +\sqrt{\frac{8\log(2/\delta)}{n}} \in \mathcal{O}(\frac{1}{\sqrt{n}}),
\end{IEEEeqnarray} 
where in the first inequality we applied Lemma \ref{apn:risk_rad} to $\tilde{\mathcal{H}}$, which is a reproducing kernel Hilbert space with radius $\sqrt{2}$. In addition, the bound on $R_n(l\circ \tilde{\mathcal{H})}$ also utilize the Lipschitz composition property of the Rademacher complexity \cite{bartlett2002rademacher}.  For (\ref{apn:krr_risk_fh}), a similar reasoning can be applied to the unit ball in the reproducing kernel Hilbert space $\mathcal{H}$.

For (\ref{apn:krr_func_error}), we observe that 
\begin{IEEEeqnarray}{rCl}
\mathbb{E}_n(l_{\tilde{f}_{\beta}^{\lambda}})-\mathbb{E}_n(l_{\hat{f}^{\lambda}}) &=& \frac{1}{n}\|Y-\tilde{\mathbf{f}}_{\beta}^{\lambda}\|_2^2 - \frac{1}{n}\|Y-\hat{\mathbf{f}}^{\lambda}_x\|_2^2 \nonumber \\
&=& \frac{1}{n}\inf\nolimits_{\|\tilde{f}\|_{\tilde{\mathcal{H}}} \leq \sqrt{2}}\|Y-\tilde{\mathbf{f}}_x\|_2^2 - \frac{1}{n}\|Y-\hat{\mathbf{f}}^{\lambda}_x\|_2^2 \nonumber\\
&= & \frac{1}{n}\inf\nolimits_{\|\tilde{f}\|_{\tilde{\mathcal{H}}} \leq \sqrt{2}}\Big(\|Y-\hat{\mathbf{f}}^{\lambda}_x\|_2^2+\|\hat{\mathbf{f}}^{\lambda}_x-\tilde{\mathbf{f}}_x\|_2^2 \nonumber \\
&&+ 2 \langle Y- \hat{\mathbf{f}}^{\lambda}_x, \hat{\mathbf{f}}^{\lambda}_x- \tilde{\mathbf{f}}_x\rangle \Big)- \frac{1}{n}\|Y-\hat{\mathbf{f}}^{\lambda}_x\|_2^2 \nonumber\\
&=& \frac{1}{n}\inf\nolimits_{\|\tilde{f}\|_{\tilde{\mathcal{H}}} \leq \sqrt{2}}\left(\|\hat{\mathbf{f}}^{\lambda}_x-\tilde{\mathbf{f}}_x\|_2^2+ 2\langle Y- \hat{\mathbf{f}}^{\lambda}_x, \hat{\mathbf{f}}^{\lambda}_x- \tilde{\mathbf{f}}_x\rangle \right)\nonumber \\
&\leq & \frac{1}{n}\|\hat{\mathbf{f}}^{\lambda}_x-\tilde{\mathbf{f}}^{\lambda}_x\|_2^2 + \frac{2}{n}\langle Y- \hat{\mathbf{f}}^{\lambda}_x, \hat{\mathbf{f}}^{\lambda}_x- \tilde{\mathbf{f}}^{\lambda}_x\rangle  \nonumber \\
&\leq& \inf\nolimits_{\|\tilde{f}\|_{\tilde{\mathcal{H}}} \leq \sqrt{2}}\frac{1}{n}\|\hat{\mathbf{f}}^{\lambda}_x-\tilde{\mathbf{f}}_x\|_2^2 + 2\lambda ~~~(\text{by~Lemma~\ref{triangle_lma}})\nonumber\\
&\leq & \sup\nolimits_{\|f\|_{\mathcal{H}} \leq 1}\inf\nolimits_{\|\tilde{f}\|_{\tilde{\mathcal{H}}} \leq \sqrt{2}}\frac{1}{n}\|\mathbf{f}_x-\tilde{\mathbf{f}}_x\|_2^2 +  2\lambda \nonumber \\ &\leq & 4 \lambda, \nonumber
\end{IEEEeqnarray} 
Note that in the last step we have employed Lemma \ref{func_appx_opm}.
Combining the three results, we derive 
\begin{align}
\mathbb{E}(l_{\tilde{f}_{\beta}^{\lambda}})-\mathbb{E}(l_{f_{\mathcal{H}}}) \leq 4\lambda +\mathcal{O}(\frac{1}{\sqrt{n}})+\mathbb{E}(l_{\hat{f}^{\lambda}})- \mathbb{E}(l_{f_{\mathcal{H}}}).
\end{align} 
\end{proof}

\subsection{Proof of Theorem \ref{main:sharp_risk_theo}} \label{sec:squ_refine_pf}
To prove Theorem \ref{main:sharp_risk_theo}, we rely on the notion of local Rademacher complexity. In general, the reason Theorem \ref{main:krr_risk} is not sharp is because when analysing Eqs.(\ref{apn:krr_risk_rad} and \ref{apn:krr_risk_fh}), we used the global Rademacher complexity of the whole RKHS. However, we could just analyse the local space around $f_\mathcal{H}$. In particular, we can apply Lemma \ref{apn:emp_local_rade} to Eqs.(\ref{apn:krr_risk_rad} and \ref{apn:krr_risk_fh}). 



To this end, we define the transformed function class as $l_{\mathcal{H}} \coloneqq \{(x,y) \rightarrow l(f(x),y) \mid  f\in \mathcal{H}\}$, for any reproducing kernel Hilbert space $\mathcal{H}$ and a loss function $l$. We now would like to apply Lemma \ref{apn:emp_local_rade} to the function class $l_{\mathcal{H}}$. First, it is easy to see that $\mathbb{E}(l_f^2) \leq B \mathbb{E}(l_f)$ for some constant $B$ since $l_f$ is bounded. Now if we assume that there exists a sub-root function $\hat{\psi}_n(r)$ such that it satisfies: \[\hat{\psi}_n(r) \geq c_1 \hat{R}_n\{l_f\in \mathrm{star}(l_{\mathcal{H}},0) \mid \mathbb{E}_n(l_f^2) \leq r\} + \frac{c_2}{n}\log\frac{1}{\delta} \ ,\] then with high probability, we have \[\mathbb{E}(l_f) \leq \frac{D}{D-1}\mathbb{E}_n(l_f) + \frac{6D}{B}\hat{r}^* + \frac{c_3}{n}\log\frac{1}{\delta} \ ,\] where $r^*$ is the fixed point of $\hat{\psi}_n(r)$.

Hence, our job now is to find a proper $\hat{\psi}_n(r)$ such that we can compute its fixed point $r^*$. To this end, we define $\hat{f}= \inf_{f\in \mathcal{H}}\mathbb{E}_n(l_f) =\inf_{f\in \mathcal{H}} \frac{1}{n}\sum_{i=1}^n (f(x_i)-y_i)^2 $ for given training sample $\{(x_i,y_i)\}_{i=1}^n$. We observe that for all $l_f \in l_{\mathcal{H}}$ it holds that
\begin{IEEEeqnarray}{rCl}
\mathbb{E}_n(l_f^2) &\geq & (\mathbb{E}_n(l_f))^2 ~~(x^2~ \text{is convex} ) \nonumber \\
&\geq &  (\mathbb{E}_nl_f)^2 - (\mathbb{E}_nl_{\hat{f}})^2 \nonumber \\
& \geq & 2\mathbb{E}_nl_{\hat{f}}\ \mathbb{E}_n(l_f -l_{\hat{f}})\nonumber ~~(\text{since}~a^2 -b^2 \geq 2b(a-b), \forall a,b \geq 0)\\
& \geq & \frac{2}{B}\mathbb{E}_nl_{\hat{f}} \mathbb{E}_n(f-\hat{f})^2 .\label{apn:emp_loss_bound}~~ \text{(Lemma \ref{apn:square_loss} in Appendix \ref{apn:sub_root_sec})}
\end{IEEEeqnarray}
Hence, to obtain a lower bound on $\mathbb{E}_nl_f^2$ expressed solely in terms of $\mathbb{E}_n(f -\hat{f})^2$, we need to find a lower bound of $\mathbb{E}_nl_{\hat{f}}$. Since $\mathbb{E}_n(l_{\hat{f}}) = \frac{1}{n}\sum_{i=1}^n (\hat{f}(x_i)-y_i)^2 $, we have $\mathbb{E}_P \left(\mathbb{E}_{n}l_{\hat{f}}\right) = \mathbb{E}(l_{f_{\mathcal{H}}}) \geq \sigma^2$, where we recall $\sigma^2$ is the variance of $\epsilon$ defined in Assumption A.1. In addition, for each pair of $(x_i,y_i)$, $l(\hat{f}(x_i), y_i)$ is bounded and i.i.d. Applying Hoeffding lemma, we can see that with probability greater than $1- \delta$ with $\delta \in (0,1)$, $\mathbb{E}_{n}(l_{\hat{f}})$ is lower bounded, we denote its lower bound as some constant $e_0$. Hence, with probability greater than $1-\delta$, Eq.~(\ref{apn:emp_loss_bound}) becomes \[\mathbb{E}_nl_f^2 \geq \frac{e_1}{B}\mathbb{E}_n(f-\hat{f})^2=:e_2\mathbb{E}_n(f-\hat{f})^2 . \]


As a result of this, we have the following inequality for the two function classes
\begin{IEEEeqnarray}{rCl}
\{l_f \in \mathrm{star}(l_{\mathcal{H}},0) \mid \mathbb{E}_n l_f^2 \leq r \} \subseteq \{l_f \in \mathrm{star}(l_{\mathcal{H}},0) \mid  \mathbb{E}_n (f-\hat{f})^2 \leq \frac{r}{e_2} \} .\nonumber 
\end{IEEEeqnarray}

Recall that for a function class $\mathcal{H}$, we denote its empirical Rademacher complexity by $\hat{R}_n(\mathcal{H})$. Then, we have the following inequality
\begin{align}
\begin{aligned}
& \hat{R}_n\{l_f \in \mathrm{star}(l_{\mathcal{H}},0) \mid \mathbb{E}_n l_f^2 \leq r \} \leq  &\\
&\hat{R}_n\{l_f \in \mathrm{star}(l_{\mathcal{H}},0) \mid \mathbb{E}_n (f-\hat{f})^2 \leq \frac{r}{e_2}\} = & \\
& \hat{R}_n\{l_f-l_{\hat{f}} \mid \mathbb{E}_n (f-\hat{f})^2 \leq \frac{r}{e_2} \ \wedge \ l_f \in \mathrm{star}(l_{\mathcal{H}},0) \} \leq & \\
&  L\hat{R}_n\{f-\hat{f} \mid \mathbb{E}_n (f-\hat{f})^2 \leq \frac{r}{e_2} \ \wedge \ f \in \mathcal{H} \} \leq &\\
&  L\hat{R}_n\{f-g \mid \mathbb{E}_n (f-g)^2 \leq \frac{r}{e_2}\  \wedge \ f, g \in \mathcal{H}\} \leq &\\
& 2L\hat{R}_n\{f \in \mathcal{H} \mid \mathbb{E}_nf^2 \leq \frac{1}{4}\frac{r}{e_2}\} = &\\
& 2L\hat{R}_n \{f \in \mathcal{H} \mid \mathbb{E}_nf^2 \leq e_3 r\} , & 
\end{aligned}\label{apn:rade_bound}
\end{align}
where the last inequality was proved in \citet[Corollary 6.7]{bartlett2005local}\footnote{The results come from the first three lines from the proof of Corollary 6.7.}. Now, since $\mathcal{H}$ is a reproducing kernel Hilbert space with kernel $k$, applying Lemma~\ref{apn:local_kernel} gives an upper bound of Eq.~(\ref{apn:rade_bound}). We can then derive the following theorem which gives us the proper sub-root function $\hat{\psi}_n$. The theorem is proved in Appendix \ref{apn:sub_root_reg}. 

\begin{restatable}{lma}{LocalKernelRisk}\label{apn:local_kernel_risk}
Assume $\{x_i,y_i\}_{i=1}^n$ is an independent sample from a probability measure $P$ defined on $\mathcal{X} \times \mathcal{Y}$, where $\mathcal{Y}$ has bounded range. Let $k$ be a positive definite kernel with the reproducing kernel Hilbert space $\mathcal{H}$ and let $\hat{\lambda}_1 \geq \cdots,\geq \hat{\lambda}_n$ be the eigenvalues of the normalized kernel Gram-matrix. Denote the squared error loss function by $l(f(x),y) = (f(x)-y)^2$ and fix $\delta \in (0,1)$. If \[\hat{\psi}_n(r) = 2Lc_1 \Bigg(\frac{2}{n}\sum_{i=1}^n\min\{r,\hat{\lambda}_i\}\Bigg)^{1/2} + \frac{c_2}{n}\log(\frac{1}{\delta}),\]
then for all $l_f \in l_\mathcal{H}$ and $D > 1$, with probability $1-\delta$,\[\mathbb{E}(l_f) \leq \frac{D}{D-1}\mathbb{E}_nl_f + \frac{6D}{B}\hat{r}^* + \frac{c_3}{n}\log(\frac{1}{\delta}).\] Moreover, the fixed point $\hat{r}^*$ defined with $\hat{r}^* = \hat{\psi}_n(\hat{r}^*)$ can be upper bounded by \[\hat{r}^* \leq \min_{0\leq h\leq n}\Big(e_0\frac{h}{n} + \sqrt{\frac{1}{n}\sum_{i>h}\hat{\lambda}_i}\Big),\]
where $e_0$ is a constant.
\end{restatable}

We are now ready to deliver the proof of Theorem \ref{main:sharp_risk_theo}.
\begin{proof}
We decompose $\mathbb{E}(l_{\tilde{f}_{\beta}^{\lambda}})$ with $D > 1$ as follows:
\begin{IEEEeqnarray}{rCl}
\mathbb{E}(l_{\tilde{f}_{\beta}^{\lambda}})-\mathbb{E}(l_{f_\mathcal{H}}) &\leq& \mathbb{E}(l_{\tilde{f}_{\beta}^{\lambda}}) - \frac{D}{D-1}\mathbb{E}_n(l_{\tilde{f}_{\beta}^{\lambda}}) \label{apn:sharp_rff_risk}\\
&&+ \frac{D}{D-1}(\mathbb{E}_n(l_{\tilde{f}_{\beta}^{\lambda}})- \mathbb{E}_n(l_{\hat{f}^{\lambda}}))\label{apn:sharp_rff_krr}\\
&& + \frac{D}{D-1}\mathbb{E}_n(l_{\hat{f}^{\lambda}})-\mathbb{E}(l_{\hat{f}^{\lambda}})\label{apn:sharp_krr_risk}\\
&& + \mathbb{E}(l_{\hat{f}^{\lambda}})-\mathbb{E}(l_{f_\mathcal{H}}) .\label{apn:sharp_gen_error}
\end{IEEEeqnarray}

We have already demonstrated that \[\text{Eq.}\ (\ref{apn:sharp_rff_krr}) \leq 4\frac{D}{D-1}\lambda .\]
For Eqs.(\ref{apn:sharp_rff_risk})  we apply Lemma \ref{apn:local_kernel_risk}. For Eq.~(\ref{apn:sharp_krr_risk}) use the opposite side of Lemma \ref{apn:local_kernel_risk}. The proof of the opposite side is similar to Lemma \ref{apn:local_kernel_risk} and is a direct consequence of the second part of \cite[Theorem 4.1]{bartlett2005local}. However, note that $\tilde{f}_{\beta}^{\lambda}$ and $\hat{f}^{\lambda}$ belong to different reproducing kernel Hilbert spaces. As a result, we have
\begin{IEEEeqnarray}{rCl}
\text{Eq.}\ (\ref{apn:sharp_rff_risk}) &\leq& \frac{6D}{B}\hat{r}^*_{\tilde{\mathcal{H}}} + \mathcal{O}(1/n) \nonumber\\
\text{Eq.}\ (\ref{apn:sharp_krr_risk}) &\leq& \frac{6D}{B}\hat{r}^*_{\mathcal{H}} + \mathcal{O}(1/n)\nonumber 
\end{IEEEeqnarray}
Now, combining these inequalities together we deduce
\begin{IEEEeqnarray}{rCl}
\mathbb{E}(l_{\tilde{f}_{\beta}^{\lambda}})-\mathbb{E}(l_{f_\mathcal{H}})& \leq&  \frac{6D}{B}\hat{r}^*_{\tilde{\mathcal{H}}} + \frac{6D}{B}\hat{r}^*_{\mathcal{H}} +4\frac{D}{D-1}\lambda +  \mathcal{O}(1/n) \nonumber\\
&&+ \mathbb{E}(l_{\hat{f}^{\lambda}})-\mathbb{E}(l_{f_\mathcal{H}})\nonumber\\
&\leq& \frac{12D}{B}\hat{r}^*_{\mathcal{H}} + 4\frac{D}{D-1}\lambda +  \mathcal{O}(1/n) \nonumber\\
&&+ \mathbb{E}(l_{\hat{f}^{\lambda}})-\mathbb{E}(l_{f_\mathcal{H}}) . \nonumber
\end{IEEEeqnarray}
The last inequality holds because the eigenvalues of the Gram-matrix for the reproduing kernel Hilbert space $\tilde{\mathcal{H}}$ decay faster than the eigenvalues of $\mathcal{H}$. As a result of this, we have that $\hat{r}^*_{\tilde{\mathcal{H}}} \leq \hat{r}^*_{\mathcal{H}}$. 

Now, Lemma \ref{apn:local_kernel_risk} implies that
\begin{IEEEeqnarray}{rCl}
\hat{r}^*_{\mathcal{H}}\leq \text{min}_{0\leq h\leq n}\Big(e_0\frac{h}{n} + \sqrt{\frac{1}{n}\sum_{i>h}\hat{\lambda}_i}\Big).
\end{IEEEeqnarray}
There are two cases worth discussing here. On the one hand, if the eigenvalues of $\mathbf{K}$ decay exponentially, we have \[\hat{r}^*_{\mathcal{H}}\leq O\Bigg(\frac{\log n}{n}\Bigg)\] by substituting $h = \ceil{\log n}$. Now, according to \citet{caponnetto2007optimal}
\begin{align*}
    \mathbb{E}(l_{\hat{f}^{\lambda}})-\mathbb{E}(l_{f_\mathcal{H}}) \in O\Bigg( \frac{\log n}{n}\Bigg), 
\end{align*}
and, thus, if we set $\lambda \propto \frac{\log n}{n} $ then the learning risk rate can be upper bounded by \[\mathbb{E}(l_{\tilde{f}_{\beta}^{\lambda}})-\mathbb{E}(l_{f_\mathcal{H}}) \in O\Bigg(\frac{\log n}{n}\Bigg).\]

On the other hand, if $\mathbf{K}$ has finitely many non-zero eigenvalues ($t$), we then have that \[\hat{r}^*_{\mathcal{H}} \in O\Bigg(\frac{1}{n}\Bigg),\] by substituting $h\geq t$. Moreover, in this case, $\mathbb{E}(l_{\hat{f}^{\lambda}})-\mathbb{E}(l_{f_\mathcal{H}}) \in \mathcal{O}( \frac{1}{n})$ and setting $\lambda \propto \frac{1}{n}$, we deduce that \[\mathbb{E}(l_{\tilde{f}_{\beta}^{\lambda}})-\mathbb{E}(l_{f_\mathcal{H}}) \leq O\Bigg(\frac{1}{n} \Bigg).\]
\end{proof}

\subsection{Proof of Theorem \ref{main:svm_risk}}\label{sec:risk_decom}
\begin{proof}
The proof is similar to Theorem \ref{main:krr_risk}. In particular, we decompose the expected learning risk as
\begin{IEEEeqnarray}{rCl}
\mathbb{E}(l_{g_{\beta}^{\lambda}}) &=& \mathbb{E}(l_{g_{\beta}^{\lambda}})-\mathbb{E}_n(l_{g_{\beta}^{\lambda}}) \label{apn:svm_risk_rad}\\ 
&&+\mathbb{E}_n(l_{g_{\beta}^{\lambda}})-\mathbb{E}_n(l_{g_{\mathcal{H}}})\label{apn:svm_fun_error}\\
&& + \mathbb{E}_n(l_{g_{\mathcal{H}}})-\mathbb{E}(l_{g_{\mathcal{H}}})\label{apn:svm_fh} \\
&&+ \mathbb{E}(l_{g_{\mathcal{H}}}).\nonumber
\end{IEEEeqnarray} 
Now, (\ref{apn:svm_risk_rad}) and (\ref{apn:svm_fh}) can be upper bounded similar to Theorem \ref{main:krr_risk}, through the Rademacher complexity bound from Lemma \ref{apn:risk_rad}. For (\ref{apn:svm_fun_error}), we have
\begin{align*}
\begin{aligned}
& \mathbb{E}_n(l_{g_{\beta}^{\lambda}})-\mathbb{E}_n(l_{g_{\mathcal{H}}}) =& \\ 
& \frac{1}{n}\sum_{i=1}^n l(y_i,g_{\beta}^{\lambda}(x_i))-\frac{1}{n}\sum_{i=1}^n l(y_i,g_{\mathcal{H}}(x_i)) =& \\
& \frac{1}{n}\inf\nolimits_{\|g_{\beta}\|}\sum_{i=1}^n l(y_i,g_{\beta}(x_i)) -\frac{1}{n}\sum_{i=1}^n l(y_i,g_{\mathcal{H}}(x_i)) &\\
&\leq \inf\nolimits_{\|g_{\beta}\|}\frac{1}{n}\sum_{i=1}^n |g_{\beta}(x_i)-g_{\mathcal{H}}(x_i)| &\\
&\leq \inf\nolimits_{\|g_{\beta}\|}\sqrt{\frac{1}{n}\sum_{i=1}^n |g_{\beta}(x_i)-g_{\mathcal{H}}(x_i)|^2} &\\
&\leq  \sup\nolimits_{\|g\|}\inf\nolimits_{\|g_{\beta}\|}\sqrt{\frac{1}{n}\|g-g_{\beta}\|_2^2} &\\
&\leq  \sqrt{2\lambda}. &
\end{aligned}
\end{align*} 
\end{proof}

\subsection{Proof of Theorem \ref{main:svm_refine}}
To prove Theorem \ref{main:svm_refine}, we adopt the similar strategy from the proof of Theorem \ref{main:sharp_risk_theo}, where we utilize the properties of the local Rademacher complexities by applying Lemma \ref{apn:emp_local_rade} to the decomposition of the learning risk in the Lipschitiz continuous loss case, namely Eqs. (\ref{apn:svm_risk_rad}, \ref{apn:svm_fh}). In order to do that, we need two steps. The first step is to find a proper sub-root function $\hat{\psi}_n(r)$. The second step is to find the fixed point of $\hat{\psi}_n(r)$. Hence, the following is devoted to solving these two problems. \\

First we recall that we define the transformed function class as $g_{\mathcal{H}} \coloneqq \{(x,y) \rightarrow g(f(x),y) \mid  g\in \mathcal{H}\}$ for a Lipschitz continuous loss function $l$ and $\hat{g} = \inf_{g\in\mathcal{H}} \frac{1}{n}\sum_{i=1}^nl(g(x_i), y_i)$. We observe that for any $l_g \in l_{\mathcal{H}}$, we have 
\begin{IEEEeqnarray}{rCl}
\mathbb{E}_n(l_g^2) &\geq & (\mathbb{E}_n(l_g))^2 ~~(x^2~ \text{is convex} ) \nonumber \\
&\geq &  (\mathbb{E}_nl_g)^2 - (\mathbb{E}_nl_{\hat{g}})^2 \nonumber \\
& \geq & 2\mathbb{E}_nl_{\hat{g}}\ \mathbb{E}_n(l_g -l_{\hat{g}})\nonumber ~~(\text{since}~a^2 -b^2 \geq 2b(a-b), \forall a,b \geq 0)\\
& \geq & \frac{2}{B}\mathbb{E}_nl_{\hat{g}} \mathbb{E}_n(g-\hat{g})^2 .\label{apn:emp_loss_bound_lip}~~ (\text{Assumption B.$4$})
\end{IEEEeqnarray}

By the similar reason for Eq.~(\ref{apn:emp_loss_bound}) in Section \ref{sec:squ_refine_pf}, we can see that with probability greater than $1- \delta$ with $\delta \in (0,1)$, $\mathbb{E}_n(l_{\hat{g}})$ is lower bounded by a constant, denoted as $b_0$. Hence, Eq.~(\ref{apn:emp_loss_bound_lip}) becomes \[\mathbb{E}_n (l_g^2) \geq b_1\mathbb{E}_n(g-\hat{g})^2.\]

Similar to Section \ref{sec:squ_refine_pf}, we have 
\begin{IEEEeqnarray}{rCl}
\{l_g \in l_{\mathcal{H}} \mid \mathbb{E}_n l_g^2 \leq r \} \subseteq \{l_g \in l_{\mathcal{H}} \mid  \mathbb{E}_n (g-\hat{g})^2 \leq \frac{r}{b_1} \} .\nonumber 
\end{IEEEeqnarray}

This further implies that
\begin{IEEEeqnarray}{rCl}
\hat{R}_n\{l_g \in l_{\mathcal{H}} \mid \mathbb{E}_n l_g^2 \leq r \} \leq  2L\hat{R}_n \{g \in \mathcal{H} \mid \mathbb{E}_ng^2 \leq b_2 r\}, \nonumber
\end{IEEEeqnarray}
where we recall $L$ is the Lipschitz constant of the loss function $l$. By appealing to Lemma \ref{apn:local_kernel}, we obtain an upper bound of $\hat{R}_n \{g \in \mathcal{H} \mid \mathbb{E}_ng^2 \leq b_2 r\}$. Apply Lemma \ref{apn:local_kernel_risk} to the function class $l_{\mathcal{H}}$, we have with probability greater than $1 - \delta$, for all $l_g \in l_\mathcal{H}$ and $D > 1$, 
\begin{IEEEeqnarray}{rCl}
\mathbb{E}(l_g) \leq \frac{D}{D-1}\mathbb{E}_nl_g + \frac{12D}{B}\hat{r}^* + \frac{c_1}{n}\log(\frac{1}{\delta}). \label{eqn:local_risk_lip}
\end{IEEEeqnarray}
Moreover, the fixed point $\hat{r}^*$ can be upper bounded by \[\hat{r}^* \leq \min_{0\leq h\leq n}\Big(b_0\frac{h}{n} + \sqrt{\frac{1}{n}\sum_{i>h}\hat{\lambda}_i}\Big),\]
where $b_0$ is a constant. 

With this result in mind, we analyse the risk decomposition as follows:
In particular, we decompose the expected learning risk as
\begin{IEEEeqnarray}{rCl}
\mathbb{E}(l_{g_{\beta}^{\lambda}}) &=& \mathbb{E}(l_{g_{\beta}^{\lambda}})-\frac{D}{D-1} \mathbb{E}_n(l_{g_{\beta}^{\lambda}}) \label{apn:svm_risk_rad_refin}\\ 
&&+\frac{D}{D-1} \mathbb{E}_n(l_{g_{\beta}^{\lambda}})-\frac{D}{D-1}\mathbb{E}_n(l_{g_{\mathcal{H}}})\label{apn:svm_fun_error_refin}\\
&& + \frac{D}{D-1}\mathbb{E}_n(l_{g_{\mathcal{H}}})-\mathbb{E}(l_{g_{\mathcal{H}}})\label{apn:svm_fh_refin} \\
&&+ \mathbb{E}(l_{g_{\mathcal{H}}}).\nonumber
\end{IEEEeqnarray} 

For Eq.~(\ref{apn:svm_fun_error_refin}), we can upper bound this by $\frac{D}{D-1}\sqrt{2\lambda}$ according to Section \ref{sec:risk_decom}. For Eq.~(\ref{apn:svm_risk_rad_refin}), since $g_{\beta}^{\lambda} \in \tilde{\mathcal{H}}$, we can upper bound using Eq.~(\ref{eqn:local_risk_lip}) applied to $\tilde{\mathcal{H}}$. We repeat the same procedure for Eq.~(\ref{apn:svm_fh_refin}) using Eq.~(\ref{eqn:local_risk_lip}) applied to $\mathcal{H}$. Combing all of the results, we obtain that with probability greater than $1-\delta$, 
\begin{IEEEeqnarray}{rCl}
\mathbb{E}(l_{g_{\beta}^{\lambda}}) \leq \frac{12D}{B_0}\hat{r}_{\mathcal{H}}^* + \frac{D}{D-1}\sqrt{2\lambda} + \mathcal{O}(1/n) + \mathbb{E}(l_{g_{\mathcal{H}}}).
\end{IEEEeqnarray} 
where $\hat{r}^*_{\mathcal{H}}$ can be upper bounded as:
\begin{IEEEeqnarray}{rCl}
\hat{r}^*_{\mathcal{H}}\leq \text{min}_{0\leq h\leq n}\Big(b_0\frac{h}{n} + \sqrt{\frac{1}{n}\sum_{i>h}\hat{\lambda}_i}\Big).
\end{IEEEeqnarray}

\subsection{Proof of Theorem \ref{main:algo1_conv}}
\begin{proof}
Suppose the examples $\{x_i,y_i\}_{i=1}^n$ are independent and identically distributed and that the kernel $k$ can be decomposed as in Eq.~(\ref{main:krl_dec}). Let $\{v_i\}_{i=1}^s$ be an independent sample selected according to $p(v)$. Then, using these $s$ features we can approximate the kernel as 
\begin{IEEEeqnarray}{rCl}
\tilde{k}(x,y) &=& \frac{1}{s}\sum_{i=1}^sz(v_i,x)z(v_i,y) \nonumber\\
&=& \int_{V}z(v,x)z(v,y)d\hat{P}(v), \label{apn:krl_emp_dec}
\end{IEEEeqnarray} 
where $\hat{P}$ is the empirical measure on $\{v_i\}_{i=1}^s$. Denote the reproducing kernel Hilbert space associated with kernel $\tilde{k}$ by $\tilde{\mathcal{H}}$ and suppose that kernel ridge regression was performed with the approximate kernel $\tilde{k}$. 
From Theorem~\ref{main:krr_risk} and Corollary~\ref{main:krr_risk_cor2}, it follows that if \[s \geq  \frac{7z_0^2}{\lambda}\log\frac{16d_{\mathbf{K}}^{\lambda}}{\delta},\] then for all $\delta \in (0,1)$, with probability $1-\delta$, the risk convergence rate of the kernel ridge regression estimator based on random Fourier features can be upper bounded by
\begin{IEEEeqnarray}{rCl}
\mathbb{E}(l_{f_{\alpha}^{\lambda}})&\leq&4\lambda+\mathcal{O}\Bigg(\frac{1}{\sqrt{n}}\Bigg) + \mathbb{E}(l_{f_{\mathcal{H}}}). \label{apn:1_lev_risk}
\end{IEEEeqnarray}
Note that in Eq.~(\ref{apn:1_lev_risk}) we have used the fact that $\mathbb{E}(l_{f_{\mathcal{H}}})$ differs with $\mathbb{E}(l_{\hat{f}^{\lambda}})$ by at most $\mathcal{O}(1/\sqrt{n})$. Let $f_{\tilde{\mathcal{H}}}$ be the function in the reproducing kernel Hilbert space $\tilde{\mathcal{H}}$ achieving the minimal risk, i.e., $\mathbb{E}(l_{f_{\tilde{\mathcal{H}}}}) = \inf_{f\in \tilde{\mathcal{H}}}\mathbb{E}(l_f)$. We now treat $\tilde{k}$ as the actual kernel that can be decomposed via the expectation with respect to the empirical measure in Eq.~(\ref{apn:krl_emp_dec}) and re-sample features from the set $\{v_i\}_{i=1}^s$, but this time the sampling is performed using the optimal ridge leverage scores. As $\tilde{k}$ is the actual kernel, it follows from Eq.~(\ref{main:lev_fun}) that the leverage function in this case can be defined by
\[l_{\lambda}(v) = p(v)\mathbf{z}_{v}(\mathbf{x})^{T}(\tilde{\mathbf{K}}+n\lambda I)^{-1}\mathbf{z}_{v}(\mathbf{x}).\]
Now, observe that
\begin{IEEEeqnarray}{rCl}
l_{\lambda}(v_i) = p(v_i)[\mathbf{Z}_s^T(\tilde{\mathbf{K}}+n\lambda I)^{-1}\mathbf{Z}_s]_{ii}\nonumber
\end{IEEEeqnarray}
where $[A]_{ii}$ denotes the $i$th diagonal element of matrix $A$. As $\tilde{\mathbf{K}} = (1/s)\mathbf{Z}_s\mathbf{Z}_s^T$, then the Woodbury inversion lemma implies that
\begin{IEEEeqnarray}{rCl}
l_{\lambda}(v_i) = p(v_i)[\mathbf{Z}_s^T\mathbf{Z}_s(\frac{1}{s}\mathbf{Z}_s^T\mathbf{Z}_s+n\lambda I)^{-1}]_{ii}.\nonumber
\end{IEEEeqnarray}
If we let $l_{\lambda}(v_i) = p_i$, then the optimal distribution for $\{v_i\}_{i=1}^s$ is multinomial with individual probabilities $q(v_i) = p_i/(\sum_{j=1}^sp_j)$. Hence, we can re-sample $m$ features according to $q(v)$ and perform linear ridge regression using the sampled leverage weighted features. Denoting this estimator with $\tilde{f}_m^{\lambda^*}$ and the corresponding number of degrees of freedom with $ d_{\tilde{\mathbf{K}}}^{\lambda} = \text{Tr}\tilde{\mathbf{K}}(\tilde{\mathbf{K}}+n\lambda)^{-1}$, we deduce (using Theorem~\ref{main:krr_risk} and Corollary~\ref{main:krr_risk_cor1})
\begin{IEEEeqnarray}{rCl}
\mathbb{E}(l_{\tilde{f}_m^{\lambda^*}})&\leq& 4\lambda^*+\mathcal{O}\Bigg(\frac{1}{\sqrt{n}}\Bigg)+ \mathbb{E}(l_{f_{\tilde{\mathcal{H}}}}), \label{apn:2_lev_risk}
\end{IEEEeqnarray}
with the number of features $l \propto d_{\tilde{\mathbf{K}}}^{\lambda}$, and we again used the fact that $\mathbb{E}(l_{f_{\tilde{\mathcal{H}}}})$ differs with $\mathbb{E}(l_{{\tilde{f}^{\lambda*}}_m})$ by at most $\mathcal{O}(1/\sqrt{n})$. 

As $f_{\tilde{\mathcal{H}}}$ is the function achieving the minimal risk over $\tilde{\mathcal{H}}$, we can conclude that $\mathbb{E}(l_{f_{\tilde{\mathcal{H}}}}) \leq \mathbb{E}(l_{f_{\alpha}^{\lambda}})$. Now, combining Eq.~(\ref{apn:1_lev_risk}) and (\ref{apn:2_lev_risk}), we obtain the final bound on $\mathbb{E}(l_{\tilde{f}_m^{\lambda^*}})$.
\end{proof}

\section{Discussion}
We have investigated the generalization properties of learning with random Fourier features in the context of different kernel methods: kernel ridge regression, support vector machines, and kernel logistic regression. In particular, we have given generic bounds on the number of features required for consistency of learning with two sampling strategies: \emph{leverage weighted} and \emph{plain random Fourier features}. The derived convergence rates account for the complexity of the target hypothesis and the structure of the reproducing kernel Hilbert space with respect to the marginal distribution of a data-generating process. In addition to this, we have also proposed an algorithm for fast approximation of empirical leverage scores and demonstrated its superiority in both theoretical and empirical analyses.

For kernel ridge regression, \citet{avron2017random} and \citet{rudi2017generalization} have extensively analyzed the performance of learning with random Fourier features. In particular, \citet{avron2017random} have shown that $o(n)$ features are enough to guarantee a good estimator in terms of its\emph{ empirical risk}. The authors of that work have also proposed a modified data-dependent sampling distribution and demonstrated that a further reduction in the number of random Fourier features is possible for leverage weighted sampling. However, their results do not provide a convergence rate for the \emph{learning risk} of the estimator which could still potentially imply that computational savings come at the expense of statistical efficiency. Furthermore, the modified sampling distribution can only be used in the $1$D Gaussian kernel case. While~\citet{avron2017random} focus on bounding the empirical risk of an estimator, \citet{rudi2017generalization} give a comprehensive study of the generalization properties of random Fourier features for kernel ridge regression by bounding the learning risk of an estimator. The latter work for the first time shows that $\Omega(\sqrt{n}\log n)$ features are sufficient to guarantee the (kernel ridge regression) minimax rate and observes that further improvements to this result are possible by relying on a data-dependent sampling strategy. However, such a distribution is defined in a complicated way and it is not clear how one could devise a practical algorithm by sampling from it. While in our analysis of learning with random Fourier features we also bound the learning risk of an estimator, the analysis is not restricted to kernel ridge regression and covers other kernel methods such as support vector machines and kernel logistic regression. In addition to this, our derivations are much simpler compared to~\citet{rudi2017generalization} and provide sharper bounds in some cases. More specifically, we have demonstrated that $\Omega(\sqrt{n}\log\log n)$ features are sufficient to attain the minimax rate in the case where eigenvalues of the Gram matrix have a geometric/exponential decay. In other cases, we have recovered the results from \citet{rudi2017generalization}. Another important difference with respect to this work is that we consider a data-dependent sampling distribution based on empirical ridge leverage scores, showing that it can further reduce the number of features and in this way provide a more effective estimator. 

In addition to the squared error loss, we also investigate the properties of learning with random Fourier features using the Lipschitz continuous loss functions. Both \citet{rahimi2009weighted} and \citet{bach2017equivalence} have studied this problem setting and obtained that $\Omega(n)$ features are needed to ensure $\mathcal{O}(1/\sqrt{n})$ learning risk convergence rate. Moreover, \citet{bach2017equivalence} has defined an optimal sampling distribution by referring to the leverage score function based on the integral operator and shown that the number of features can be significantly reduced when the eigenvalues of a Gram matrix exhibit a fast decay. The $\Omega(n)$ requirement on the number of features is too restrictive and precludes any computational savings. Also, the optimal sampling distribution is typically intractable. In our analysis, through assuming the realizable case, we have demonstrated that for the first time, $\mathcal{O}(\sqrt{n})$ features are possible to guarantee $\mathcal{O}(\frac{1}{\sqrt{n}})$ risk convergence rate. In extreme cases, where the complexity of target function is small, constant features is enough to guarantee fast risk convergence. Moreover, we also provide a much simpler form of the empirical leverage score distribution and demonstrate that the number of features can be significantly smaller than $n$, without incurring any loss of statistical efficiency.

Having given risk convergence rates for learning with random Fourier features, we provide a fast and practical algorithm for sampling them in a data-dependent way, such that they approximate the ridge leverage score distribution. In the kernel ridge regression setting, our theoretical analysis demonstrates that, compared to spectral measure sampling, significant computational savings can be achieved while preserving the statistical properties of the estimators. Furthermore, we verify our findings empirically on simulated and real-world datasets. An interesting extension of our empirical analysis would be a thorough and comprehensive comparison of the proposed leverage weighted sampling scheme to other recently proposed data-dependent strategies for selecting good features~\citep[e.g.,][]{rudi2018fast}, as well as a comparison to the Nystr\"om method. 




\vspace*{2ex}
{
\small\selectfont
\noindent \textbf{Acknowledgments:} We thank Fadhel Ayed, Qinyi Zhang and Anthony Caterini for fruitful discussion on some of the results as well as for proofreading of this paper. This work was supported by the EPSRC and MRC through the OxWaSP CDT programme (EP/L016710/1). Dino Oglic was supported in part by EPSRC grant EP/R012067/1. Zhu Li was supported in part by Huawei UK.
}

\nocite{OglicG16}

{
\small
\bibliography{ka}

\begin{thebibliography}{39}
\providecommand{\natexlab}[1]{#1}
\providecommand{\url}[1]{\texttt{#1}}
\expandafter\ifx\csname urlstyle\endcsname\relax
  \providecommand{\doi}[1]{doi: #1}\else
  \providecommand{\doi}{doi: \begingroup \urlstyle{rm}\Url}\fi

\bibitem[Alaoui and Mahoney(2015)]{alaoui2015fast}
Ahmed Alaoui and Michael~W Mahoney.
\newblock Fast randomized kernel ridge regression with statistical guarantees.
\newblock In \emph{Advances in Neural Information Processing Systems}, pages
  775--783, 2015.

\bibitem[Avron et~al.(2017)Avron, Kapralov, Musco, Musco, Velingker, and
  Zandieh]{avron2017random}
Haim Avron, Michael Kapralov, Cameron Musco, Christopher Musco, Ameya
  Velingker, and Amir Zandieh.
\newblock Random {F}ourier features for kernel ridge regression: Approximation
  bounds and statistical guarantees.
\newblock In \emph{International Conference on Machine Learning}, pages
  253--262, 2017.

\bibitem[Bach(2013)]{bach2013sharp}
Francis Bach.
\newblock Sharp analysis of low-rank kernel matrix approximations.
\newblock In \emph{Conference on Learning Theory}, pages 185--209, 2013.

\bibitem[Bach(2017{\natexlab{a}})]{bach2017breaking}
Francis Bach.
\newblock Breaking the curse of dimensionality with convex neural networks.
\newblock \emph{Journal of Machine Learning Research}, 18\penalty0
  (19):\penalty0 1--53, 2017{\natexlab{a}}.

\bibitem[Bach(2017{\natexlab{b}})]{bach2017equivalence}
Francis Bach.
\newblock On the equivalence between kernel quadrature rules and random feature
  expansions.
\newblock \emph{Journal of Machine Learning Research}, 18\penalty0
  (21):\penalty0 1--38, 2017{\natexlab{b}}.

\bibitem[Bartlett and Mendelson(2002)]{bartlett2002rademacher}
Peter~L Bartlett and Shahar Mendelson.
\newblock Rademacher and {G}aussian complexities: Risk bounds and structural
  results.
\newblock \emph{Journal of Machine Learning Research}, 3\penalty0
  (Nov):\penalty0 463--482, 2002.

\bibitem[Bartlett et~al.(2005)Bartlett, Bousquet, Mendelson,
  et~al.]{bartlett2005local}
Peter~L Bartlett, Olivier Bousquet, Shahar Mendelson, et~al.
\newblock Local {R}ademacher complexities.
\newblock \emph{The Annals of Statistics}, 33\penalty0 (4):\penalty0
  1497--1537, 2005.

\bibitem[Bartlett et~al.(2006)Bartlett, Jordan, and
  McAuliffe]{bartlett2006convexity}
Peter~L Bartlett, Michael~I Jordan, and Jon~D McAuliffe.
\newblock Convexity, classification, and risk bounds.
\newblock \emph{Journal of the American Statistical Association}, 101\penalty0
  (473):\penalty0 138--156, 2006.

\bibitem[Berlinet and Thomas-Agnan(2011)]{berlinet2011reproducing}
Alain Berlinet and Christine Thomas-Agnan.
\newblock \emph{Reproducing kernel Hilbert spaces in probability and
  statistics}.
\newblock Springer Science \& Business Media, 2011.

\bibitem[Bochner(1932)]{Bochner32}
Salomon Bochner.
\newblock Vorlesungen \"uber {F}ouriersche {I}ntegrale.
\newblock In \emph{Akademische Verlagsgesellschaft}, 1932.

\bibitem[Caponnetto and De~Vito(2007)]{caponnetto2007optimal}
Andrea Caponnetto and Ernesto De~Vito.
\newblock Optimal rates for the regularized least-squares algorithm.
\newblock \emph{Foundations of Computational Mathematics}, 7\penalty0
  (3):\penalty0 331--368, 2007.

\bibitem[Chang and Lin(2011)]{CC01a}
Chih-Chung Chang and Chih-Jen Lin.
\newblock {LIBSVM}: A library for support vector machines.
\newblock \emph{ACM Transactions on Intelligent Systems and Technology},
  2:\penalty0 27:1--27:27, 2011.
\newblock Software available at \url{http://www.csie.ntu.edu.tw/~cjlin/libsvm}.

\bibitem[Dheeru and Karra~Taniskidou(2017)]{Dua:2017}
Dua Dheeru and Efi Karra~Taniskidou.
\newblock {UCI} machine learning repository, 2017.
\newblock URL \url{http://archive.ics.uci.edu/ml}.

\bibitem[Hastie(2017)]{hastie2017generalized}
Trevor~J Hastie.
\newblock Generalized additive models.
\newblock In \emph{Statistical models in S}, pages 249--307. Routledge, 2017.

\bibitem[Koltchinskii(2011)]{koltchinskii2011oracle}
Vladimir Koltchinskii.
\newblock \emph{Oracle Inequalities in Empirical Risk Minimization and Sparse
  Recovery Problems: Ecole d’Et{\'e} de Probabilit{\'e}s de Saint-Flour
  XXXVIII-2008}, volume 2033.
\newblock Springer Science \& Business Media, 2011.

\bibitem[Mahoney and Drineas(2009)]{mahoney2009cur}
Michael~W Mahoney and Petros Drineas.
\newblock C{UR} matrix decompositions for improved data analysis.
\newblock \emph{Proceedings of the National Academy of Sciences}, 106\penalty0
  (3):\penalty0 697--702, 2009.

\bibitem[Mendelson(2002)]{mendelson2002improving}
Shahar Mendelson.
\newblock Improving the sample complexity using global data.
\newblock \emph{IEEE transactions on Information Theory}, 48\penalty0
  (7):\penalty0 1977--1991, 2002.

\bibitem[Nystr{\"{o}}m(1930)]{Nystrom30}
Evert~J. Nystr{\"{o}}m.
\newblock {\"{U}}ber die praktische {A}ufl{\"{o}}sung von {I}ntegralgleichungen
  mit {A}nwendungen auf {R}andwertaufgaben.
\newblock \emph{Acta Mathematica}, 1930.

\bibitem[Oglic and G{\"{a}}rtner(2016)]{OglicG16}
Dino Oglic and Thomas G{\"{a}}rtner.
\newblock Greedy feature construction.
\newblock In \emph{Advances in Neural Information Processing Systems 29}, pages
  3945--3953. Curran Associates, Inc., 2016.

\bibitem[Pedregosa et~al.(2011)Pedregosa, Varoquaux, Gramfort, Michel, Thirion,
  Grisel, Blondel, Prettenhofer, Weiss, Dubourg, Vanderplas, Passos,
  Cournapeau, Brucher, Perrot, and Duchesnay]{scikit-learn}
F.~Pedregosa, G.~Varoquaux, A.~Gramfort, V.~Michel, B.~Thirion, O.~Grisel,
  M.~Blondel, P.~Prettenhofer, R.~Weiss, V.~Dubourg, J.~Vanderplas, A.~Passos,
  D.~Cournapeau, M.~Brucher, M.~Perrot, and E.~Duchesnay.
\newblock Scikit-learn: Machine learning in {P}ython.
\newblock \emph{Journal of Machine Learning Research}, 12:\penalty0 2825--2830,
  2011.

\bibitem[Rahimi and Recht(2007)]{rahimi2007random}
Ali Rahimi and Benjamin Recht.
\newblock Random features for large-scale kernel machines.
\newblock In \emph{Advances in neural information processing systems}, pages
  1177--1184, 2007.

\bibitem[Rahimi and Recht(2009)]{rahimi2009weighted}
Ali Rahimi and Benjamin Recht.
\newblock Weighted sums of random kitchen sinks: Replacing minimization with
  randomization in learning.
\newblock In \emph{Advances in neural information processing systems}, pages
  1313--1320, 2009.

\bibitem[Rudi and Rosasco(2017)]{rudi2017generalization}
Alessandro Rudi and Lorenzo Rosasco.
\newblock Generalization properties of learning with random features.
\newblock In \emph{Advances in Neural Information Processing Systems}, pages
  3218--3228, 2017.

\bibitem[Rudi et~al.(2015)Rudi, Camoriano, and Rosasco]{rudi2015less}
Alessandro Rudi, Raffaello Camoriano, and Lorenzo Rosasco.
\newblock Less is more: Nystr{\"o}m computational regularization.
\newblock In \emph{Advances in Neural Information Processing Systems}, pages
  1657--1665, 2015.

\bibitem[Rudi et~al.(2017)Rudi, Carratino, and Rosasco]{rudi2017falkon}
Alessandro Rudi, Luigi Carratino, and Lorenzo Rosasco.
\newblock Falkon: An optimal large scale kernel method.
\newblock In \emph{Advances in Neural Information Processing Systems}, pages
  3891--3901, 2017.

\bibitem[Rudi et~al.(2018)Rudi, Calandriello, Carratino, and
  Rosasco]{rudi2018fast}
Alessandro Rudi, Daniele Calandriello, Luigi Carratino, and Lorenzo Rosasco.
\newblock On fast leverage score sampling and optimal learning.
\newblock In \emph{Advances in Neural Information Processing Systems}, pages
  5672--5682, 2018.

\bibitem[Rudin(2017)]{rudin2017fourier}
Walter Rudin.
\newblock \emph{Fourier analysis on groups}.
\newblock Courier Dover Publications, 2017.

\bibitem[Sch{\"o}lkopf and Smola(2001)]{Scholkopf01}
Bernhard Sch{\"o}lkopf and Alexander~J. Smola.
\newblock \emph{Learning with kernels: {S}upport vector machines,
  regularization, optimization, and beyond}.
\newblock MIT Press, 2001.

\bibitem[Sch{\"o}lkopf et~al.(2004)Sch{\"o}lkopf, Tsuda, and
  Vert]{scholkopf2004kernel}
Bernhard Sch{\"o}lkopf, Koji Tsuda, and Jean-Philippe Vert.
\newblock \emph{Kernel methods in computational biology}.
\newblock MIT press, 2004.

\bibitem[Smola and Sch\"olkopf(2000)]{Smola00}
Alexander~J. Smola and Bernhard Sch\"olkopf.
\newblock Sparse greedy matrix approximation for machine learning.
\newblock In \emph{Proceedings of the 17th International Conference on Machine
  Learning}, 2000.

\bibitem[Sriperumbudur and Szab{\'o}(2015)]{sriperumbudur2015optimal}
Bharath Sriperumbudur and Zolt{\'a}n Szab{\'o}.
\newblock Optimal rates for random {F}ourier features.
\newblock In \emph{Advances in Neural Information Processing Systems}, pages
  1144--1152, 2015.

\bibitem[Steinwart and Christmann(2008)]{steinwart2008support}
Ingo Steinwart and Andreas Christmann.
\newblock \emph{Support vector machines}.
\newblock Springer Science \& Business Media, 2008.

\bibitem[Sun et~al.(2018)Sun, Gilbert, and Tewari]{sun2018but}
Yitong Sun, Anna Gilbert, and Ambuj Tewari.
\newblock But how does it work in theory? linear svm with random features.
\newblock In \emph{Advances in Neural Information Processing Systems}, pages
  3379--3388, 2018.

\bibitem[Sutherland and Schneider(2015)]{sutherland2015error}
Dougal~J Sutherland and Jeff Schneider.
\newblock On the error of random {F}ourier features.
\newblock In \emph{Proceedings of the Thirty-First Conference on Uncertainty in
  Artificial Intelligence}, pages 862--871. AUAI Press, 2015.

\bibitem[Tropp(2015)]{tropp2015introduction}
Joel~A Tropp.
\newblock An introduction to matrix concentration inequalities.
\newblock \emph{Foundations and Trends{\textregistered} in Machine Learning},
  8\penalty0 (1-2):\penalty0 1--230, 2015.

\bibitem[Tsybakov et~al.(2004)]{tsybakov2004optimal}
Alexander~B Tsybakov et~al.
\newblock Optimal aggregation of classifiers in statistical learning.
\newblock \emph{The Annals of Statistics}, 32\penalty0 (1):\penalty0 135--166,
  2004.

\bibitem[Williams and Seeger(2001)]{Williams01}
Christopher K.~I. Williams and Matthias Seeger.
\newblock Using the {N}ystr\"{o}m method to speed up kernel machines.
\newblock In \emph{Advances in Neural Information Processing Systems 13}. 2001.

\bibitem[Yang et~al.(2012)Yang, Li, Mahdavi, Jin, and Zhou]{yang2012nystrom}
Tianbao Yang, Yu-Feng Li, Mehrdad Mahdavi, Rong Jin, and Zhi-Hua Zhou.
\newblock Nystr{\"o}m method vs random {F}ourier features: A theoretical and
  empirical comparison.
\newblock In \emph{Advances in neural information processing systems}, pages
  476--484, 2012.

\bibitem[Zhang et~al.(2015)Zhang, Duchi, and Wainwright]{zhang2015divide}
Yuchen Zhang, John Duchi, and Martin Wainwright.
\newblock Divide and conquer kernel ridge regression: A distributed algorithm
  with minimax optimal rates.
\newblock \emph{The Journal of Machine Learning Research}, 16\penalty0
  (1):\penalty0 3299--3340, 2015.

\end{thebibliography}
}

\newpage
\appendix
\section{Bernstein Inequality}
The next lemma is the matrix Bernstein inequality, cited from \cite[Lemma 27]{avron2017random} which is a restatement of Corollary 7.3.3 in \cite{tropp2015introduction} with some fix in the typos.
\begin{lma}\citep[Bernstein inequality,][Corollary 7.3.3]{tropp2015introduction}\label{apn:matx_con}
Let $\mathbf{R}$ be a fixed $d_1 \times d_2$ matrix over the set of complex/real numbers. Suppose that $\{\mathbf{R}_1,\cdots,\mathbf{R}_n\}$ is an independent and identically distributed sample of $d_1 \times d_2$ matrices such that \[\mathbb{E}[\mathbf{R}_i] = \mathbf{R} \qquad \text{and} \qquad \|\mathbf{R}_i\|_2 \leq L,\]
where $L>0$ is a constant independent of the sample.
Furthermore, let $\mathbf{M}_1, \mathbf{M}_2$ be semidefinite upper bounds for the matrix-valued variances 
\begin{align*}
\begin{aligned}
& \mathrm{Var}_1[\mathbf{R}_i] \preceq \mathbb{E}[\mathbf{R}_i\mathbf{R}_i^{T}] \preceq \mathbf{M}_1 & \\
& \mathrm{Var}_2[\mathbf{R}_i] \preceq \mathbb{E}[\mathbf{R}_i^{T}\mathbf{R}_i]\preceq \mathbf{M}_2. &
\end{aligned}
\end{align*}
Let $m = \max(\|\mathbf{M}_1\|_2,\|\mathbf{M}_2\|_2)$ and $d =\frac{\text{Tr}(\mathbf{M}_1)+ \text{Tr}(\mathbf{M}_2)}{m}.$ 
Then, for $\epsilon \geq \sqrt{m/n}+2L/3n$, we can bound \[\bar{\mathbf{R}}_n = \frac{1}{n}\sum_{i=1}^{n}\mathbf{R}_i\] around its mean using the concentration inequality \[P(\|\bar{\mathbf{R}}_n - \mathbf{R}\|_2 \geq \epsilon) \leq 4d\exp\Bigg(\frac{-n\epsilon^2/2}{m+2L\epsilon/3}\Bigg).\]
\end{lma}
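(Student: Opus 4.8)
This lemma restates \citet[Corollary 7.3.3]{tropp2015introduction}, so rather than reproving matrix concentration from scratch the plan is to set up the dilation and centering that reduce the statement to that result, and to indicate where its constants come from. First I would pass from the rectangular matrices $\mathbf{R}_i$ to Hermitian ones via the Hermitian dilation $\mathcal{H}(\mathbf{A})$, the $(d_1+d_2)\times(d_1+d_2)$ block matrix with $\mathbf{A}$ in the upper-right block and $\mathbf{A}^{*}$ in the lower-left block; it satisfies $\|\mathcal{H}(\mathbf{A})\|_2 = \|\mathbf{A}\|_2$ and $\mathcal{H}(\mathbf{A})^{2} = \mathrm{diag}(\mathbf{A}\mathbf{A}^{*},\mathbf{A}^{*}\mathbf{A})$, which is exactly why both $\mathbf{M}_1$ and $\mathbf{M}_2$ (and both their traces) must appear in $m$ and $d$. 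Next I would center: put $\mathbf{S}_i \coloneqq \mathbf{R}_i - \mathbf{R}$, which are i.i.d.\ with $\mathbb{E}\mathbf{S}_i = 0$, $\|\mathbf{S}_i\|_2 \le 2L$ (triangle inequality together with $\|\mathbf{R}\|_2\le L$ by Jensen), and $\mathbb{E}[\mathbf{S}_i\mathbf{S}_i^{*}] = \mathrm{Var}_1[\mathbf{R}_i] \preceq \mathbf{M}_1$ (similarly for $\mathbf{M}_2$). Then $\bar{\mathbf{R}}_n - \mathbf{R} = \tfrac1n\sum_i\mathbf{S}_i$ and, setting $\mathbf{X} \coloneqq \sum_{i=1}^n \tfrac1n\mathcal{H}(\mathbf{S}_i)$, we have $\|\bar{\mathbf{R}}_n - \mathbf{R}\|_2 = \lambda_{\max}(\mathbf{X})$ with matrix variance proxy $\sum_i \mathbb{E}[(\tfrac1n\mathcal{H}(\mathbf{S}_i))^{2}] \preceq \tfrac1n\,\mathrm{diag}(\mathbf{M}_1,\mathbf{M}_2) =: \mathbf{W}$, so that $\|\mathbf{W}\|_2 = m/n$ and $\mathrm{Tr}(\mathbf{W})/\|\mathbf{W}\|_2 = d$.

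The heart of the argument is then the matrix Laplace transform method. For every $0 < \theta < 3n/(2L)$,
\[
\mathbb{P}\bigl(\lambda_{\max}(\mathbf{X})\ge\epsilon\bigr)\ \le\ e^{-\theta\epsilon}\,\mathbb{E}\,\mathrm{Tr}\,e^{\theta\mathbf{X}}\ \le\ e^{-\theta\epsilon}\,\mathrm{Tr}\exp\!\bigl(g(\theta)\,\mathbf{W}\bigr),\qquad g(\theta)=\frac{\theta^{2}/2}{1-2L\theta/(3n)},
\]
where the first step is Markov's inequality applied to $e^{\theta\lambda_{\max}(\mathbf{X})}$, and the second combines Lieb's concavity theorem --- subadditivity of the matrix cumulant generating function over independent summands --- with the Bernstein matrix moment bound $\log\mathbb{E}\,e^{\theta\mathbf{Y}}\preceq g(\theta)\,\mathbb{E}[\mathbf{Y}^{2}]$ for any centered Hermitian $\mathbf{Y}$ with $\|\mathbf{Y}\|_2\le 2L/n$, applied to $\mathbf{Y}=\tfrac1n\mathcal{H}(\mathbf{S}_i)$, followed by monotonicity of $\mathrm{Tr}\exp$ in the PSD order. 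It then remains to control $\mathrm{Tr}\exp(g(\theta)\mathbf{W})$ and to optimize $\theta$.

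The final and only delicate step is the intrinsic-dimension trace inequality. One cannot simply bound $\mathrm{Tr}\,e^{\mathbf{A}}\le(\text{intrinsic dim of }\mathbf{A})\cdot e^{\lambda_{\max}(\mathbf{A})}$, since $e^{x}\ge 1$ ruins the estimate near the origin; instead one writes $\mathrm{Tr}\,e^{\mathbf{A}}\le \mathrm{Tr}(\mathbf{A})\,\psi(\lambda_{\max}(\mathbf{A})) + (\text{a dimension-free remainder})$ with $\psi(a)=(e^{a}-1)/a$, and it is tracking this carefully that replaces the ambient dimension $d_1+d_2$ by the factor $4\,\mathrm{Tr}(\mathbf{W})/\|\mathbf{W}\|_2 = 4d$ and forces the hypothesis $\epsilon\ge\sqrt{m/n}+2L/(3n)$ --- precisely the range in which the Bernstein-optimal choice of $\theta$ is admissible and the remainder can be absorbed into the leading constant. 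Plugging that optimal $\theta$ into the scalar exponent $-\theta\epsilon+g(\theta)\,m/n$ yields the claimed bound $4d\exp\!\bigl(-n\epsilon^{2}/2 \,/\, (m+2L\epsilon/3)\bigr)$, completing the reduction to \citet[Corollary 7.3.3]{tropp2015introduction}. Everything other than this intrinsic-dimension bookkeeping --- the dilation, the centering, Lieb's theorem, and the Bernstein mgf estimate --- is entirely standard.
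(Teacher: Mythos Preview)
Your sketch is correct and in fact goes well beyond what the paper does: the paper provides no proof at all for this lemma, merely stating it as a restatement of \citet[Corollary 7.3.3]{tropp2015introduction} and citing that reference. Your outline of the Hermitian dilation, centering, matrix Laplace transform via Lieb's concavity, Bernstein mgf bound, and intrinsic-dimension bookkeeping accurately reconstructs Tropp's argument and correctly identifies where the factor $4d$ and the hypothesis $\epsilon \ge \sqrt{m/n} + 2L/(3n)$ originate.
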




\section{Proof of Lemma \ref{func_appx_opm}}\label{krr_risk_prof}
The following two lemmas are required for our proof of Lemma \ref{func_appx_opm}, presented subsequently.

\begin{lma}\label{apn:func_con}
Suppose that the assumptions from Lemma \ref{func_appx_opm} hold and let $\epsilon \geq \sqrt{\frac{m}{s}}+\frac{2L}{3s}$ with constants $m$ and $L$ (see the proof for explicit definition). If the number of features
\begin{IEEEeqnarray}{rCl}
s \geq  d_{\tilde{l}}(\frac{1}{\epsilon^2}+\frac{2}{3\epsilon})\log\frac{16d_{\mathbf{K}}^{\lambda}}{\delta}, \nonumber
\end{IEEEeqnarray}
then for all $\delta \in (0,1)$, with probability greater than $1-\delta$,
\begin{IEEEeqnarray}{rCl}
-\epsilon\mathbf{I}\preceq (\mathbf{K}+n\lambda \mathbf{I})^{-\frac{1}{2}}(\tilde{\mathbf{K}}-\mathbf{K})(\mathbf{K}+n\lambda \mathbf{I})^{-\frac{1}{2}} \preceq \epsilon\mathbf{I}. \nonumber
\end{IEEEeqnarray}
\end{lma}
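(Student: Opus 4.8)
The plan is to read the claim as a whitened operator-norm concentration statement and to invoke the matrix Bernstein inequality of Lemma~\ref{apn:matx_con}. Writing $\tilde{\mathbf{K}} = \tfrac{1}{s}\sum_{i=1}^{s}\mathbf{z}_{q,v_i}(\mathbf{x})\mathbf{z}_{q,v_i}(\mathbf{x})^{T}$, set
\begin{align*}
\mathbf{R}_i \;:=\; (\mathbf{K}+n\lambda\mathbf{I})^{-1/2}\,\mathbf{z}_{q,v_i}(\mathbf{x})\,\mathbf{z}_{q,v_i}(\mathbf{x})^{T}\,(\mathbf{K}+n\lambda\mathbf{I})^{-1/2} \;=\; w_i w_i^{T},
\qquad w_i := (\mathbf{K}+n\lambda\mathbf{I})^{-1/2}\mathbf{z}_{q,v_i}(\mathbf{x}).
\end{align*}
These are i.i.d., symmetric, rank-one and positive semidefinite; their empirical average equals $(\mathbf{K}+n\lambda\mathbf{I})^{-1/2}\tilde{\mathbf{K}}(\mathbf{K}+n\lambda\mathbf{I})^{-1/2}$, and by the identity $\mathbb{E}_{v\sim q}[\mathbf{z}_{q,v}(\mathbf{x})\mathbf{z}_{q,v}(\mathbf{x})^{T}] = \mathbf{K}$ recorded in Section~\ref{sec:rff-overview} their common mean is $\mathbf{R} := (\mathbf{K}+n\lambda\mathbf{I})^{-1/2}\mathbf{K}(\mathbf{K}+n\lambda\mathbf{I})^{-1/2}$. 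Hence $\|\tfrac{1}{s}\sum_i\mathbf{R}_i - \mathbf{R}\|_2 \le \epsilon$ is exactly the two-sided PSD sandwich to be proved, and the work reduces to supplying the three Bernstein inputs $L$, $m$, $d$.

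\textbf{Bernstein parameters.} For the uniform norm bound, $\|\mathbf{R}_i\|_2 = \|w_i\|_2^2 = \tfrac{p(v_i)}{q(v_i)}\mathbf{z}_{v_i}(\mathbf{x})^{T}(\mathbf{K}+n\lambda\mathbf{I})^{-1}\mathbf{z}_{v_i}(\mathbf{x}) = l_\lambda(v_i)/q(v_i)$ by the definition of the ridge leverage function in Eq.~(\ref{main:lev_fun}); since $q = \tilde{l}/d_{\tilde{l}}$ and $\tilde{l}\ge l_\lambda$ pointwise, this gives $\|\mathbf{R}_i\|_2 \le d_{\tilde{l}} =: L$, and this is the only place where the domination hypothesis $\tilde{l}\ge l_\lambda$ enters. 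For the variances, symmetry of $\mathbf{R}_i$ together with $\|w_i\|_2^2 = \|\mathbf{R}_i\|_2 \le L$ yields $\mathbb{E}[\mathbf{R}_i\mathbf{R}_i^{T}] = \mathbb{E}[\mathbf{R}_i^{T}\mathbf{R}_i] = \mathbb{E}[\|w_i\|_2^2\, w_i w_i^{T}] \preceq L\,\mathbb{E}[w_i w_i^{T}] = d_{\tilde{l}}\mathbf{R}$, so one may take $\mathbf{M}_1=\mathbf{M}_2=d_{\tilde{l}}\mathbf{R}$, whence $m = d_{\tilde{l}}\|\mathbf{R}\|_2$ and $d = 2\,\mathrm{Tr}(\mathbf{R})/\|\mathbf{R}\|_2$. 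Since $\mathbf{R}$ has eigenvalues $\lambda_i/(\lambda_i+n\lambda)$, we get $\mathrm{Tr}(\mathbf{R}) = \mathrm{Tr}\big(\mathbf{K}(\mathbf{K}+n\lambda\mathbf{I})^{-1}\big) = d_{\mathbf{K}}^{\lambda}$ and $\|\mathbf{R}\|_2 = \lambda_1/(\lambda_1+n\lambda)$; the assumption $0\le n\lambda\le\lambda_1$ pins $\tfrac12 \le \|\mathbf{R}\|_2 \le 1$, which gives $m \le d_{\tilde{l}}$ and $4d \le 16\,d_{\mathbf{K}}^{\lambda}$.

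\textbf{Conclusion and main obstacle.} Feeding $L = d_{\tilde{l}}$, $m\le d_{\tilde{l}}$ and $4d\le 16 d_{\mathbf{K}}^{\lambda}$ into Lemma~\ref{apn:matx_con}, for every admissible $\epsilon\ge\sqrt{m/s}+2L/3s$ the deviation $\|\tfrac1s\sum_i\mathbf{R}_i-\mathbf{R}\|_2$ exceeds $\epsilon$ with probability at most $16\,d_{\mathbf{K}}^{\lambda}\exp\!\big(-\,s\epsilon^{2}/(2 d_{\tilde{l}}(1+\tfrac{2}{3}\epsilon))\big)$, and the displayed lower bound $s\ge d_{\tilde{l}}\big(\tfrac1{\epsilon^2}+\tfrac2{3\epsilon}\big)\log\tfrac{16 d_{\mathbf{K}}^{\lambda}}{\delta}$ is exactly the threshold chosen to drive this tail below $\delta$; on the complementary event the asserted sandwich $-\epsilon\mathbf{I}\preceq(\mathbf{K}+n\lambda\mathbf{I})^{-1/2}(\tilde{\mathbf{K}}-\mathbf{K})(\mathbf{K}+n\lambda\mathbf{I})^{-1/2}\preceq\epsilon\mathbf{I}$ holds. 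The part that needs care is not any isolated estimate but the joint bookkeeping of the Bernstein parameters: one must route the leverage domination through $\|\mathbf{R}_i\|_2$ and the spectral assumption $n\lambda\le\lambda_1$ through the intrinsic-dimension term $d$, so that the prefactor comes out as $16 d_{\mathbf{K}}^{\lambda}$ and the exponent takes precisely the shape that the stated lower bound on $s$ is designed to absorb.
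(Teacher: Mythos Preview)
Your proposal is correct and follows essentially the same route as the paper: define the whitened rank-one summands $\mathbf{R}_i$, bound $\|\mathbf{R}_i\|_2$ via the leverage-score ratio $l_\lambda(v_i)/q(v_i)\le d_{\tilde l}$, bound the second moment by $d_{\tilde l}\mathbf{R}$, read off $m$ and the intrinsic dimension $d$ from the spectrum of $\mathbf{R}$, and invoke Lemma~\ref{apn:matx_con}. One small point: by bounding $m\le d_{\tilde l}$ outright you land on the tail $16d_{\mathbf K}^\lambda\exp\!\big(-s\epsilon^2/(2d_{\tilde l}(1+\tfrac{2}{3}\epsilon))\big)$, which inverts to $s\ge 2d_{\tilde l}(\tfrac1{\epsilon^2}+\tfrac{2}{3\epsilon})\log\tfrac{16d_{\mathbf K}^\lambda}{\delta}$, a factor of $2$ looser than the stated threshold; the paper recovers the missing factor by keeping $m=d_{\tilde l}d_1$ with $d_1=\lambda_1/(\lambda_1+n\lambda)\in[\tfrac12,1)$ and using $2(d_1+\tfrac{2}{3}\epsilon)\ge 1+\tfrac{2}{3}\epsilon$ inside the exponent rather than discarding $d_1$ up front.
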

\begin{proof}
Following the derivations in~\citet{avron2017random}, we utilize the matrix Bernstein concentration inequality to prove the result. More specifically, we observe that
\begin{align*}
\begin{aligned}
&(\mathbf{K}+n\lambda \mathbf{I})^{-\frac{1}{2}}\tilde{\mathbf{K}}(\mathbf{K}+n\lambda \mathbf{I})^{-\frac{1}{2}} =&\\
&\frac{1}{s}\sum_{i=1}^s(\mathbf{K}+n\lambda \mathbf{I})^{-\frac{1}{2}}\mathbf{z}_{q,v_i}(\mathbf{x})\mathbf{z}_{q,v_i}(\mathbf{x})^{T}(\mathbf{K}+n\lambda \mathbf{I})^{-\frac{1}{2}}=&\\
&\frac{1}{s}\sum_{i=1}^s\mathbf{R}_i =: \bar{\mathbf{R}}_s, &
\end{aligned}
\end{align*}
with
\begin{IEEEeqnarray}{rCl}
\mathbf{R}_i = (\mathbf{K}+n\lambda \mathbf{I})^{-\frac{1}{2}}\mathbf{z}_{q,v_i}(\mathbf{x})\mathbf{z}_{q,v_i}(\mathbf{x})^{T}(\mathbf{K}+n\lambda \mathbf{I})^{-\frac{1}{2}}.\nonumber
\end{IEEEeqnarray}
Now, observe that 
$$\mathbf{R} = \mathbb{E}[\mathbf{R}_i] =(\mathbf{K}+n\lambda \mathbf{I})^{-\frac{1}{2}}\mathbf{K}(\mathbf{K}+n\lambda \mathbf{I})^{-\frac{1}{2}}.$$ The operator norm of $\mathbf{R}_i$ is equal to
\begin{IEEEeqnarray}{rCl}
\|(\mathbf{K}+n\lambda \mathbf{I})^{-\frac{1}{2}}\mathbf{z}_{q,v_i}(\mathbf{x})\mathbf{z}_{q,v_i}(\mathbf{x})^{T}(\mathbf{K}+n\lambda \mathbf{I})^{-\frac{1}{2}}\|_2. \nonumber
\end{IEEEeqnarray}
As $\mathbf{z}_{q,v_i}(\mathbf{x})\mathbf{z}_{q,v_i}(\mathbf{x})^{T}$ is a rank one matrix, we have that the operator norm of this matrix is equal to its trace, i.e.,
\begin{align*}
\begin{aligned}
& \|\mathbf{R}_i\|_2 =&\\
&\text{Tr}((\mathbf{K}+n\lambda \mathbf{I})^{-\frac{1}{2}}\mathbf{z}_{q,v_i}(\mathbf{x})\mathbf{z}_{q,v_i}(\mathbf{x})^{T}(\mathbf{K}+n\lambda \mathbf{I})^{-\frac{1}{2}}) =&\\
&\frac{p(v_i)}{q(v_i)}\text{Tr}((\mathbf{K}+n\lambda \mathbf{I})^{-\frac{1}{2}}\mathbf{z}_{v_i}(\mathbf{x})\mathbf{z}_{v_i}(\mathbf{x})^{T}(\mathbf{K}+n\lambda \mathbf{I})^{-\frac{1}{2}}) = &\\
& \frac{p(v_i)}{q(v_i)}\text{Tr}(\mathbf{z}_{v_i}(\mathbf{x})^{T}(\mathbf{K}+n\lambda \mathbf{I})^{-1}\mathbf{z}_{v_i}(\mathbf{x})) =& \\
& \frac{l_{\lambda}(v_i)}{q(v_i)} =: L_i \quad \text{ and } \quad L_q \coloneqq \sup_i\  L_i.&
\end{aligned}
\end{align*}
Observe that $L_q= \sup_{i} L_i = \sup_{i}\frac{l_{\lambda}(v_i)}{q(v_i)} \leq \sup_i \frac{\tilde{l}(v_i)}{q(v_i)} = d_{\tilde{l}}$. On the other hand,
\begin{align*}
\begin{aligned}
& \mathbf{R}_i\mathbf{R}_i^{T} =&\\
&(\mathbf{K}+n\lambda \mathbf{I})^{-\frac{1}{2}}\mathbf{z}_{q,v_i}(\mathbf{x})\mathbf{z}_{q,v_i}(\mathbf{x})^{T}(\mathbf{K}+n\lambda \mathbf{I})^{-1}\mathbf{z}_{q,v_i}(\mathbf{x}) &\\
&\cdot\mathbf{z}_{q,v_i}(\mathbf{x})^{T}(\mathbf{K}+n\lambda \mathbf{I})^{-\frac{1}{2}} = &\\
&\frac{p(v_i)l_{\lambda}(v_i)}{q^2(v_i)}(\mathbf{K}+n\lambda \mathbf{I})^{-\frac{1}{2}}\mathbf{z}_{v_i}(\mathbf{x})\mathbf{z}_{v_i}(\mathbf{x})^{T}(\mathbf{K}+n\lambda \mathbf{I})^{-\frac{1}{2}} \preceq &\\
& \frac{\tilde{l}(v_i)}{q(v_i)}\frac{p(v_i)}{q(v_i)}(\mathbf{K}+n\lambda \mathbf{I})^{-\frac{1}{2}}\mathbf{z}_{v_i}(\mathbf{x})\mathbf{z}_{v_i}(\mathbf{x})^{T}(\mathbf{K}+n\lambda \mathbf{I})^{-\frac{1}{2}} =&\\
&d_{\tilde{l}}\frac{p(v_i)}{q(v_i)}(\mathbf{K}+n\lambda \mathbf{I})^{-\frac{1}{2}}\mathbf{z}_{v_i}(\mathbf{x})\mathbf{z}_{v_i}(\mathbf{x})^{T}(\mathbf{K}+n\lambda \mathbf{I})^{-\frac{1}{2}} . &
\end{aligned}
\end{align*}
From the latter inequality, we obtain that
\begin{IEEEeqnarray}{rCl}
\mathbb{E}[\mathbf{R}_i\mathbf{R}_i^{T}] &\preceq& d_{\tilde{l}}(\mathbf{K}+n\lambda \mathbf{I})^{-\frac{1}{2}}\mathbf{K}(\mathbf{K}+n\lambda \mathbf{I})^{-\frac{1}{2}}\nonumber  =: \mathbf{M}_1.\nonumber
\end{IEEEeqnarray}
We also have the following two equalities
\begin{IEEEeqnarray}{rCl}
m &=& \|\mathbf{M}_1\|_2 =  d_{\tilde{l}}\frac{\lambda_1}{\lambda_1+n\lambda} =:  d_{\tilde{l}}d_1\nonumber\\
d &=& \frac{2~\text{Tr}(\mathbf{M}_1)}{m} = 2\frac{\lambda_1+n\lambda}{\lambda_1}d_{\mathbf{K}}^{\lambda} = 2d_1^{-1}d_{\mathbf{K}}^{\lambda}.\nonumber
\end{IEEEeqnarray}
We are now ready to apply the matrix Bernstein concentration inequality \citep[Corollary 7.3.3]{tropp2015introduction}. More specifically, for $\epsilon \geq \sqrt{m/s}+2L/3s$ and for all $\delta \in(0,1)$, with probability $1-\delta$, we have that
\begin{IEEEeqnarray}{rCl}
\text{P}(\|\bar{\mathbf{R}}_{s}-\mathbf{R}\|_2 \geq \epsilon) &\leq& 4d \exp\left( \frac{-s\epsilon^2/2}{m+2L\epsilon/3} \right) \nonumber\\
&\leq& 8d_1^{-1}d_{\mathbf{K}}^{\lambda}\exp\left( \frac{-s\epsilon^2/2}{ d_{\tilde{l}}d_1+ d_{\tilde{l}}2\epsilon/3} \right)\nonumber\\
&\leq& 16 d_{\mathbf{K}}^{\lambda} \exp\left(\frac{-s\epsilon^2}{ d_{\tilde{l}}(1+2\epsilon/3)}\right) \leq \delta .\nonumber
\end{IEEEeqnarray}
In the third line, we have used the assumption that $n\lambda \leq \lambda_1$ and, consequently, $d_1 \in [1/2,1)$.
\end{proof}

\textbf{Remark}: We note here that the two considered sampling strategies lead to two different results. In particular, if we let $\tilde{l}(v) = l_{\lambda}(v)$ then $q(v) = l_{\lambda}(v)/d_{\mathbf{K}}^{\lambda}$, i.e., we are sampling proportional to the ridge leverage scores. Thus, the leverage weighted random Fourier features sampler requires
\begin{IEEEeqnarray}{rCl} 
s \geq d_{\mathbf{K}}^{\lambda}(\frac{1}{\epsilon^2}+\frac{2}{3\epsilon})\log\frac{16d_{\mathbf{K}}^{\lambda}}{\delta}.\label{apn:fea_opm}
\end{IEEEeqnarray}
Alternatively, we can opt for the plain random Fourier feature sampling strategy by taking $\tilde{l}(v) = z_0^2p(v)/\lambda$, with $l_{\lambda}(v)\leq z_0^2p(v)/\lambda$. Then, plain random Fourier features sampling requires
\begin{IEEEeqnarray}{rCl}
s \geq \frac{z_0^2}{\lambda}(\frac{1}{\epsilon^2}+\frac{2}{3\epsilon})\log\frac{16d_{\mathbf{K}}^{\lambda}}{\delta}.\label{apn:fea_orig}
\end{IEEEeqnarray}

Thus, the leverage weighted random Fourier features sampling scheme can dramatically change the number of features required to achieve a predefined approximation error in the operator norm.

\begin{lma}\label{apn:fun_krl}
Let $f \in \mathcal{H}$, where $\mathcal{H}$ is the reproducing kernel Hilbert space associated with a kernel $k$. Recall we have assumed that $\|f\|_{\mathcal{H}} \leq 1, \forall f$ and $\mathbf{f}_x = [f(x_1),\cdots,f(x_n)]^T$. Let $\mathbf{K}$ be the Gram-matrix of the kernel $k$ given by the provided set of instances. Then,
\begin{IEEEeqnarray}{rCl}
\mathbf{f}_x^{T}\mathbf{K}^{-1}\mathbf{f}_x \leq 1. \nonumber
\end{IEEEeqnarray}
\end{lma}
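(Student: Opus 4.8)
The plan is to use the orthogonal decomposition of $f$ with respect to the finite-dimensional subspace spanned by the kernel sections at the sample points, together with the reproducing property; the bound $\mathbf{f}_x^{T}\mathbf{K}^{-1}\mathbf{f}_x \le 1$ will in fact come out as $\mathbf{f}_x^{T}\mathbf{K}^{-1}\mathbf{f}_x \le \|f\|_{\mathcal{H}}^2$, which is at most $1$ under the standing convention that the hypothesis space is the unit ball of $\mathcal{H}$. Concretely, set $\mathcal{H}_0 \coloneqq \mathrm{span}\{k(x_1,\cdot),\dots,k(x_n,\cdot)\} \subseteq \mathcal{H}$ and write $f = f_0 + f_\perp$ with $f_0$ the orthogonal projection of $f$ onto $\mathcal{H}_0$ and $f_\perp \perp \mathcal{H}_0$.

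First I would observe that $f$ and $f_0$ agree on the sample: by the reproducing property, for each $i$, $f(x_i) = \langle f, k(x_i,\cdot)\rangle_{\mathcal{H}} = \langle f_0, k(x_i,\cdot)\rangle_{\mathcal{H}} + \langle f_\perp, k(x_i,\cdot)\rangle_{\mathcal{H}} = f_0(x_i)$, since $k(x_i,\cdot) \in \mathcal{H}_0$ and $f_\perp$ is orthogonal to $\mathcal{H}_0$. Hence $\mathbf{f}_x = [f_0(x_1),\dots,f_0(x_n)]^T$. Next, write $f_0 = \sum_{j=1}^n \alpha_j k(x_j,\cdot)$ for some $\alpha \in \mathbb{R}^n$; evaluating at the sample points gives $\mathbf{f}_x = \mathbf{K}\alpha$, so $\alpha = \mathbf{K}^{-1}\mathbf{f}_x$ (the inverse exists because the $x_i$ are pairwise distinct and $\mathbf{K}$ is assumed invertible throughout the paper). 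Then I would compute the RKHS norm of the projection using bilinearity of $\langle\cdot,\cdot\rangle_{\mathcal{H}}$ and the reproducing property: $\|f_0\|_{\mathcal{H}}^2 = \sum_{i,j}\alpha_i\alpha_j\langle k(x_i,\cdot),k(x_j,\cdot)\rangle_{\mathcal{H}} = \alpha^T \mathbf{K}\alpha = (\mathbf{K}^{-1}\mathbf{f}_x)^T\mathbf{K}(\mathbf{K}^{-1}\mathbf{f}_x) = \mathbf{f}_x^T\mathbf{K}^{-1}\mathbf{f}_x$.

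Finally, since orthogonal projection is norm non-increasing, $\|f_0\|_{\mathcal{H}}^2 \le \|f_0\|_{\mathcal{H}}^2 + \|f_\perp\|_{\mathcal{H}}^2 = \|f\|_{\mathcal{H}}^2 \le 1$, which combined with the previous display yields $\mathbf{f}_x^T\mathbf{K}^{-1}\mathbf{f}_x \le 1$. The only subtle point — and the one I would take care to state explicitly — is the invertibility of $\mathbf{K}$: it is what makes the representation $\alpha = \mathbf{K}^{-1}\mathbf{f}_x$ legitimate, and it follows from the distinctness of the $x_i$ together with (strict) positive definiteness of $k$; if one only assumes $\mathbf{K}$ positive semidefinite, the same argument goes through with the Moore--Penrose pseudoinverse in place of $\mathbf{K}^{-1}$, but since the paper uses $\mathbf{K}^{-1}$ directly this refinement is unnecessary here.
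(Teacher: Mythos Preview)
Your proof is correct, but it takes a genuinely different route from the paper's. The paper argues via the integral representation $f(x)=\int_{\mathcal{V}}g(v)z(v,x)\,d\tau(v)$ that underlies the whole random-features framework: for any $\mathbf{a}\in\mathbb{R}^n$ it writes $(\mathbf{f}_x^T\mathbf{a})^2=\big(\int g(v)\,\mathbf{z}_v(\mathbf{x})^T\mathbf{a}\,d\tau(v)\big)^2$ and applies Cauchy--Schwarz in $L_2(d\tau)$ together with $\|g\|_{L_2(d\tau)}^2\le 1$ to get $\mathbf{a}^T\mathbf{f}_x\mathbf{f}_x^T\mathbf{a}\le \mathbf{a}^T\mathbf{K}\mathbf{a}$, i.e.\ $\mathbf{f}_x\mathbf{f}_x^T\preceq\mathbf{K}$, from which $\mathbf{f}_x^T\mathbf{K}^{-1}\mathbf{f}_x\le 1$ follows. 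You instead use only intrinsic RKHS structure: project $f$ onto $\mathrm{span}\{k(x_i,\cdot)\}$, identify $\mathbf{f}_x^T\mathbf{K}^{-1}\mathbf{f}_x$ with the squared norm of that projection, and conclude by norm non-increase of orthogonal projection. Your argument is more general (it works for any RKHS, not just one admitting the feature decomposition of Eq.~(\ref{main:krl_dec})) and yields the sharper identity $\mathbf{f}_x^T\mathbf{K}^{-1}\mathbf{f}_x=\|P_{\mathcal{H}_0}f\|_{\mathcal{H}}^2$ rather than just an inequality; the paper's argument has the virtue of staying within the feature-map language used elsewhere in the analysis, but otherwise your approach is cleaner.
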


\begin{proof}
Recall that a function $f \in \mathcal{H}$ can be expressed as:
\begin{IEEEeqnarray}{rCl}
f(x) = \int_{\mathcal{V}}g(v)z(v,x)p(v)dv \qquad (\forall x \in \mathcal{X}),
\end{IEEEeqnarray}
where $g \in L_2(d\tau)$ is a real-valued function with $\|f\|_{\mathcal{H}}$ equal to the minimum of $\|g\|_{L_2(d\tau)}$, over all possible decompositions of $f$. For a vector $\mathbf{a} \in \mathbb{R}^n$, we have that
\begin{IEEEeqnarray}{rCl}
\mathbf{a}^{T}\mathbf{f}_x\mathbf{f}_x^T\mathbf{a} &=& \Big(\mathbf{f}_x^{T}\mathbf{a}\Big)^2 = \Big(\sum_{i=1}^n a_if(x_i) \Big)^2 \nonumber \\
&=& \Big(\sum_{i=1}^n a_i\int_{\mathcal{V}}g(v)z(v,x_i)d\tau(v) \Big)^2\nonumber \\
&=&\Big(\int_{\mathcal{V}}g(v)\mathbf{z}_v(\mathbf{x}) ^{T}\mathbf{a}~d\tau(v)\Big)^2\nonumber\\
&\leq& \int_{\mathcal{V}}g(v)^2 d\tau(v)\int_{\mathcal{V}}(\mathbf{z}_v(\mathbf{x}) ^T\mathbf{a})^2~d\tau(v)\nonumber\\
&=& \int_{\mathcal{V}}\mathbf{a}^{T}\mathbf{z}_v(\mathbf{x})\mathbf{z}_v(\mathbf{x})^{T}\mathbf{a}~d\tau(v)\nonumber\\
&=& \mathbf{a}^{T}\ \int_{\mathcal{V}}\mathbf{z}_v(\mathbf{x})\mathbf{z}_v(\mathbf{x})^{T}~d\tau(v)\ \mathbf{a}\nonumber\\
&=& \mathbf{a}^{T}\mathbf{K}\mathbf{a}.\nonumber
\end{IEEEeqnarray}
The third equality is due to the fact that, for all $f \in \mathcal{H}$, we have that $f(x) = \int_{\mathcal{V}}g(v)z(v,x)p(v)dv$ ($\forall x\in \mathcal{X}$) and $$\|f\|_{\mathcal{H}} = \min_{\Big\{g \ \mid\ f(x)=\int_{\mathcal{V}}g(v)z(v,x)p(v)dv \Big\}}\ \|g\|_{L_2(d\tau)}.$$ The first inequality, on the other hand, follows from the Cauchy-Schwarz inequality. The bound implies that $\mathbf{f}_x\mathbf{f}_x^{T} \preceq \mathbf{K}$ and, consequently, we derive $\mathbf{f}_x^{T}\mathbf{K}^{-1}\mathbf{f}_x \leq 1$.
\end{proof}
Now we are ready to prove Lemma \ref{func_appx_opm}.
\funcAppxOpmRe*
\begin{proof}
For any $f \in \mathcal{H}$ with $\|f\|_{\mathcal{H}} \leq 1$, we write the following optimization problem:
\begin{IEEEeqnarray}{rCl}
\frac{1}{n}\|\mathbf{f}_x- \mathbf{Z}_q\beta\|_2^2 + s\lambda\|\beta\|_2^2. \label{apn:op_goal}
\end{IEEEeqnarray}
The minimizer can be computed as:
\begin{IEEEeqnarray}{rCl}
\beta &=&\frac{1}{s}(\frac{1}{s}\mathbf{Z}_q^T\mathbf{Z}_q+ n\lambda \mathbf{I})^{-1}\mathbf{Z}_q^T\mathbf{f}_x\nonumber\\
&=&\frac{1}{s}\mathbf{Z}_q^{T}(\frac{1}{s}\mathbf{Z}_q\mathbf{Z}_q^{T}+ n\lambda \mathbf{I})^{-1}\mathbf{f}_x \nonumber \\
&=& \frac{1}{s}\mathbf{Z}_q^{T}(\tilde{\mathbf{K}}+n\lambda\mathbf{I})^{-1}\mathbf{f}_x, \nonumber 
\end{IEEEeqnarray}
where the second equality follows from the Woodbury inversion lemma.

Substituting $\beta$ into Eq.~(\ref{apn:op_goal}), we transform the first part as
\begin{IEEEeqnarray}{rCl}
\frac{1}{n}\|\mathbf{f}_x- \mathbf{Z}_q\beta\|_2^2 &=& \frac{1}{n}\|\mathbf{f}_x-\frac{1}{s}\mathbf{Z}_q\mathbf{Z}_q^{T}(\tilde{\mathbf{K}}+n\lambda\mathbf{I})^{-1}\mathbf{f}_x \|_2^2 \nonumber\\
&=&\frac{1}{n}\|\mathbf{f}_x-\tilde{\mathbf{K}}(\tilde{\mathbf{K}}+n\lambda\mathbf{I})^{-1}\mathbf{f}_x \|_2^2\nonumber\\
&=& \frac{1}{n}\|n\lambda(\tilde{\mathbf{K}}+n\lambda\mathbf{I})^{-1}\mathbf{f}_x\|_2^2\nonumber \\
&=& n\lambda^2\mathbf{f}_x^{T}(\tilde{\mathbf{K}}+n\lambda\mathbf{I})^{-2}\mathbf{f}_x.\nonumber
\end{IEEEeqnarray}
On the other hand, the second part can be transformed as
\begin{IEEEeqnarray}{rCl}
s\lambda\|\beta\|_2^2 &=& s\lambda\frac{1}{s^2}\mathbf{f}_x^{T}(\tilde{\mathbf{K}}+n\lambda\mathbf{I})^{-1}\mathbf{Z}_q\mathbf{Z}_q^{T}(\tilde{\mathbf{K}}+n\lambda\mathbf{I})^{-1}\mathbf{f}_x\nonumber\\
&=& \lambda\mathbf{f}_x^{T}(\tilde{\mathbf{K}}+n\lambda\mathbf{I})^{-1}\tilde{\mathbf{K}}(\tilde{\mathbf{K}}+n\lambda\mathbf{I})^{-1}\mathbf{f}_x\nonumber\\
&=& \lambda\mathbf{f}_x^{T}(\tilde{\mathbf{K}}+n\lambda\mathbf{I})^{-1}(\tilde{\mathbf{K}}+n\lambda\mathbf{I})(\tilde{\mathbf{K}}+n\lambda\mathbf{I})^{-1}\mathbf{f}_x - n\lambda^2\mathbf{f}_x^{T}(\tilde{\mathbf{K}}+n\lambda\mathbf{I})^{-2}\mathbf{f}_x\nonumber\\
&=&\lambda\mathbf{f}_x^{T}(\tilde{\mathbf{K}}+n\lambda\mathbf{I})^{-1}\mathbf{f}_x-n\lambda^2\mathbf{f}_x^{T}(\tilde{\mathbf{K}}+n\lambda\mathbf{I})^{-2}\mathbf{f}_x. \nonumber
\end{IEEEeqnarray}
Now, summing up the first and the second part, we deduce
\begin{align*}
\begin{aligned}
&\frac{1}{n}\|\mathbf{f}_x- \mathbf{Z}_q\beta\|_2^2 + s\lambda\|\beta\|_2^2    =\lambda\mathbf{f}_x^{T}(\tilde{\mathbf{K}}+n\lambda\mathbf{I})^{-1}\mathbf{f}_x &\\
&  =\lambda\mathbf{f}_x^{T}(\mathbf{K} + n\lambda\mathbf{I}+\tilde{\mathbf{K}}-\mathbf{K})^{-1}\mathbf{f}_x &\\
&  =\lambda\mathbf{f}_x^{T}(\mathbf{K} + n\lambda\mathbf{I})^{-\frac{1}{2}}\left(\mathbf{I}+(\mathbf{K} + n\lambda\mathbf{I})^{-\frac{1}{2}}(\tilde{\mathbf{K}}-\mathbf{K})(\mathbf{K}+n\lambda\mathbf{I})^{-\frac{1}{2}}\right)^{-1}(\mathbf{K} + n\lambda\mathbf{I})^{-\frac{1}{2}}\mathbf{f}_x. &
\end{aligned}
\end{align*}

From Lemma \ref{apn:func_con}, it follows that when
\begin{IEEEeqnarray}{rCl}
s \geq  d_{\tilde{l}}(\frac{1}{\epsilon^2}+\frac{2}{3\epsilon})\log\frac{16d_{\mathbf{K}}^{\lambda}}{\delta} \nonumber
\end{IEEEeqnarray}
then $(\mathbf{K} + n\lambda\mathbf{I})^{-\frac{1}{2}}(\tilde{\mathbf{K}}-\mathbf{K})(\mathbf{K}+n\lambda\mathbf{I})^{-\frac{1}{2}} \succeq -\epsilon \mathbf{I}$. 

We can now upper bound the objective function as follows (with $\epsilon = 1/2$):
\begin{align*}
\begin{aligned}
& \lambda\mathbf{f}_x^{T}(\tilde{\mathbf{K}}+n\lambda\mathbf{I})^{-1}\mathbf{f}_x && \leq \lambda\mathbf{f}_x^{T}(\mathbf{K} + n\lambda\mathbf{I})^{-\frac{1}{2}}(1-\epsilon)^{-1} (\mathbf{K} + n\lambda\mathbf{I})^{-\frac{1}{2}}\mathbf{f}_x  &\\
& && =(1-\epsilon)^{-1}\lambda\mathbf{f}_x^{T}(\mathbf{K}+n\lambda\mathbf{I})^{-1}\mathbf{f}_x \leq  (1-\epsilon)^{-1}\lambda\mathbf{f}_x^{T}\mathbf{K}^{-1}\mathbf{f}_x \leq 2\lambda , &
\end{aligned}
\end{align*}
where in the last inequality we have used Lemma \ref{apn:fun_krl}. Moreover, we have that 
\begin{align*}
\begin{aligned}
& s\|\beta\|_2^{2} && =  \mathbf{f}_x^{T}(\tilde{\mathbf{K}}+n\lambda\mathbf{I})^{-1}\mathbf{f}_x-n\lambda\mathbf{f}_x^{T}(\tilde{\mathbf{K}}+n\lambda\mathbf{I})^{-2}\mathbf{f}_x &\\
& && \leq \mathbf{f}_x^{T}(\tilde{\mathbf{K}}+n\lambda\mathbf{I})^{-1}\mathbf{f}_x \leq (1-\epsilon)^{-1} \mathbf{f}_x^{T}\mathbf{K}^{-1}\mathbf{f}_x \leq  2 .&
\end{aligned}
\end{align*}
Hence, the squared norm of our approximated function is bounded by $\|\tilde{f}\|_{\tilde{\mathcal{H}}}^2 \leq s\|\beta\|_2^2 \leq 2$. As such, problem (\ref{apn:op_goal}) can now be written as $\min_{\beta}(1/n)\|\mathbf{f}_{x}-\tilde{\mathbf{f}}_x\|_2^2 $ subject to $\|\tilde{f}\|_{\tilde{\mathcal{H}}}^2 \leq s\|\beta\|_2^2 \leq 2$, which is equivalent to \[\ \inf\nolimits_{\|\tilde{f}\|_{\tilde{\mathcal{H}}} \leq \sqrt{2}}\ \frac{1}{n}\|\mathbf{f}_x- \tilde{\mathbf{f}}_x\|_2^2,\] and we have shown that this can be upper bounded by $2\lambda$. Since we are approximating any $f\in \mathcal{H}$ with $\|f\|_{\mathcal{H}} \leq 1$, this can further be written as\[\sup\nolimits_{\|f\|_{\mathcal{H}}\leq 1 }\inf\nolimits_{\|\tilde{f}\|_{\tilde{\mathcal{H}}} \leq \sqrt{2}}\ \frac{1}{n}\|\mathbf{f}_x- \tilde{\mathbf{f}}_x\|_2^2 \leq 2\lambda. \] 
\end{proof}

\section {Proof of Lemma \ref{triangle_lma}} \label{apn:triangle_lma_pf}
\TriangleLma*
\begin{proof}
By definition, $\tilde{\mathbf{f}}^{\lambda}_x$ has the format as $\tilde{\mathbf{f}}^{\lambda}_x = \mathbf{Z}_q \tilde{\beta}^{\lambda}$, where $\tilde{\beta}^{\lambda} \in \mathbb{R}^s$. In addition, definition of $\tilde{f}^{\lambda}$ can be reparametrized by the following optimization problem:
\begin{IEEEeqnarray}{rCl}
\tilde{\beta}^{\lambda} := \min\nolimits_{\tilde{\beta}}\frac{1}{n}\|\hat{\mathbf{f}}^{\lambda}_x - \mathbf{Z}_q\tilde{\beta}\|^2_2 + s\lambda \|\tilde{\beta}\|. \label{eqn:KRR_proj_RFF}
\end{IEEEeqnarray}
This gives the closed-form solution of $\tilde{\beta}^{\lambda} = \frac{1}{s}\mathbf{Z}_q^{T}(\frac{1}{s}\mathbf{Z}_q\mathbf{Z}_q^{T}+n\lambda\mathbf{I})^{-1}\hat{\mathbf{f}}^{\lambda}_x$. As a result, we have \[\tilde{\mathbf{f}}^{\lambda}_x = \frac{1}{s}\mathbf{Z}_q\mathbf{Z}_q^{T}(\frac{1}{s}\mathbf{Z}_q\mathbf{Z}_q^{T}+n\lambda\mathbf{I})^{-1}\hat{\mathbf{f}}^{\lambda}_x = \tilde{\mathbf{K}}(\tilde{\mathbf{K}} + n\lambda I)^{-1}\hat{\mathbf{f}}^{\lambda}_x.\] Now recall $\hat{\mathbf{f}}^{\lambda}_x$ is the in-sample prediction of the KRR estimator $\hat{f}^{\lambda}$, so it can be written as $\hat{\mathbf{f}}^{\lambda}_x = \mathbf{K}(\mathbf{K}+ n\lambda I)^{-1}Y$. As a result, we have the following:
\begin{IEEEeqnarray}{rCl}
\frac{1}{n}\langle Y-\hat{\mathbf{f}}^{\lambda}_x, \hat{\mathbf{f}}^{\lambda}_x -\tilde{\mathbf{f}}^{\lambda}_x \rangle  &= & \frac{1}{n}\langle Y - \hat{\mathbf{f}}^{\lambda}_x, \hat{\mathbf{f}}^{\lambda}_x-\tilde{\mathbf{K}}(\tilde{\mathbf{K}} + n\lambda I)^{-1}\hat{\mathbf{f}}^{\lambda}_x \rangle \nonumber \\
& =& \frac{1}{n}\langle Y - \mathbf{K}(\mathbf{K}+ n\lambda I)^{-1}Y, (I - \tilde{\mathbf{K}}(\tilde{\mathbf{K}} + n\lambda I)^{-1}) \hat{\mathbf{f}}^{\lambda}_x \rangle\nonumber\\
& =& \frac{1}{n}Y^T(I- \mathbf{K}(\mathbf{K}+ n\lambda I)^{-1})(I - \tilde{\mathbf{K}}(\tilde{\mathbf{K}} + n\lambda I)^{-1}) \hat{\mathbf{f}}^{\lambda}_x\nonumber\\
& \leq &  \frac{1}{n}Y^T(I- \mathbf{K}(\mathbf{K}+ n\lambda I)^{-1})\hat{\mathbf{f}}^{\lambda}_x \label{eqn:rff_proj_ineq}\\
& = & \lambda Y^T (\mathbf{K}+ n\lambda I)^{-1}\hat{\mathbf{f}}^{\lambda}_x\nonumber\\
& = & \lambda Y^T(\mathbf{K}+ n\lambda I)^{-1}\mathbf{K} \mathbf{K}^{-1}\hat{\mathbf{f}}^{\lambda}_x \nonumber\\
& = & \lambda \hat{\mathbf{f}}^{\lambda T}_x  \mathbf{K}^{-1}\hat{\mathbf{f}}^{\lambda}_x \leq \lambda \nonumber
\end{IEEEeqnarray}
Note that in Eq.~(\ref{eqn:rff_proj_ineq}), we have used the fact that\[\|I - \tilde{\mathbf{K}}(\tilde{\mathbf{K}} + n\lambda I)^{-1}\|_2  \leq 1.\] For the last inequality, since $\hat{f}^{\lambda} \in \mathcal{H}$, we employ Lemma \ref{apn:fun_krl}.
\end{proof}

\section{Property of Square Loss}\label{apn:sub_root_sec}
In this section, we state the property of square loss function.
\begin{lma}\label{apn:square_loss}\citep[Section 5.2]{bartlett2005local}
Let $l$ be the squared error loss function and $\mathcal{H}$ a convex and uniformly bounded hypothesis space. Assume that for every probability distribution $P$ in a class of data-generating distributions, there is an $f_{\mathcal{H}} \in \mathcal{H}$ such that $\mathbb{E}(l_{f_{\mathcal{H}}}) = \inf_{f \in \mathcal{H}}\ \mathbb{E}(l_f)$. Then, there exists a constant $B \geq 1$ such that for all $f \in \mathcal{H}$ and for every probability distribution $P$
\begin{IEEEeqnarray}{rCl}
\mathbb{E}(f-f_{\mathcal{H}})^2 \leq B \mathbb{E}(l_f-l_{f_{\mathcal{H}}}) \label{apn:loss_inq}
\end{IEEEeqnarray} 
\end{lma}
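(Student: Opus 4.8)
The plan is to exploit the quadratic structure of the squared error loss together with the first-order optimality condition for $f^*$ over the convex set $\mathcal{H}$, and to show that in fact $B = 1$ already suffices. Writing $l_f = (f(x)-y)^2$, I would start from the pointwise identity $l_f - l_{f^*} = (f(x)-f^*(x))\bigl((f(x)-f^*(x)) + 2(f^*(x)-y)\bigr)$, which upon taking expectations gives $\mathbb{E}(l_f - l_{f^*}) = \mathbb{E}(f-f^*)^2 + 2\,\mathbb{E}\bigl[(f-f^*)(f^*-y)\bigr]$. Hence the whole lemma reduces to showing that the cross term $\mathbb{E}\bigl[(f-f^*)(f^*-y)\bigr]$ is nonnegative for every $f \in \mathcal{H}$.

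To control this cross term I would invoke the variational inequality for the minimizer $f^*$. Since $\mathcal{H}$ is convex and $f,f^* \in \mathcal{H}$, the segment $f_t := (1-t)f^* + tf$ lies in $\mathcal{H}$ for every $t \in [0,1]$, and uniform boundedness of $\mathcal{H}$ ensures the relevant second moments are finite. The map $\phi(t) := \mathcal{E}(f_t) = \mathbb{E}(f_t(x)-y)^2$ is a quadratic polynomial in $t$, namely $\phi(t) = t^2\,\mathbb{E}(f-f^*)^2 + 2t\,\mathbb{E}\bigl[(f^*-y)(f-f^*)\bigr] + \mathcal{E}(f^*)$. Because $f^*$ minimizes $\mathcal{E}$ over $\mathcal{H}$, we have $\phi(t) \geq \phi(0)$ for all $t \in [0,1]$, so the right derivative satisfies $\phi'(0^+) = 2\,\mathbb{E}\bigl[(f^*-y)(f-f^*)\bigr] \geq 0$, which is exactly the nonnegativity needed.

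Putting the two displays together gives $\mathbb{E}(f-f^*)^2 \leq \mathbb{E}(l_f-l_{f^*})$, i.e. the claim with the (admissible) constant $B = 1$; since the argument uses nothing about $P$ beyond finiteness of moments, the same $B$ works uniformly over the whole class of data-generating distributions, and in particular for the empirical measure $P_n$ as used in the main text. I do not expect a genuine obstacle here: the only subtle points are (i) using convexity of $\mathcal{H}$ so that the perturbed function $f_t$ stays in the hypothesis class and (ii) justifying that $\phi$ is a well-defined finite quadratic, for which the uniform-boundedness hypothesis is exactly what is needed. The real content of the lemma is the convexity-plus-optimality trick rather than any delicate estimate.
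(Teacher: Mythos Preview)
The paper does not supply its own proof of this lemma; it is stated as a citation of \citet[Section 5.2]{bartlett2005local} and used as a black box. Your argument is correct and is precisely the standard one from that reference: expand the quadratic, then use the first-order variational inequality at the minimizer $f^*$ over the convex set $\mathcal{H}$ to kill the cross term, yielding the inequality with $B=1$. There is nothing to compare against in the present paper, and your write-up would serve as a self-contained replacement for the bare citation.
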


\section{Proof of Lemma \ref{apn:local_kernel_risk}} \label{apn:sub_root_reg}
\LocalKernelRisk*
\begin{proof}
It is easy to see that $\mathbb{E}(l_f^2) \leq \mathbb{E}(l_f)$. Hence, we can apply Lemma \ref{apn:emp_local_rade} to function class $l_{\mathcal{H}}$ and obtain that for all $l_f \in l_\mathcal{H}$ \[\mathbb{E}(l_f) \leq \frac{D}{D-1}\mathbb{E}_nl_f + \frac{6D}{B}\hat{r}^* + \frac{c_3}{n}\log(\frac{1}{\delta}),\] as long as there is a sub-root function $\hat{\psi}_n(r)$ such that
\begin{align}
\hat{\psi}_n(r) \geq c_1 \hat{R}_n\{l_f\in star(l_{\mathcal{H}},0) \mid \mathbb{E}_nl_f^2 \leq r\} + \frac{c_2}{n}\log(\frac{1}{\delta}). \label{apn:subroot_req}
\end{align}
We have previously demonstrated that 
\begin{align}
\begin{aligned}
&c_1 \hat{R}_n\{l_f\in star(l_{\mathcal{H}},0) \mid \mathbb{E}_nl_f^2 \leq r\} + \frac{c_2}{n}\log(\frac{1}{\delta}) \\
\leq ~& 2c_1L\hat{R}_n \Bigg\{f \in \mathcal{H} \mid \mathbb{E}_nf^2 \leq e_1 r\Bigg\} + \frac{c_2}{n}\log(\frac{1}{\delta})\\
\leq ~& 2c_1L\Bigg(\frac{2}{n}\sum_{i=1}^n\min\Bigg\{e_1 r,\hat{\lambda}_i\Bigg\}\Bigg)^{1/2}+\frac{c_2}{n}\log(\frac{1}{\delta}) ~~~~(\text{by Lemma~\ref{apn:local_kernel}}).
\end{aligned}\label{apn:subroot_lh}
\end{align}
Hence, if we choose $\hat{\psi}_n(r)$ to be equal to the right hand side of Eq.~(\ref{apn:subroot_lh}), then $\hat{\psi}_n(r)$ is a sub-root function that satisfies Eq.~(\ref{apn:subroot_req}). Now, the upper bound on the fixed point $\hat{r}^*$ follows from Corollary 6.7 in~\citet{bartlett2005local}.
\end{proof}

\section{Additional Experiments with more features}

We have also added extra experiments where we use more features for the experiments that have not yet converged i.e. \texttt{KINEMATICS} and \texttt{COD-RNA}. In the below we see that only when we increase the number of features up to 1000 we are able to attain comparable performance.

\begin{figure*}[htp]
	\centering
	\begin{subfigure}{0.49\textwidth}
		\centering
		\includegraphics[width=7.cm,height=5.5cm]{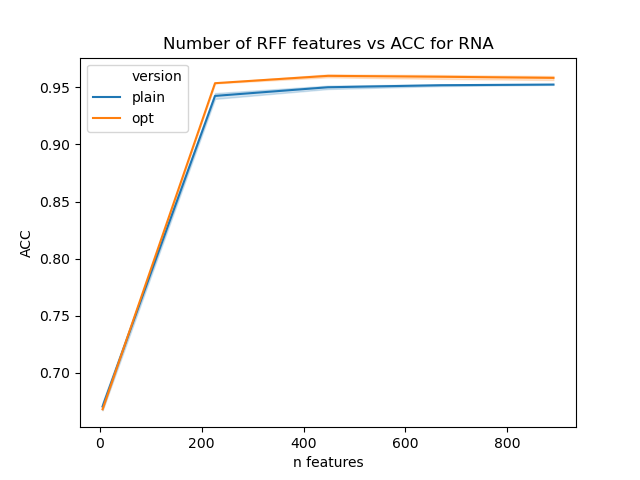}
	\end{subfigure}
	\centering
	\begin{subfigure}{0.49\textwidth}
		\centering
		\includegraphics[width=7.cm,height=5.5cm]{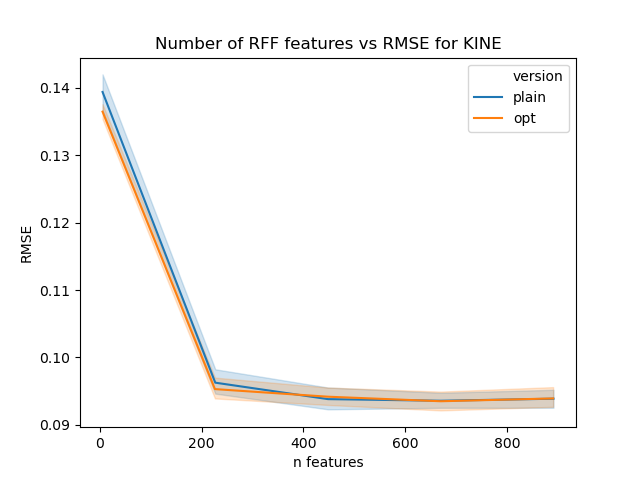}
	\end{subfigure}
\end{figure*}

\clearpage
\onecolumn

\end{document}